\def\var{\mathop{\rm Var}\nolimits}%
\newcommand{\gv}{{\bf g}}
\newcommand{\uv}{{\bf u}}
\newcommand{\vv}{{\bf v}}
\newcommand{\xv}{{\bf x}}
\newcommand{\yv}{{\bf y}}
\newcommand{\Av}{{\bf A}}
\newcommand{\Bv}{{\bf B}}
\newcommand{\Gv}{{\bf G}}
\newcommand{\Hv}{{\bf H}}
\newcommand{\Uv}{{\bf U}}
\newcommand{\Vv}{{\bf V}}
\newcommand{\Xv}{{\bf X}}
\newcommand{\Yv}{{\bf Y}}
\def\textiid{i.i.d.\@\xspace}
\newcommand\iid{\ifmmode\text{ i.i.d. } \else \textiid \fi}
\newcommand{\beqs}{\begin{equation*}}
\newcommand{\eeqs}{\end{equation*}}
\newcommand{\beq}{\begin{equation}}
\newcommand{\eeq}{\end{equation}}
\DeclareMathOperator{\Tr}{\text{\rm{Tr}}}
\DeclareMathOperator*{\argmin}{arg\,min}
\theoremstyle{plain}
\newtheorem{theorem}{Theorem}
\newtheorem{assumption}{Assumption}
\newtheorem{lemma}{Lemma}
\newtheorem{definition}{Definition}
\newtheorem{remark}{Remark}
\newtheorem{example}{Example}
\title{Stochastic Second-Order Methods Improve Best-Known Sample Complexity of SGD for Gradient-Dominated Functions}
\author{%
  Saeed Masiha\thanks{equal contribution} \\
  College of Management of Technology\\
  EPFL, Lausanne, Switzerland\\
  \texttt{mohammadsaeed.masiha@epfl.ch} \\
  \And
  Saber Salehkaleybar$^*$ \\
  School of Computer and Communication Sciences\\
  EPFL, Lausanne, Switzerland\\
  \texttt{saber.salehkaleybar@epfl.ch}
  \And
  Niao He\\
  Department of Computer Science\\
  ETH, Zurich, Switzerland\\
  \texttt{niao.he@inf.ethz.ch}
  \And
  Negar Kiyavash\\
  College of Management of Technology\\
  EPFL, Lausanne, Switzerland\\
  \texttt{negar.kiyavash@epfl.ch}
  \And
  Patrick Thiran\\
  School of Computer and Communication Sciences\\
  EPFL, Lausanne, Switzerland\\
  \texttt{patrick.thiran@epfl.ch}
}
\begin{document}

\vspace{-50mm}
\maketitle

\vspace{-5mm}
\begin{abstract}

We study the performance of Stochastic Cubic Regularized Newton (SCRN) on a class of functions satisfying gradient dominance property with $1\le\alpha\le2$ which holds in a wide range of applications in machine learning and signal processing. This condition ensures that any first-order stationary point is a global optimum. We prove that the total sample complexity of SCRN in achieving $\epsilon$-global optimum is $\mathcal{O}(\epsilon^{-7/(2\alpha)+1})$ for $1\le\alpha< 3/2$ and $\mathcal{\tilde{O}}(\epsilon^{-2/(\alpha)})$ for $3/2\le\alpha\le 2$. SCRN improves the best-known sample complexity of stochastic gradient descent. Even under a weak version of gradient dominance property, which is applicable to policy-based reinforcement learning (RL), SCRN  achieves the same improvement over stochastic policy gradient methods. Additionally, we show that the average sample complexity of SCRN can be reduced to ${\mathcal{O}}(\epsilon^{-2})$ for $\alpha=1$ using a variance reduction method with time-varying batch sizes. Experimental results in various RL settings showcase the remarkable performance of SCRN compared to first-order methods.

\end{abstract}
\vspace{-3mm}
\section{Introduction}
\vspace{-1mm}
Consider the following unconstrained stochastic non-convex optimization problem:
\begin{align}
    \min_{\xv\in\mathbb{R}^{d}}F(\xv):=\mathbb{E}_{\xi}[f(\xv,\xi)]\label{problem0},
\end{align}
where the random variable $\xi$ is sampled from an underlying distribution $P_{\xi}$. In order to optimize the objective function $F(\xv)$, we have access to the first and second derivatives of stochastic function $f(\xv,\xi)$. The above optimization problem covers a wide range of problems, from the offline setting where the objective function is minimized over a fixed number of samples, to the online setting where the samples are drawn sequentially. 

In the deterministic case (where we have access to the derivatives of $F(\xv)$), the gradient descent (GD) algorithm in the non-convex setting only guarantees convergence to a first-order stationary point (FOSP) (i.e., a point $\xv$ such that $\|\nabla F(\xv)\|=0$), which can be a local minimum, a local maximum, or a saddle point. 
In contrast, second-order methods accessing the Hessian of $F$ (or Hessian of $f(\xv,\xi)$ in the stochastic setting) can exploit the curvature information to effectively escape saddle points and converge to a 
second-order stationary point (SOSP) (i.e., such that $\|\nabla F(\xv)\|=0$, $\nabla^{2}F(\xv)\succeq 0$). In their seminal work, Nesterov and Polyak \cite{nesterov2006cubic} proposed the so-called cubic-regularized Newton (CRN) algorithm which exploits Hessian information and globally converges to an SOSP at a sub-linear rate of $\mathcal{O}(1/k^{2/3})$, where $k$ is the number of iterations.

In recent years, the performance of stochastic CRN (SCRN) for general non-convex functions has been the focus of several studies (for more details, see the related work in Section~\ref{related_work}). A variance-reduced version of SCRN \cite{arjevani2020second} can find $(\epsilon,\gamma)$-SOSP (i.e., a point $\xv$ such that $\|\nabla F(\xv)\|\le \epsilon$ and $\nabla^2F(\xv)\succeq -\gamma I$) with the sample complexity of $\tilde{\mathcal{O}}(\epsilon^{-3})$. Moreover, this rate is optimal for achieving $\epsilon$-approximate FOSP (i.e., a point $\xv$ such that $\|\nabla F(\xv)\|\le \epsilon$) and it cannot be improved using any stochastic $p$-th order methods for $p\geq 2$ \cite{arjevani2020second}.

Nesterov and Polyak \citep{nesterov2006cubic} studied CRN under the gradient dominance property (See Assumption \ref{assump:4}). Under Assumption \ref{assump:4} with $\alpha=2$, they showed that iterates of $F(\xv_{t})-\min_{\xv}F(\xv)$ converges to zero super-linearly.
The case of $\alpha=2$ (commonly called Polyak-Łojasiewicz (PL) condition) was originally introduced by Polyak in \citep{polyak1963gradient}, who showed that GD achieves linear convergence rate. The gradient dominance property or its weak variations are satisfied in a quite wide range of machine learning applications such as neural networks with one hidden layer \citep{li2017convergence} or ResNet with linear activation \citep{hardt2016identity} (for more details, see Section \ref{related_work}). A particularly important application of the weak version of gradient dominance property with $\alpha=1$ (Assumption \ref{relaxed weak PL}) is in policy-based reinforcement learning (RL) \citep{yuan2021general}. 


Khaled et al. \cite{khaled2020better} showed that under PL condition, the stochastic GD (SGD) with time-varying step-size returns a point $\hat{\xv}$ with a sample complexity of $\mathcal{O}(1/\epsilon)$, to reach $\mathbb{E}[F(\hat{\xv})]-\min_{\xv}F(\xv)\le \epsilon$.
Furthermore, the dependency of the sample complexity of SGD on $\epsilon$ is optimal \citep{nguyen2019tight}. Recently, Fontaine et al. \citep{fontaine2021convergence} obtained a sample complexity of $\mathcal{O}(\epsilon^{-\frac{4}{\alpha}+1})$ for SGD under gradient dominance property  with $1\le \alpha\le 2$. 
This shows that the worst sample complexity occurs for $\alpha=1$, which is precisely the value of $\alpha$ that finds important applications in policy-based RL. 
Indeed, under a weak version of gradient dominance property with $\alpha=1$,
it has been shown that stochastic policy gradient (SPG) converges to the optimum point with a sample complexity of $\tilde{\mathcal{O}}(\epsilon^{-3})$ \citep{yuan2021general,ding2021global}. 
We know that in the deterministic case, CRN outperforms GD under gradient dominance property for all $\alpha\in[1,2]$ \citep{nesterov2006cubic,zhou2018convergence}\footnote{In particular, for $\alpha\in [1,3/2)$, the number of iterations of CRN is $\mathcal{O}(1/\epsilon^{3/(2\alpha)-1})$, for $\alpha=3/2$ is $\mathcal{O}(\log(1/\epsilon))$, and for $\alpha\in (3/2,2]$, the number of iterations is $\mathcal{O}(\log\log(1/\epsilon))$. For $\alpha\in [1,2)$, the number of iterations of GD is $\mathcal{O}(1/\epsilon^{2/\alpha-1})$ and for $\alpha=2$, is $\mathcal{O}(\log(1/\epsilon))$ \citep{zhou2018convergence}.}. Therefore, a natural question that arises is whether this holds true in the stochastic setting as well? That is, does SCRN improve upon SGD under the gradient dominance property?

Herein, we address this question. Specifically, our main contributions are as follows:
\begin{itemize}
\vspace{-1.5mm}
    \item We analyze the sample complexity of SCRN under gradient dominance property for $1\leq \alpha \leq 2$ in order to return an $\epsilon$-global stationary point $\hat{\xv}$ satisfying $F(\hat{\xv})-\min_{\xv}F(\xv)\le \epsilon$ (in expectation or with high probability).
     As stated in Table \ref{Table_comparison_SGD_SCRN}, SCRN improves upon the best-known sample complexity of SGD for all $1\leq \alpha <2$. The largest improvement is for $\alpha=1$, and is ${\mathcal{O}}({\epsilon^{-0.5}})$.
    \item In the setting of policy-based RL, under the weak version of gradient dominance property with $\alpha=1$ (Assumption \ref{relaxed weak PL}), we show that SCRN achieves a sample complexity of $\tilde{\mathcal{O}}(\epsilon^{-2.5})$, improving over the best-known sample complexity of SPG by a factor of $\tilde{\mathcal{O}}(\epsilon^{-0.5})$. 


    \item We show that an adaptation of a variance-reduced SCRN \cite{zhou2020stochastic} with time-varying batch sizes further improves the sample complexity of SCRN.
    For $\alpha=1$, the average sample complexity is reduced to ${\mathcal{O}}(\epsilon^{-2})$.
\end{itemize}


\begin{table}[]
\vspace{-1.5mm}
\addtolength{\tabcolsep}{-1pt}
    \caption{Comparison of sample complexities of SGD on Lipschitz gradient functions and SCRN on Lipschitz Hessian functions to achieve $\epsilon$-global stationary point under gradient dominance property with $\alpha\in[1,2]$. The last column indicates the improvement of SCRN with respect to SGD.}
  \begin{center}
  \renewcommand{\arraystretch}{2}\small
   \begin{tabular}{|c|c|c|c|}
 \hline
 $\alpha$ &SGD \cite{fontaine2021convergence} & SCRN (Ours) & Improvement\\
 \hline
$[1,\frac{3}{2})$&   $\mathcal{O}({\epsilon^{-4/\alpha+1}})$ &$\mathcal{O}(\epsilon^{-7/(2\alpha)+1})$&$\mathcal{O}({\epsilon^{-1/(2\alpha)}})$\\[0.1em]
 \hline 
$\frac{3}{2}$& $\mathcal{O}({\epsilon^{-{4}/{\alpha}+1}})=\mathcal{O}({\epsilon^{-{5}/{3}}})$ &$\tilde{\mathcal{O}}({\epsilon^{-{7}/{(2\alpha)}+1}})=\tilde{\mathcal{O}}({\epsilon^{-{4}/{3}}})$&$\tilde{\mathcal{O}}({\epsilon^{-{1}/{3}}})$\\[0.1em]
\hline
$(\frac{3}{2},2)$& $\mathcal{O}({\epsilon^{-{4}/{\alpha}+1}})$  &$\tilde{\mathcal{O}}({\epsilon^{-2/\alpha}})$ w.h.p& $\tilde{\mathcal{O}}({\epsilon^{-{2}/{\alpha}+1}})$\\[0.1em]
\hline
$2$  & $\mathcal{O}({\epsilon^{-1}})$ & $\tilde{\mathcal{O}}({\epsilon^{-{2}/{\alpha}}})=\tilde{\mathcal{O}}({\epsilon^{-1}})$ w.h.p&--\\[0.1em]
 \hline
\end{tabular}
\end{center}

    \label{Table_comparison_SGD_SCRN}
    \vspace{-5mm}
\end{table}


\vspace{-3mm}
\subsection{Related work}\label{related_work}
\vspace{-1.5mm}
\textbf{Gradient dominance property and its applications:}
The gradient dominance property with $\alpha=2$ (commonly called PL condition) was originally introduced by Polyak in \citep{polyak1963gradient}.
It was shown by Karimi et al. \citep{karimi2016linear} to be weaker than the most recent global optimality conditions that appeared in the literature of machine learning~\cite{liu2014asynchronous, necoara2019linear,zhang2013linear}.
 The gradient dominance property is also satisfied (sometimes locally rather than globally, and also under distributional assumptions) for the population risk in some learning models including neural networks with one hidden layer \citep{li2017convergence}, ResNet with linear activation \citep{hardt2016identity}, and generalized linear model and robust regression \citep{foster2018uniform}.
Moreover, in policy-based reinforcement learning (RL), a weak version of gradient dominance property with $\alpha=1$ (see Assumption \ref{relaxed weak PL}) holds for some classes of policies (such as Gaussian policy and log-linear policy).

\textbf{Variants of cubic regularized Newton method:}
For non-convex optimization, Nesterov and Polyak \cite{nesterov2006cubic} proposed the CRN algorithm, which converges to a SOSP with the convergence rate of $\mathcal{O}(1/k^{2/3})$ (where $k$ is the number of iterations) by solving a cubic sub-problem in each iteration. 
Cartis et al. \cite{cartis2011adaptive} presented an adaptive framework for cubic regularization method.
~In \cite{kohler2017sub,xu2020second}, sub-sampled versions of gradient and Hessian were used in CRN to overcome the computational burden of Hessian matrix evaluations in high dimensional settings. In the context of stochastic optimization, Tripuraneni et al. \cite{tripuraneni2018stochastic} proposed a stochastic cubic regularization algorithm that obtains $\epsilon$-SOSP with sample complexity of $\tilde{\mathcal{O}}(\epsilon^{-3.5})$.
Arjavani et al. \cite{arjevani2020second} improved the sample complexity to $\tilde{\mathcal{O
}}(\epsilon^{-3})$ using variance reduction.
In the convex setting, Song et al. \cite{song2019inexact} presented a proximal CRN and its accelerated version, and proved a sample complexity of $\tilde{\mathcal{O}}(\epsilon^{-2})$ to reach $\epsilon$-global stationary point as long as the approximated Hessian in each iteration satisfies certain properties.
In finite-sum non-convex setting, 
Zhou et al. \cite{zhou2018stochastic} proposed an adaptive sub-sampled CRN method that requires $\tilde{\mathcal{O}}(N+N^{4/5}\epsilon^{-3/2})$ to find $\epsilon$-SOSP, where $N$ is the total number of samples. Sample complexity was further reduced to
  $\tilde{\mathcal{O}}(N+N^{2/3}\epsilon^{-3/2})$ using various variance reduction methods \cite{wang2019stochastic,zhang2022adaptive,zhou2019stochastic}. To the best of our knowledge, no previous work on analyzing SCRN for gradient-dominant functions exists.

\vspace{-2mm}
\subsection{Notations}
\vspace{-1mm}
We adopt the following notation in the sequel. Calligraphic letters (e.g., $\mathcal{S}$) denote spaces. Upper-case bold letters (e.g., $\Av$) denote matrices, and the lower-case bold letters (e.g., $\xv$) denote vectors. $\|\cdot\|$ denote the $\ell_{2}$-norm for vectors and the operator norm for matrices ($\|\Av\|:=\lambda_{\max}(\Av^{T}\Av)$ where $\lambda_{\max}(\Xv) $ is the maximum eigenvalue of matrix $\Xv$), respectively.  $\Av\succeq\Bv$ indicates that $\Av-\Bv$ is positive semi-definite. We use the notation $\mathcal{O}$ to hide constants, and the notation $\tilde{\mathcal{O}}$ to hide both constants and logarithmic factors. $X\le_{1-\delta}Y$ denotes that random variable $X$ is less than or equal to random variable $Y$ with the probability  at least $1-\delta$. 
\vspace{-2.5mm}
\section{Setup}
\vspace{-2mm}

Recall the stochastic non-convex optimization problem in \eqref{problem0}, where the goal is to minimize the objective function $F(\xv)$ having access to stochastic gradients $\nabla f(\xv,\xi)$ and stochastic Hessian matrices $\nabla^2 f(\xv,\xi)$. We make the following assumption about the objective function in \eqref{problem0}.
\begin{assumption}\label{assump1}
        The Hessian of $F$ is Lipschitz continuous with constant $L_2$, i.e.,
    \begin{align}
       \|\nabla^{2} F(\xv)-\nabla^{2} F(\yv)\|\le L_{2}\|\xv-\yv\|_{2},\quad \forall \xv ,\yv\in \mathbb{R}^d.
    \end{align}
\end{assumption}

Consider the empirical estimators ~$\gv_{t}:= \frac{1}{n_{1}}\sum_{i=1}^{n_{1}}\nabla f(\xv_{t},\xi_{i}),$ and $~\Hv_{t} := \frac{1}{n_{2}}\sum_{i=1}^{n_{2}}\nabla^{2} f(\xv_{t},\xi_{i})$ where $n_1$ and $n_2$ are the numbers of samples used for estimating the gradient vector and Hessian matrix, respectively.

\begin{algorithm}

\caption{Stochastic cubic regularized Newton method with stopping criterion}\label{alg:cap}
\textbf{Input:} Batch sizes $n_{1}$, $n_{2}$, initial point $\xv_{0}$, accuracy $\epsilon$, cubic penalty parameter $M$, maximum number of iterations $T$
\begin{algorithmic}[1]
\State $t\gets 1$
\State $\|{\bf\Delta}_{0}\|=\infty$
\While{$\|{\bf\Delta}_{t-1}\|\ge \sqrt[2\alpha]{\epsilon}$ or $t\le T$}
\State $\gv_{t} \gets \frac{1}{n_{1}}\sum_{i=1}^{n_{1}}\nabla f(\xv_{t},\xi_{i})$
\State $\Hv_{t} \gets \frac{1}{n_{2}}\sum_{i=1}^{n_{2}}\nabla^{2} f(\xv_{t},\xi_{i})$
\State ${\bf\Delta}_{t} \gets \argmin_{{\bf\Delta}\in\mathbb{R}^{d}}\langle \gv_{t}, {\bf\Delta}\rangle+\frac{1}{2}\langle {\bf\Delta}, \Hv_{t}{\bf\Delta}\rangle+\frac{M}{6}\|{\bf\Delta}\|^{3}$
\State $\xv_{t+1}\gets \xv_{t}+{\bf \Delta}_{t}$
\State $t \gets t+1$
\EndWhile
\State\Return{$\xv_{t}$}
\end{algorithmic}
\label{algorithm}
\vspace{-1mm}
\end{algorithm}
\vspace{-1mm}
\begin{definition}[Total sample complexity]
Given $\epsilon,\delta>0$, the total sample complexity is the number of calls (queries) of stochastic gradient and stochastic Hessian along the iterations until reaching a point $\xv$ that satisfies one of the following: (1) for high probability analysis: $F(\xv)-F(\xv^*)\leq \epsilon$ with probability at least $1-\delta$; or (2) for  analysis in expectation: $\mathbb{E}[F(\xv)]-F(\xv^*)\leq \epsilon$.
\end{definition}

Our goal is to study the performance of stochastic cubic regularized Newton (SCRN) for objective functions that satisfy the gradient dominance property. Algorithm \ref{alg:cap}  describes the steps in SCRN. At each iteration $t$, we take batches of stochastic gradient vectors and Hessian matrices (lines 4 and 5) and then solve the following sub-problem to obtain $\mathbf{\Delta}_t$ (line 6):  
\begin{align}\label{sub-problem}
  \min_{{\bf\Delta}\in\mathbb{R}^{d}}m_{t}({\bf\Delta}):=\langle \gv_{t}, {\bf\Delta}\rangle+\frac{1}{2}\langle {\bf\Delta}, \Hv_{t}{\bf\Delta}\rangle+\frac{M}{6}\|{\bf\Delta}\|^{3}.
\end{align}

We assume that there is an oracle that returns a global solution for this sub-problem (This assumption will be relaxed subsequently, see Remark \ref{inexact_subsolver}). Finally, we update $\xv_t$ in line 7. 



\vspace{-2mm}
\section{SCRN under gradient dominance property}
\vspace{-1mm}
We shall study the performance of SCRN for functions satisfying gradient dominance property, defined as follows.

\begin{assumption}
Function $F(\xv)$ satisfies gradient dominance property when for every $\xv\in\mathbb{R}^{d}$,
\begin{equation}
    F(\xv)-F(\xv^*)\leq \tau_{F} \|\nabla F(\xv)\|^{\alpha},
\end{equation}
where $\xv^{*}\in \argmin_{\xv}F(\xv)$, $\tau_F>0$, and $\alpha\in [1,2]$ are two constants.
\label{assump:4}
\end{assumption}

The case $\alpha=2$ is often referred to as PL condition \citep{polyak1963gradient, karimi2016linear}. In this paper, we consider all $\alpha$'s in the interval $[1,2]$.  
The gradient dominance property holds for a large class of functions including sub-analytic functions,
logarithm, and exponential functions, and semi-algebraic functions. These function classes cover some of the
most common non-convex objectives used in practice (see related work in Section~\ref{related_work}).

In the following lemma, we present a recursion inequality that captures the behaviour of the function  $F(\xv_{t})-F(\xv^{*})$ at each iteration~$t$ for SCRN under gradient dominance property.
\begin{lemma}\label{lemma0001}
Assume that function $F$ satisfies Assumption~\ref{assump1} (Lipschitz Hessian) and Assumption~\ref{assump:4} (gradient dominance property) for $\alpha\geq 1$. Then the resulting update $\xv_{t+1}$ in Algorithm~\ref{algorithm} (line 7) after plugging in ${{\bf\Delta}_{t}}$, the solution of sub-problem in \eqref{sub-problem}, satisfies the following:
\begin{align}
&F(\xv_{t+1})-F(\xv^*)\leq \nonumber\\
&\qquad C(F(\xv_t)-F(\xv_{t+1}))^{2\alpha/3}+C_g\|\nabla F(\xv_t)-\gv_{t}\|^\alpha+C_{H}\|\nabla^2F(\xv_t)-\Hv_{t}\|^{2\alpha},\label{eq_recursion_ineq}
\end{align}
where $C,C_g,C_{H}>0$ are constants depending on $M,L_2,$ and $\tau_F$, and defined in \eqref{eq20}.
\end{lemma}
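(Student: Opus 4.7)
The plan is to follow the chain $F(\xv_{t+1}) - F(\xv^*) \;\lesssim\; \|\nabla F(\xv_{t+1})\|^\alpha \;\lesssim\; \|{\bf\Delta}_{t}\|^{2\alpha} + \text{noise}^\alpha \;\lesssim\; (F(\xv_t)-F(\xv_{t+1}))^{2\alpha/3} + \text{noise}$. The first inequality is just Assumption~\ref{assump:4} applied at $\xv_{t+1}$. The remaining two steps rely on the first- and second-order optimality conditions of the cubic sub-problem in \eqref{sub-problem}, combined with the Lipschitz-Hessian bound of Assumption~\ref{assump1}.

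For the gradient bound, I would use the first-order optimality condition $\gv_t + \Hv_t{\bf\Delta}_{t} + (M/2)\|{\bf\Delta}_{t}\|{\bf\Delta}_{t} = 0$ to decompose
\begin{align*}
\nabla F(\xv_{t+1}) = \bigl[\nabla F(\xv_{t+1}) - \nabla F(\xv_t) - \nabla^2 F(\xv_t){\bf\Delta}_{t}\bigr] + \bigl[\nabla F(\xv_t)-\gv_t\bigr] + \bigl[\nabla^2 F(\xv_t)-\Hv_t\bigr]{\bf\Delta}_{t} - \tfrac{M}{2}\|{\bf\Delta}_{t}\|{\bf\Delta}_{t}.
\end{align*}
The first bracket is $O(L_2\|{\bf\Delta}_{t}\|^2)$ by Assumption~\ref{assump1}. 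Taking norms, applying Young's inequality to split the cross term $\|\nabla^2F(\xv_t)-\Hv_t\|\|{\bf\Delta}_{t}\|$ into terms in $\|\nabla^2F(\xv_t)-\Hv_t\|^2$ and $\|{\bf\Delta}_{t}\|^2$, and then using $(a+b+c)^\alpha \le 3^{\alpha-1}(a^\alpha+b^\alpha+c^\alpha)$ yields
\begin{align*}
\|\nabla F(\xv_{t+1})\|^\alpha \;\le\; A_1\|{\bf\Delta}_{t}\|^{2\alpha} + A_g\|\nabla F(\xv_t)-\gv_t\|^\alpha + A_H\|\nabla^2F(\xv_t)-\Hv_t\|^{2\alpha},
\end{align*}
which already has the noise terms in the exact form of \eqref{eq_recursion_ineq}.

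It remains to bound $\|{\bf\Delta}_{t}\|^{2\alpha}$ by the function decrease. Starting from the Lipschitz-Hessian upper bound $F(\xv_{t+1}) \le F(\xv_t) + \langle \nabla F(\xv_t),{\bf\Delta}_{t}\rangle + \tfrac12\langle {\bf\Delta}_{t},\nabla^2F(\xv_t){\bf\Delta}_{t}\rangle + \tfrac{L_2}{6}\|{\bf\Delta}_{t}\|^3$, I would substitute the stochastic surrogates $\gv_t,\Hv_t$ and invoke the Nesterov--Polyak identity $m_t({\bf\Delta}_{t}) \le -\tfrac{M}{12}\|{\bf\Delta}_{t}\|^3$, which follows from the first-order optimality and the semidefiniteness $\Hv_t + \tfrac{M}{2}\|{\bf\Delta}_{t}\|I\succeq 0$ provided $M\ge L_2$. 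Applying Young's inequality with conjugate exponents $(3/2,3)$ to $\|\nabla F(\xv_t)-\gv_t\|\|{\bf\Delta}_{t}\|$ and $(3,3/2)$ to $\|\nabla^2F(\xv_t)-\Hv_t\|\|{\bf\Delta}_{t}\|^2$, with the Young's weights tuned so that each $\|{\bf\Delta}_{t}\|^3$ residue stays strictly below $M/12$, gives
\begin{align*}
\|{\bf\Delta}_{t}\|^3 \;\le\; C'(F(\xv_t)-F(\xv_{t+1})) + C_g'\|\nabla F(\xv_t)-\gv_t\|^{3/2} + C_H'\|\nabla^2F(\xv_t)-\Hv_t\|^3.
\end{align*}
Raising to the power $2\alpha/3$ (using subadditivity when $\alpha\le 3/2$ and convexity when $\alpha>3/2$) converts these three exponents into exactly $2\alpha/3$, $\alpha$, and $2\alpha$. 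Substituting into the previous display and combining constants produces \eqref{eq_recursion_ineq}.

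The main obstacle is the Young's-inequality bookkeeping in the descent step: the exponents $3/2$ and $3$ on the two noise quantities are forced by the cubic regularization structure (so that after the $2\alpha/3$ power they land on $\alpha$ and $2\alpha$), which rules out the more natural conjugate pair $(2,2)$; moreover, the Young's constants must be calibrated against $M/12$ to leave a strictly positive coefficient in front of $\|{\bf\Delta}_{t}\|^3$ from which the function decrease can be extracted.
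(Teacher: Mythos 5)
Your proposal is correct and follows essentially the same route as the paper's proof (Lemma~\ref{lemma:two_ineq}): gradient dominance at $\xv_{t+1}$, the first-order optimality condition of the cubic sub-problem plus Lipschitz Hessian to bound $\|\nabla F(\xv_{t+1})\|$ by $\|{\bf\Delta}_t\|^2$ and the noise, the Nesterov--Polyak identities to bound $\|{\bf\Delta}_t\|^3$ by the function decrease, and the same $(3/2,3)$ and $(3,3/2)$ Young pairings followed by the $2\alpha/3$ power. No gaps.
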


Due to space limitations, all proofs are moved to the appendix.

In the following, we first provide  an analysis in expectation of SCRN under gradient dominance property with $\alpha\in[1,3/2]$. Next, we study the same algorithm for $\alpha\in(3/2,2]$ using a high probability analysis.
\vspace{-1.5mm}
\subsection{SCRN under gradient dominance property with $\alpha\in [1,3/2]$}\label{subsec_SCRN_PL_alpha_(1,3/2)}
We make the following assumption on the stochastic gradients and Hessians.

\begin{assumption}\label{assump2}
         For a given $\alpha\in[1,3/2]$ and for each query point
$\xv\in\mathbb{R}^{d}$:
\begin{align}
    &\mathbb{E}[\nabla f(\xv,\xi)]=\nabla F(\xv),\quad\mathbb{E}[\|\nabla f(\xv,\xi)-\nabla F(\xv)\|^{2}_{2}]\le \sigma^{2}_{1},\\
    &\mathbb{E}[\nabla^{2} f(\xv,\xi)]=\nabla^{2}F(\xv),\quad \mathbb{E}[\|\nabla^{2}f(\xv,\xi)-\nabla^{2}F(\xv)\|^{2\alpha}]\le \sigma_{2,\alpha}^{2},\label{Hessian_bounded}
\end{align}
where
$\sigma_1$ and $\sigma_{2,\alpha}$ are two constants. 

\end{assumption}

\begin{remark}
The assumption $ \mathbb{E}[\|\nabla^{2}f(\xv,\xi)-\nabla^{2}F(\xv)\|^{2\alpha}]\le \sigma_{2,\alpha}^{2}$ for $1\le \alpha\le 3/2$ is slightly stronger than the usual assumption $ \mathbb{E}[\|\nabla^{2}f(\xv,\xi)-\nabla^{2}F(\xv)\|^{2}]\le \sigma_{2}^{2}$. We need this assumption because of the specific form of the error of Hessian estimator in recursion inequality in \eqref{eq_recursion_ineq}. As a result of the assumption in~\eqref{Hessian_bounded}, using a version of matrix moment inequality (See Lemma \ref{lemma_norm_op_summation_of_iid_matrix} in Appendix), we show that the dependency of the Hessian sample complexity in Theorem~\ref{th1_expectation} on dimension $d$ is in the order of $\log d$. 
\end{remark}

\begin{theorem}\label{th1_expectation}
    Let $F(\xv)$ satisfy Assumptions \ref{assump1} and \ref{assump:4} for a given $\alpha$ and the stochastic gradient and Hessian  satisfy Assumption \ref{assump2} for the same $\alpha$. Moreover, assume that an exact solver for sub-problem \eqref{sub-problem} exists. Then Algorithm \ref{algorithm} outputs a point $\xv_{T}$ such that $\mathbb{E}[F(\xv_{T})]-F(\xv^{*})\le \epsilon$ after $T$ iterations, where
    \begin{enumerate}[(i)]
    \vspace{-1.5mm}
        \item if $\alpha\in[1,3/2)$,
   $T=\mathcal{O}(\epsilon^{-\frac{3-2\alpha}{2\alpha}})$, with access to the following numbers of samples of the stochastic gradient and Hessian per iteration:
    \begin{align}\label{eq009}
       n_{1}\ge\frac{{C_{g}}^{2/\alpha}}{C^{6/\alpha}}\cdot\frac{4^{2/\alpha}\sigma_{1}^{2/\alpha}}{\epsilon^{2/\alpha}},\quad n_{2}\ge\frac{{C'_{H}}^{1/\alpha}}{C^{3/\alpha}}\cdot\frac{4^{1/\alpha}\sigma_{2,\alpha}^{2/\alpha}}{\epsilon^{1/\alpha}},
    \end{align}
    where $C'_{H}$ is defined in \eqref{def_C'_{H}} and depends on $\log(d)$.
    \item if $\alpha=3/2$, 
     $T=\mathcal{O}\left(\log(1/\epsilon)\right)$ with the same numbers of samples per iteration as in \eqref{eq009}.
\end{enumerate}
\end{theorem}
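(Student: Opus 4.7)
The plan is to convert the sample-path recursion of Lemma~\ref{lemma0001} into a bound on $r_t := \mathbb{E}[F(\xv_t) - F(\xv^*)]$, pick the batch sizes $n_1,n_2$ so that the stochastic error per step stays below a fixed fraction of $\epsilon$, and then extract the rate via a potential-function argument. Taking expectation of~\eqref{eq_recursion_ineq}, the LHS is exactly $r_{t+1}$. For the first term on the RHS, Jensen's inequality is applicable because $2\alpha/3 \le 1$ when $\alpha \le 3/2$, so $\mathbb{E}[(F(\xv_t) - F(\xv_{t+1}))^{2\alpha/3}] \le (r_t - r_{t+1})^{2\alpha/3}$. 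A second Jensen (using $\alpha/2 \le 1$) together with i.i.d.\ averaging of $n_1$ stochastic gradients gives $\mathbb{E}[\|\nabla F(\xv_t) - \gv_t\|^\alpha] \le (\sigma_1^2 / n_1)^{\alpha/2}$. The Hessian term needs the $2\alpha$-moment bound $\mathbb{E}[\|\nabla^2 F(\xv_t) - \Hv_t\|^{2\alpha}] \le (c_d\, \sigma_{2,\alpha}^2 / n_2)^{\alpha}$ coming from the matrix moment inequality flagged in the remark after Assumption~\ref{assump2}; the factor $c_d = \Theta(\log d)$ is the source of the $\log d$-dependence that ends up in $C'_H$.

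Choosing $n_1, n_2$ as in~\eqref{eq009} makes each of the two stochastic error contributions at most $\epsilon/4$, reducing the recursion to
\begin{equation*}
    r_{t+1} \;\le\; C\,(r_t - r_{t+1})^{2\alpha/3} + \tfrac{\epsilon}{2}.
\end{equation*}
So long as $r_{t+1} \ge \epsilon$, the $\epsilon/2$ is absorbed into $r_{t+1}/2$, giving $r_{t+1}/2 \le C(r_t - r_{t+1})^{2\alpha/3}$; raising both sides to the $3/(2\alpha)$-power yields the core recursion
\begin{equation*}
    r_t - r_{t+1} \;\ge\; c\, r_{t+1}^{\,q}, \qquad q := \tfrac{3}{2\alpha}, \quad c := (2C)^{-q}.
\end{equation*}

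For $\alpha \in [1,3/2)$ we have $q>1$; I would introduce the potential $u_t := r_t^{-(q-1)}$. A mean-value argument on $x \mapsto x^{-(q-1)}$ gives $u_{t+1} - u_t \ge (q-1)\, r_t^{-q}(r_t - r_{t+1}) \ge c(q-1)(r_{t+1}/r_t)^q$. A short case split---either $r_{t+1}/r_t \ge 1/2$, so the RHS is at least $c(q-1)/2^q$; or $r_{t+1} < r_t/2$, in which case $u_{t+1} \ge 2^{q-1} u_t$ directly, so $u_{t+1} - u_t \ge (2^{q-1}-1)\,u_0$---shows $u_{t+1} - u_t \ge \kappa$ for a positive constant $\kappa = \kappa(C,\alpha,r_0)$. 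Telescoping forces $u_T \ge T\kappa$, and since reaching $r_T \le \epsilon$ requires $u_T \ge \epsilon^{-(q-1)}$, the contrapositive gives $T = \mathcal{O}(\epsilon^{-(q-1)}) = \mathcal{O}(\epsilon^{-(3-2\alpha)/(2\alpha)})$, as claimed in (i). The case $\alpha=3/2$ is easier: $q=1$, so the core recursion becomes $r_t - r_{t+1} \ge c\, r_{t+1}$, i.e., $r_{t+1} \le r_t/(1+c)$, and geometric contraction gives $T = \mathcal{O}(\log(1/\epsilon))$, as claimed in (ii).

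The chief obstacle is the Hessian-error control. Whereas Jensen reduces the $\alpha$-moment on the gradient side to the standard variance bound, here we need the $2\alpha$-th moment of a matrix spectral norm; for non-integer $2\alpha$ this requires a matrix Khintchine/Rosenthal-type inequality rather than the usual matrix Bernstein bound, and one must be careful to keep only a $\log d$ factor. A secondary subtlety is that a naive use of $r_{t+1} \ge \epsilon$ in the core recursion would give only $T=\mathcal{O}(\epsilon^{-q})$, one power of $\epsilon$ short of the claim; the sharp $\mathcal{O}(\epsilon^{-(q-1)})$ rate genuinely requires tracking the nonlinear potential $u_t = r_t^{-(q-1)}$ and handling the fast-descent regime $r_{t+1} < r_t/2$ separately as above.
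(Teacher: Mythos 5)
Your proof is correct and follows essentially the same route as the paper: the same Jensen reductions and matrix-moment ($\log d$) bound for the Hessian error, the same batch-size choice, and your potential $u_t = r_t^{-(q-1)} = r_t^{1-3/(2\alpha)}$ is, up to a constant factor, exactly the paper's $h(\delta_t)=\frac{2\alpha}{3-2\alpha}\delta_t^{1-3/(2\alpha)}$ analyzed with the same slow/fast-descent case split (and the same geometric contraction at $\alpha=3/2$). The only cosmetic difference is that the paper subtracts the stationary error $P(n_1,n_2)$ from the iterate to obtain the clean recursion $\delta_{t+1}\le(\delta_t-\delta_{t+1})^{2\alpha/3}$, whereas you absorb the $\epsilon/2$ term into $r_{t+1}/2$ while $r_{t+1}\ge\epsilon$.
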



\subsection{SCRN under gradient dominance property with $\alpha\in(3/2,2]$} \label{subsec_SCRN_PL_alpha_(3/2,2)}

\begin{definition}[Bernstein's condition for matrices]\label{bernstein condition_assump}
A zero-mean symmetric random matrix $\Xv$ satisfies the Bernstein condition with parameter $b>0$ if 
\begin{align}\label{bernstein condition_equation}
    \mathbb{E}[\Xv^{k}]\preccurlyeq \frac{1}{2}\,k!\,b^{k-2}\,\var(\Xv),\quad\text{for}\,\,k=3,4,\ldots
\end{align}
where $\var(\Xv):=\mathbb{E}[\Xv^{2}]-(\mathbb{E}[\Xv])^{2}$.
\end{definition}
\begin{assumption}\label{assump2.5}
We assume that the symmetric version of each centered gradient estimator $\Gv(\xv,\xi):=\begin{bmatrix}\boldsymbol{0}_{1\times 1} &\gv(\xv,\xi)^{T}\\
\gv(\xv,\xi)&\boldsymbol{0}_{d\times d}
\end{bmatrix}$ where $\gv(\xv,\xi):=\nabla f(\xv,\xi)-\nabla F(\xv)$ and each centered Hessian estimator $\Hv(\xv,\xi):=\nabla^{2}f(\xv,\xi)-\nabla^{2}F(\xv)$
satisfy Bernstein's condition \eqref{bernstein condition_equation} with parameters $M_{1}$ and $M_{2}$, respectively.
\end{assumption}


\begin{remark}\label{remark_bounded_gradient_Hessian} It is noteworthy that most previous work  analyzing SCRN \citep{tripuraneni2018stochastic,zhou2019stochastic,wang2019stochastic} assumed that  centered gradient and centered Hessian estimators are bounded, i.e., $
    \|\nabla f(\xv,\xi)-\nabla F(\xv)\|_{2}\overset{a.s.}{\le} M_{1},\|\nabla^{2} f(\xv,\xi)-\nabla^{2} F(\xv)\|\overset{a.s.}{\le} M_{2}$.
This is a stronger assumption than Assumption ~\ref{assump2.5} as it implies Bernstein's condition \eqref{bernstein condition_equation} for $\gv(\xv,\xi)$ and $\Hv(\xv,\xi)$. 

\end{remark} 
\begin{theorem}\label{th1}
Suppose that $F(\xv)$ satisfies Assumptions \ref{assump1}, \ref{assump:4}, the stochastic gradient and Hessian satisfy Assumption \ref{assump2} (with $\alpha=1$) and Assumption \ref{assump2.5}, and there exists an exact solver for sub-problem \eqref{sub-problem}. Then, Algorithm \ref{algorithm}, with probability $1-\delta$, outputs a solution $\xv_{T}$ such that $F(\xv_{T})-F(\xv^{*})\le \epsilon$ after $T=\mathcal{O}\left(\log\left(\log(1/\epsilon)\right)\right)$ iterations with the following numbers of samples for the stochastic gradient and Hessian, respectively per iteration:
\begin{align}
    &n_{1}\ge \frac{8}{3}\max\left(\frac{{\tilde{C}^{1/\alpha}}M_{1}}{\epsilon^{1/\alpha}},\frac{\tilde{C}^{2/\alpha}\sigma_{1}^{2}}{\epsilon^{2/\alpha}}\right)\log\left(\frac{4(T+1)d}{\delta}\right),\\
    &n_{2}\ge \frac{8}{3}\max\left(\frac{{\tilde{C}^{1/(2\alpha)}}M_{2}}{{\epsilon^{1/(2\alpha)}}},\frac{{\tilde{C}^{1/\alpha}}\sigma_{2,1}^{2}}{{\epsilon^{1/\alpha}}}\right)\log\left(\frac{4(T+1)d}{\delta}\right),
\end{align}
where 
$\tilde{C}=1+\frac{\tau_{F}}{2}\left(\frac{M+L_{2}+4}{2}\right)^{\alpha}$.
\end{theorem}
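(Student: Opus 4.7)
The plan is to combine three ingredients: (i) matrix Bernstein-type concentration to control the stochastic errors in gradient and Hessian at each iteration; (ii) the recursion of Lemma~\ref{lemma0001}; and (iii) the fact that for $\alpha>3/2$ the exponent $\beta:=2\alpha/3>1$ makes the deterministic part of the recursion super-linear, which is exactly what produces the $\mathcal{O}(\log\log(1/\epsilon))$ iteration count.

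\textbf{Step 1 (concentration).} Under Assumption~\ref{assump2.5}, the dilated centered gradient summands $\Gv(\xv_t,\xi_i)$ (whose spectral norm equals the Euclidean norm of the centered vector) and the centered Hessian summands $\Hv(\xv_t,\xi_i)$ both satisfy a matrix Bernstein inequality of the form: for every $s>0$,
\begin{align*}
\Pr\bigl(\|\gv_t-\nabla F(\xv_t)\|\ge s\bigr)&\le 2(d+1)\exp\!\Bigl(-\tfrac{n_1 s^2/2}{\sigma_1^2+M_1 s}\Bigr),\\
\Pr\bigl(\|\Hv_t-\nabla^2 F(\xv_t)\|\ge s\bigr)&\le 2d\exp\!\Bigl(-\tfrac{n_2 s^2/2}{\sigma_{2,1}^2+M_2 s}\Bigr).
\end{align*}
I would pick the thresholds $s_g\asymp(\epsilon/\tilde C)^{1/\alpha}$ and $s_H\asymp(\epsilon/\tilde C)^{1/(2\alpha)}$ so that the noise terms in Lemma~\ref{lemma0001} contribute at most $\epsilon/2$ in total, allocate failure probability $\delta/(4(T+1))$ to each of the $2(T+1)$ events, and invert the two Bernstein inequalities. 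Absorbing $C_g, C_H, L_2, M, \tau_F$ into the single constant $\tilde C = 1+\tfrac{\tau_F}{2}\bigl(\tfrac{M+L_2+4}{2}\bigr)^\alpha$ produces the announced lower bounds on $n_1,n_2$. A union bound over iterations then makes all concentration events hold simultaneously with probability $\ge 1-\delta$.

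\textbf{Step 2 (super-linear recursion).} On this good event, writing $\phi_t:=F(\xv_t)-F(\xv^*)$, Lemma~\ref{lemma0001} yields
\[
\phi_{t+1}\le C(\phi_t-\phi_{t+1})^\beta+\tfrac{\epsilon}{2}\le C\phi_t^\beta+\tfrac{\epsilon}{2},
\]
where I used $\phi_t-\phi_{t+1}\le \phi_t$. Rescaling $\psi_t:=(2C)^{1/(\beta-1)}\phi_t$ converts the deterministic piece into $\psi_{t+1}\le \psi_t^\beta$ as long as $\psi_t$ dominates the noise floor. After a bounded warm-up phase that mirrors the deterministic CRN analysis of Nesterov--Polyak~\cite{nesterov2006cubic} (using the per-step decrease $\phi_t-\phi_{t+1}\ge c\,\phi_t^{3/(2\alpha)}$ to first drive $\psi_t$ below one), iterating the super-linear inequality gives $\psi_t\le \psi_{t_0}^{\beta^{t-t_0}}$, so $T=\mathcal{O}(\log\log(1/\epsilon))$ iterations suffice to bring $\phi_T$ below $\epsilon$. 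If the stopping criterion $\|{\bf\Delta}_{t-1}\|<\epsilon^{1/(2\alpha)}$ triggers earlier, the CRN sub-problem's optimality condition combined with gradient dominance still gives $\phi_t=\mathcal{O}(\epsilon)$, so the algorithm terminates at an $\epsilon$-optimal point either way.

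\textbf{Main obstacle.} The principal difficulty is making the super-linear contraction survive the additive $\epsilon/2$ noise floor: one must verify that whenever $\phi_t$ remains at the $\epsilon$-level or above, the deterministic term strictly dominates, so that the doubly exponential decay $\psi_t\le\psi_{t_0}^{\beta^{t-t_0}}$ actually governs the iteration count rather than being swamped by noise. A secondary subtlety is that Assumption~\ref{assump2.5} only furnishes Bernstein moment bounds rather than almost-sure boundedness, so the matrix Bernstein inequality must be invoked in its moment-based form; the dilation trick for the gradient is essential here, since it lets the vector estimator be treated on the same footing as the Hessian estimator and keeps the dimension dependence logarithmic throughout the union bound.
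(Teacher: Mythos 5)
Your proposal is correct in outline and follows essentially the same route as the paper: a matrix Bernstein bound (with the dilation trick for the gradient estimator) combined with a union bound over the $T+1$ iterations, followed by a two-phase analysis of the Nesterov--Polyak-type recursion whose super-linear tail yields $T=\mathcal{O}(\log\log(1/\epsilon))$, plus a separate argument for the iterate at which the stopping criterion fires. The one place you genuinely diverge is the noise handling, and it is exactly the step you flag as the ``principal difficulty.'' You carry an additive floor $\epsilon/2$ through the recursion and must then verify that the contraction $\psi_{t+1}\le\psi_t^{\beta}$ survives it; this is doable (e.g.\ $\psi_{t+1}\le 2\max(\psi_t^{\beta},\tilde\epsilon)$ preserves the doubly exponential decay until the floor is reached), but the paper sidesteps the issue entirely by exploiting the while-loop condition of Algorithm~\ref{algorithm}. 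Up to the first iteration $N_0$ with $\|{\bf\Delta}_{t}\|\le\epsilon^{1/(2\alpha)}$ one has $\epsilon^{1/\alpha}\le\|{\bf\Delta}_t\|^2$ and $\epsilon^{1/(2\alpha)}\|{\bf\Delta}_t\|\le\frac{1}{2}(\epsilon^{1/\alpha}+\|{\bf\Delta}_t\|^2)$, so on the good event the stochastic error terms are absorbed into the $\|{\bf\Delta}_t\|^2$ and $\|{\bf\Delta}_t\|^3$ terms of Lemmas~\ref{lemma1} and~\ref{lemma2}, giving the completely noise-free recursion $h_{t+1}^{3/(2\alpha)}\le h_t-h_{t+1}$; the iterate at $t=N_0$ is then shown $\mathcal{O}(\epsilon)$-optimal directly from $\|\nabla F(\xv_{N_0})\|\le\frac{M+L_2+4}{2}\,\epsilon^{1/\alpha}$ and Assumption~\ref{assump:4}. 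This absorption also guarantees the monotone decrease $F(\xv_{t+1})\le F(\xv_t)$ that your inequality $(\phi_t-\phi_{t+1})^{\beta}\le\phi_t^{\beta}$ implicitly requires (otherwise the base could be negative on a bad step). Two minor points: your warm-up decrease should read $\phi_t-\phi_{t+1}\ge(\phi_{t+1}/C)^{3/(2\alpha)}$ rather than with $\phi_t$ on the right (the geometric-decrease argument of the paper's $N_1$ phase goes through either way), and note that the paper's super-linear phase keeps the implicit form $h_{t+1}\le h_t^{2\alpha/3}$ rather than your weakened explicit bound, though both give the same iteration count.
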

\begin{remark}
It is noteworthy that the assumptions in Theorem \ref{th1} suffice to obtain the sample complexities in Theorem \ref{th1_expectation} for $1\leq \alpha\leq3/2$ up to a logarithmic factor. However, the assumptions of Theorem \ref{th1} are stronger than those in Theorem \ref{th1_expectation}. 
\end{remark}

\begin{remark}
Theorem \ref{th1} implies that the sample complexity of Algorithm \ref{algorithm} for $\alpha=2$ is $\mathcal{O}\left({\log\log\log(1/\epsilon)\cdot\log(\log(1/\epsilon))}{/\epsilon}\right)$.
\end{remark}
We summarized the sample complexity of SCRN in Theorems \ref{th1_expectation} and \ref{th1} and the best-known sample complexity of SGD under gradient dominance property in Table \ref{Table_comparison_SGD_SCRN}. The optimal sample complexity of SGD under gradient dominance property with $\alpha=2$ is discussed in detail in \citep[Section 5.2]{khaled2020better}. For the general case of $\alpha\in[1,2]$, the sample complexity of SGD under gradient dominance property was derived in \citep[Theorem 10]{fontaine2021convergence}. This rate is achieved with a time-varying step-size.
In the fourth column of Table \ref{Table_comparison_SGD_SCRN}, we provide the improvement of the sample complexity of SCRN with respect to that of SGD. The improvement is $\mathcal{O}\left({\epsilon^{-{1}/{(2\alpha)}}}\right)$ for $\alpha\in[1,3/2)$ and is $\mathcal{\tilde{O}}({\epsilon^{-{2}/{\alpha}+1}})$ for $\alpha\in[3/2,2]$, which are decreasing functions of $\alpha$. The largest improvement is for $\alpha=1$ and is $\mathcal{O}({\epsilon^{-0.5}})$. 

\begin{remark}\label{remark 5}
The special case  $\alpha=2$ generalizes strong convexity. For strongly convex objectives, the dependence of sample complexity of SGD on $\tau_F$ is $\mathcal{O}(\tau_{F}^{2}/\epsilon)$ \citep[Corollary 2]{khaled2020better} 
while the sample complexity of SCRN is $\mathcal{O}(\tau^{7/4}_{F}/\epsilon)$. This is an improvement by a factor of $\tau_F^{1/4}$. In Appendix \ref{app:sim_synthetic}, we provide simulation results comparing the performance of SCRN and SGD over synthetic functions by varying $\alpha$ and $\tau_F$.
\end{remark}

\begin{remark}\label{inexact_subsolver}
In our analysis, we assumed that we have access to the exact solution of sub-problem in \eqref{sub-problem}. Although no closed-form solution nor exact solver exists for this sub-problem, there are algorithms that approximate the exact solution with high probability \cite{agarwal2016finding,carmon2016gradient}. We emphasize that solving the sub-problem in \eqref{sub-problem} requires extra computation, but extra gradient and Hessian queries are not needed. In particular, Carmon and Duchi \cite{carmon2016gradient} proposed a perturbed GD-based algorithm that returns an approximate solution $\tilde{\mathbf{\Delta}}_{t}$ such that $m_{t}(\tilde{\mathbf{\Delta}}_{t})\le_{1-\delta'} m_{t}(\mathbf{\Delta}_{t})+\epsilon'$ 
with $\mathcal{O}({\log(1/\delta')}/\epsilon')$ iterations, for any given $\delta',\epsilon'>0$. 
\end{remark}
 In Appendix \ref{append_inexact_subsolver}, we prove the following lemma:
\begin{lemma}\label{lemm22}
    Theorems \ref{th1_expectation} and \ref{th1} are still true if we were to use an inexact sub-solver which returned an approximate solution $\tilde{\mathbf{\Delta}}_{t}$ such that $\|\nabla m_{t}(\tilde{\mathbf{\Delta}}_{t})\|\le \epsilon^{1/\alpha}$ ($\epsilon^{1/\alpha}$-stationary point). Moreover, under some mild assumptions (same as those in \cite{carmon2016gradient}), a GD-based algorithm indeed returns such a solution in $\mathcal{O}({\epsilon^{-2/\alpha}})$ iterations.
\end{lemma}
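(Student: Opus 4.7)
The plan is to show that the entire chain of arguments leading to Theorems~\ref{th1_expectation} and~\ref{th1} goes through verbatim after Lemma~\ref{lemma0001} is re-derived for an inexact update $\tilde{\mathbf{\Delta}}_t$, with only a harmless additional term of order $\epsilon$ appearing in the recursion. The main obstacle is understanding where exact minimality of $\mathbf{\Delta}_t$ was used in the derivation of \eqref{eq_recursion_ineq}; once those places are isolated, the inexactness contributes only a $\|\tilde{\mathbf{\Delta}}_t\|\cdot\|\nabla m_t(\tilde{\mathbf{\Delta}}_t)\|$-type term that one can bound explicitly.

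First, I would revisit the proof of Lemma~\ref{lemma0001}. The exact minimizer satisfies the first-order condition $\gv_t+\Hv_t\mathbf{\Delta}_t+\tfrac{M}{2}\|\mathbf{\Delta}_t\|\mathbf{\Delta}_t=\boldsymbol{0}$, which is used together with the Lipschitz-Hessian upper bound $F(\xv_{t+1})\le F(\xv_t)+\langle\nabla F(\xv_t),\tilde{\mathbf{\Delta}}_t\rangle+\tfrac{1}{2}\langle\tilde{\mathbf{\Delta}}_t,\nabla^2F(\xv_t)\tilde{\mathbf{\Delta}}_t\rangle+\tfrac{L_2}{6}\|\tilde{\mathbf{\Delta}}_t\|^3$ and the gradient dominance property. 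If instead $\tilde{\mathbf{\Delta}}_t$ satisfies $\|\nabla m_t(\tilde{\mathbf{\Delta}}_t)\|\le\epsilon^{1/\alpha}$, then substituting $\gv_t+\Hv_t\tilde{\mathbf{\Delta}}_t+\tfrac{M}{2}\|\tilde{\mathbf{\Delta}}_t\|\tilde{\mathbf{\Delta}}_t=\rv_t$ with $\|\rv_t\|\le \epsilon^{1/\alpha}$ into the same chain of inequalities produces an extra term $\langle\rv_t,\tilde{\mathbf{\Delta}}_t\rangle$ on the right-hand side, bounded by $\epsilon^{1/\alpha}\|\tilde{\mathbf{\Delta}}_t\|$. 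Splitting this via Young's inequality as $\tfrac{1}{2}\epsilon^{2/\alpha}+\tfrac{1}{2}\|\tilde{\mathbf{\Delta}}_t\|^2$, the quadratic piece can be absorbed into the already-present terms controlling $\|\tilde{\mathbf{\Delta}}_t\|^3$ via the same argument used in Lemma~\ref{lemma0001} (which relates $\|\tilde{\mathbf{\Delta}}_t\|$ to $F(\xv_t)-F(\xv_{t+1})$), and the additive $\epsilon^{2/\alpha}$ piece, after applying gradient dominance with exponent $\alpha$, contributes at worst $\mathcal{O}(\epsilon)$ to the recursion. Thus \eqref{eq_recursion_ineq} holds with an extra $+\mathcal{O}(\epsilon)$ on the right-hand side, and the same induction/iteration-counting arguments used in Theorems~\ref{th1_expectation} and~\ref{th1} still yield $F(\xv_T)-F(\xv^*)\le\epsilon$ (in expectation or w.h.p., respectively) with identical sample complexities up to constants, since the target accuracy is already $\epsilon$.

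For the second part, I would apply plain gradient descent to $m_t$ and invoke the standard rate for finding stationary points of non-convex smooth functions. Although $m_t$ contains the non-quadratic term $\tfrac{M}{6}\|\mathbf{\Delta}\|^3$, under the same mild assumptions as in \cite{carmon2016gradient} (boundedness of $\|\gv_t\|,\|\Hv_t\|$, so that any descent iterate stays within a ball of radius $\mathcal{O}(1)$ around the origin where $m_t$ is $\mathcal{O}(1)$-smooth), the classical analysis of GD gives $\min_{k\le K}\|\nabla m_t(\mathbf{\Delta}^{(k)})\|^2=\mathcal{O}(1/K)$. Setting this to $\epsilon^{2/\alpha}$ yields the desired $K=\mathcal{O}(\epsilon^{-2/\alpha})$ iterations to produce an $\epsilon^{1/\alpha}$-stationary point, which is the claimed bound. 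The hardest part is justifying the uniform bound on the iterates, but this follows by the same argument as in Carmon and Duchi: starting from $\mathbf{\Delta}=\boldsymbol{0}$ ensures $m_t$ remains nonpositive along GD iterates, and the cubic regularizer forces boundedness. No extra gradient or Hessian queries of $f(\cdot,\xi)$ are issued inside this inner loop, so the outer sample complexities of Theorems~\ref{th1_expectation} and~\ref{th1} are unchanged.
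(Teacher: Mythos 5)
Your overall plan---re-derive the recursion of Lemma~\ref{lemma0001} for $\tilde{\mathbf{\Delta}}_t$ and verify that only an $\mathcal{O}(\epsilon)$ perturbation enters---is the same one the paper follows, and your treatment of the gradient-side bound and of the inner GD loop (bounded iterates, hence $m_t$ smooth on a ball of radius $R$, hence $\mathcal{O}(\epsilon^{-2/\alpha})$ iterations to an $\epsilon^{1/\alpha}$-stationary point, with no extra oracle queries) matches Appendix~\ref{append_inexact_subsolver}. The gap is in the descent inequality, i.e.\ the analogue of \eqref{eq:2}. In the exact case the optimality conditions \eqref{10023} are used \emph{twice}: once to replace $\gv_t^{T}\mathbf{\Delta}_t+\mathbf{\Delta}_t^{T}\Hv_t\mathbf{\Delta}_t$ by $-\tfrac{M}{2}\|\mathbf{\Delta}_t\|^{3}$, and once---together with the second-order condition $\Hv_t+\tfrac{M\|\mathbf{\Delta}_t\|}{2}I\succeq 0$---to obtain $\gv_t^{T}\mathbf{\Delta}_t\le 0$ in \eqref{10024}. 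Writing $\nabla m_t(\tilde{\mathbf{\Delta}}_t)=\rv_t$ with $\|\rv_t\|\le\epsilon^{1/\alpha}$ repairs the first use (producing your $\langle\rv_t,\tilde{\mathbf{\Delta}}_t\rangle$ term) but does nothing for the second: an approximate \emph{first-order} stationary point of $m_t$ need not satisfy $\gv_t^{T}\tilde{\mathbf{\Delta}}_t\le 0$ (it may sit near a spurious stationary point with $m_t(\tilde{\mathbf{\Delta}}_t)>0$; already in one dimension $m(\Delta)=g\Delta+\tfrac{h}{2}\Delta^2+\tfrac{M}{6}|\Delta|^3$ with $h<0$ has exact stationary points with $g\Delta>0$). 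Since $\|\gv_t\|$ is not small away from the optimum, the leftover $\tfrac12\gv_t^{T}\tilde{\mathbf{\Delta}}_t$ is uncontrolled and the descent inequality fails. The paper closes this hole by restricting to the Carmon--Duchi GD iterates: under Assumptions~\ref{assum10}, \ref{assum8}, and \ref{assum9} (bounded $\Hv_t$, the step-size bound, and Cauchy-point initialization), their Lemmas~2.2--2.3 give $\gv_t^{T}\tilde{\mathbf{\Delta}}^{(k)}\le 0$ and $\nabla m_t(\tilde{\mathbf{\Delta}}^{(k)})^{T}\tilde{\mathbf{\Delta}}^{(k)}\le 0$ for every iterate, so the descent inequality holds with the same constant as in the exact case and $\epsilon'$ enters only through the gradient bound. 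These sub-solver properties are precisely the ``mild assumptions'' of the lemma; your argument needs them and cannot rely on the small-gradient condition alone.

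A secondary, fixable issue: even granting $\gv_t^{T}\tilde{\mathbf{\Delta}}_t\le 0$, your split $\epsilon^{1/\alpha}\|\tilde{\mathbf{\Delta}}_t\|\le\tfrac12\epsilon^{2/\alpha}+\tfrac12\|\tilde{\mathbf{\Delta}}_t\|^{2}$ leaves a positive quadratic that cannot be absorbed into the negative cubic $-c\|\tilde{\mathbf{\Delta}}_t\|^{3}$ on the left of \eqref{eq:2} (the quadratic dominates for small $\|\tilde{\mathbf{\Delta}}_t\|$, so the cubic coefficient can no longer be kept positive, and the step relating $\|\tilde{\mathbf{\Delta}}_t\|^{3}$ to $F(\xv_t)-F(\xv_{t+1})$ breaks). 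The right choice is Young with exponents $(3/2,3)$: $\epsilon^{1/\alpha}\|\tilde{\mathbf{\Delta}}_t\|\le\tfrac23\epsilon^{3/(2\alpha)}+\tfrac13\|\tilde{\mathbf{\Delta}}_t\|^{3}$, whose cubic part is absorbed for $M$ large enough and whose additive part contributes $\bigl(\epsilon^{3/(2\alpha)}\bigr)^{2\alpha/3}=\epsilon$ after passing through the $(\cdot)^{2\alpha/3}$ power in the recursion.
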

\begin{remark}
In the iterations of GD-based algorithm,  instead of directly computing the Hessian matrix (which could be computationally expensive in high dimensions), we could compute Hessian-vector products by running Pearlmutter's algorithm \citep{pearlmutter1994fast}.
Thus, the total computational complexity of evaluating gradients and Hessian vector products of SCRN is in the order of $\mathcal{O}(d\epsilon^{-3/\alpha})$ while the one of SGD is $\mathcal{O}(d\epsilon^{-4/\alpha+1})$, where $d$ is the dimension of $\xv$. For $\alpha\in[1,3/2)$, it can be seen that the computational complexity of SGD is less than the one of SCRN by a factor of $\mathcal{O}(\epsilon^{-{1}/{(2\alpha)}})$. For $\alpha\in[3/2,2]$, this factor is $\mathcal{\tilde{O}}(\epsilon^{-(1-{1}/{\alpha})})$.
\end{remark}
\vspace{-3mm}
\subsection{Further improvements}
\vspace{-1mm}
In our analysis of SCRN, we set the batch sizes such that the stochastic error terms $C_g\|\nabla F(\xv_t)-\gv_{t}\|^\alpha$ and $C_H\|\nabla^2F(\xv_t)-\Hv_{t}\|^{2\alpha}$ in \eqref{eq_recursion_ineq} are in the order of $\epsilon$ (either in expectation or with high probability). However, for $1\leq \alpha< 3/2$, it is just needed to make sure that the error terms at iteration $t$ are  $\mathcal{O}(t^{-(2\alpha)/(3-2\alpha)})$, which equals the convergence rate of the function values $F(\xv_{t})-F(\xv^*)$ (see Lemma \ref{lemma:rec_eq} in Appendix \ref{app:vr SCRN}). Moreover, as stated in the following theorem, incorporating time-varying batch sizes in conjunction with variance reduction improves sample complexity results. 

\begin{assumption}\label{assump:individual:main}
We assume that $f(\xv,\xi)$ satisfies $L'_1$-average smoothness and $L'_2$-average Hessian Lipschitz continuity, i.e., $\mathbb{E}[\|\nabla f(\xv,\xi) -\nabla f(\yv,\xi)\|^{2}] \leq L'^{2}_{1}\|\xv-\yv\|^{2}$ and $\mathbb{E}[\|\nabla^2 f(\xv,\xi)-\nabla^2 f(\yv,\xi)\|^{2}] \leq L'^{2}_{2}\|\xv-\yv\|^{2}$ for all $\xv,\yv \in \mathbb{R}^d$.
\end{assumption}

\begin{theorem}\label{th_vr_SCRN_Pl_alpha=1}
Suppose that $F(\xv)$ satisfies the gradient dominance property with $\alpha=1$. Assume that Assumptions \ref{assump1},  \ref{assump2}, and \ref{assump:individual:main} for $\alpha=1$ hold. Then variance-reduced SCRN (See Algorithm \ref{algorithm2} in Appendix \ref{app:vr SCRN}) achieves $\epsilon$-global stationary point in expectation by  making $\mathcal{O}(\epsilon^{-2})$ stochastic gradients and ${\mathcal{O}}(\epsilon^{-1})$ stochastic Hessian on average, and the iteration complexity is $\mathcal{O}(\epsilon^{-1/2})$.
\end{theorem}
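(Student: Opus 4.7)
The plan is to specialise the recursion of Lemma~\ref{lemma0001} to $\alpha=1$ and show that, with variance-reduced gradient and Hessian estimators, the stochastic error terms can be driven down at the same rate as the deterministic CRN descent on gradient-dominant functions. Writing $\Delta_t:=F(\xv_t)-F(\xv^*)$, the recursion specialises to
\als{
\Delta_{t+1}\le C\,(\Delta_t-\Delta_{t+1})^{2/3}+C_g\,\|\nabla F(\xv_t)-\gv_t\|+C_H\,\|\nabla^2 F(\xv_t)-\Hv_t\|^2.
}
If both stochastic error terms are $\mathcal{O}(t^{-2})$ in expectation, Lemma~\ref{lemma:rec_eq} (the deterministic recursion that already powers Theorem~\ref{th1_expectation} at $\alpha=1$) yields $\mathbb{E}[\Delta_t]=\mathcal{O}(t^{-2})$, so $T=\mathcal{O}(\epsilon^{-1/2})$ iterations suffice to reach $\mathbb{E}[\Delta_T]\le\epsilon$; this already delivers the iteration-complexity part of the claim.

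Next I would design the variance-reduced estimators so that $\mathbb{E}\|\nabla F(\xv_t)-\gv_t\|=\mathcal{O}(t^{-2})$ and $\mathbb{E}\|\nabla^2 F(\xv_t)-\Hv_t\|^2=\mathcal{O}(t^{-2})$ while keeping the expected per-iteration cost small. Following the SPIDER/SARAH-type construction underlying Algorithm~\ref{algorithm2}, I would maintain
\als{
\gv_t=\gv_{t-1}+\frac{1}{b^{g}_{t}}\sum_{i=1}^{b^{g}_{t}}\bigl(\nabla f(\xv_t,\xi_i)-\nabla f(\xv_{t-1},\xi_i)\bigr),
}
restarted periodically from a snapshot batch of size $B^{g}_{t}$, and analogously for the Hessian. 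Under Assumption~\ref{assump:individual:main} the per-step variance increment is controlled by $(L'_1)^{2}\,\mathbb{E}\|\xv_t-\xv_{t-1}\|^2/b^{g}_{t}$ for the gradient and by the analogous expression with $L'_2$ for the Hessian. The cubic sufficient-descent inequality $\|{\bf\Delta}_t\|^3\le C'(\Delta_t-\Delta_{t+1})$, which is the deterministic engine behind \eqref{eq_recursion_ineq}, combined with the induction hypothesis $\mathbb{E}[\Delta_t]=\mathcal{O}(t^{-2})$ and Jensen's inequality, yields $\mathbb{E}\|\xv_{t+1}-\xv_t\|^2=\mathcal{O}(t^{-4/3})$, which is exactly what makes the SPIDER variance recursion telescope at the target rate.

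Plugging these bounds back into the variance recursion, a time-varying schedule of snapshot and incremental batches that grow polynomially in $t$ suffices to guarantee the target error rates; summing expected per-iteration costs from $t=1$ up to $T=\mathcal{O}(\epsilon^{-1/2})$ is then a direct bookkeeping exercise that collapses to $\mathcal{O}(\epsilon^{-2})$ expected gradient queries and $\mathcal{O}(\epsilon^{-1})$ expected Hessian queries, matching the claim. The main obstacle is closing the induction \emph{jointly} rather than sequentially: with variance reduction the estimators $\gv_t,\Hv_t$ are no longer conditionally unbiased, so the function-value recursion and the estimator-variance recursion must be propagated together, and the concave term $(\Delta_t-\Delta_{t+1})^{2/3}$ forces a non-trivial application of Jensen's inequality on random differences. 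I would handle this either through an expected-descent lemma for the cubic subproblem under biased estimators (analogous to \cite{zhou2020stochastic}) or by splitting the trajectory into a small-step and a large-step regime so that $\mathbb{E}[\Delta_t-\Delta_{t+1}]$ can be lower-bounded tightly; once this is in place, the induction $\mathbb{E}[\Delta_t]=\mathcal{O}(t^{-2})$ closes and the sample-complexity counts follow.
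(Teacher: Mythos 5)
Your overall architecture matches the paper's: the pathwise recursion of Lemma~\ref{lemma0001} specialised to $\alpha=1$, a SPIDER/SARAH-type recursive estimator restarted every $S$ iterations whose per-step variance increment is controlled by $L_1'^2\,\mathbb{E}\|\xv_t-\xv_{t-1}\|^2/n_g^t$ via Assumption~\ref{assump:individual:main}, error tolerances of order $t^{-2}$ fed into Lemma~\ref{lemma:rec_eq}, and hence $T=\mathcal{O}(\epsilon^{-1/2})$. Two remarks, one minor and one substantive. The minor one: your worry that the variance-reduced estimators are "no longer conditionally unbiased" and that this forces a joint induction or an expected-descent lemma for biased estimators is a red herring here. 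The descent inequality \eqref{eq_recursion_ineq} is a \emph{pathwise} inequality valid for arbitrary $\gv_t,\Hv_t$; only the moments $\mathbb{E}\|\nabla F(\xv_t)-\vv_t\|$ and $\mathbb{E}\|\nabla^2F(\xv_t)-\Uv_t\|^2$ enter, and these are bounded directly because the SPIDER increments are martingale differences with respect to the natural filtration (Lemma~\ref{lemma:vr}). No coupling of the two recursions is needed.

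The substantive gap is in the sample-complexity bookkeeping. Your per-iteration bound $\mathbb{E}\|\mathbf{\Delta}_t\|^2=\mathcal{O}(t^{-4/3})$ (obtained from $\|\mathbf{\Delta}_t\|^3\lesssim \delta_t-\delta_{t+1}\le\delta_t$ plus Jensen) is correct but too weak: plugging it into the expected cost $\sum_t S\,\epsilon_t^{-2}\,\mathbb{E}\|\mathbf{\Delta}_{t-1}\|^2$ with $\epsilon_t^{-2}=(kS)^4$ gives an epoch cost of order $S^2(kS)^4((k-1)S)^{-4/3}=\Theta(S^{14/3})$, hence $\mathcal{O}(\epsilon^{-7/3})$ gradient queries and $\mathcal{O}(\epsilon^{-4/3})$ Hessian queries (and no choice of $S$ rescues this; optimising $T^5/S+S^2T^{8/3}$ over $S$ still leaves $\epsilon^{-2.1}$). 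The missing idea, which is the crux of the paper's proof, is to bound the \emph{sum} of $\mathbb{E}\|\mathbf{\Delta}_t\|^2$ over each epoch by exploiting the telescoping of the differences before applying concavity:
\begin{align*}
\sum_{t=(k-1)S}^{kS-1}(\delta_t-\delta_{t+1})^{2/3}\;\le\; S^{1/3}\Big(\sum_{t=(k-1)S}^{kS-1}(\delta_t-\delta_{t+1})\Big)^{2/3}=S^{1/3}\big(\delta_{(k-1)S}-\delta_{kS}\big)^{2/3}=\mathcal{O}\big(S^{1/3}((k-1)S)^{-4/3}\big),
\end{align*}
which improves on the term-by-term bound by a factor of $S^{2/3}$. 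With $S=\Theta(T)$ this is exactly what collapses the epoch cost to $\Theta(S^4)=\Theta(T^4)=\mathcal{O}(\epsilon^{-2})$ for gradients and $\Theta(S^2)=\mathcal{O}(\epsilon^{-1})$ for Hessians (see \eqref{eq00123} and the displays \eqref{sample_gradient_VRSCRN}--\eqref{sample_Hessian_VRSCRN}). Without this epoch-wise Jensen/telescoping step your "direct bookkeeping exercise" does not reach the claimed rates, so you should replace the pointwise bound on $\mathbb{E}\|\mathbf{\Delta}_t\|^2$ by the aggregated one.
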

We will see in the next section that there are a class of objective functions in RL setting that satisfies a weak version of gradient dominance with $\alpha=1$ and Lipschitz gradient property and then we can take advantage of variance-reduced SCRN (See Remark \ref{remark_vr_scrn_RL}). 
\vspace{-2mm}
\section{SCRN under weak gradient dominance property in RL Setting}\label{sec:application in RL}
\vspace{-1.5mm}
In this section, we showcase practical relevance of our result in Section \ref{subsec_SCRN_PL_alpha_(1,3/2)} by applying SCRN to model-free RL. Specifically, in Theorem \ref{th_RL_SCRN_PL_alpha=1}, we prove that as long as the expected return satisfies the weak version of gradient dominance property (Assumption \ref{relaxed weak PL}), SCRN improves upon the best-known sample complexity of stochastic policy gradient (SPG) \citep{yuan2021general} by a factor of $\mathcal{O}(1/\sqrt{\epsilon})$ (see Appendix \ref{discussion on PL_alpha=1_RL} for the related work).


\vspace{-1mm}
\subsection{RL Setup}
\vspace{-1mm}
Consider a discrete Markov decision process (MDP) $\mathcal{M} = (\mathcal{S},\mathcal{A},P,R,\rho,\gamma)$, where $\mathcal{S}$ is the state space and $\mathcal{A}$ is the action space. $P(s'|s,a)$ denotes the probability of state transition from $s$ to $s'$ after taking action $a$ and $R(\cdot,\cdot) : \mathcal{S}\times \mathcal{A}\to [-R_{\max},R_{\max}]$ is a bounded reward function, where $R_{\max}$ is a positive scalar. $\rho$ represents the initial distribution on state space $\mathcal{S}$ and $\gamma\in (0, 1)$ is the discount factor.

The parametric policy $\pi_{\theta}$ is a probability distribution over $\mathcal{S}\times \mathcal{A}$ with  parameter $\theta\in \mathbb{R}^{d}$, and $\pi_{\theta}(a|s)$ denotes the probability of taking action $a$ at a given state $s$. Let $\tau = \{s_t , a_t \}_{t\ge0} \sim p(\tau|\pi_{\theta})$ be a trajectory generated by the policy $\pi_{\theta}$, where
$
p(\tau|\pi_{\theta}):=\rho(s_{0})\prod_{t=0}^{\infty}\pi_{\theta}(a_{t}|s_{t})P(s_{t+1}|s_{t},a_{t}).
$
The expected return of $\pi$ is defined as
$
J(\pi_{\theta}):=\mathbb{E}_{\tau\sim p(\cdot|\pi_{\theta})}\left[\sum_{t=0}^{\infty}\gamma^{t}R(s_{t},a_{t})\right].
$
       In the sequel, we consider a set of parameterized policies $\{\pi_{\theta}:\theta\in\mathbb{R}^{d}\}$, with the assumption that $\pi_{\theta}$
 is differentiable with respect to $\theta$. For ease of presentation, we denote $J(\pi_{\theta})$ by $J(\theta)$. 

The goal of policy-based RL is to find $\theta^*=\arg\max_{\theta}J(\theta)$. However, in many cases, $J(\theta)$ is a non-concave function and instead we settle for obtaining an $\epsilon$-FOSP,  $\hat{\theta}$, such that $\|\nabla J(\hat{\theta})\|\leq \epsilon$.
It can be shown that:
$
  \nabla J(\theta)=\mathbb{E}\left[\sum_{h=0}^{\infty} \left(\sum_{t=h}^{\infty} \gamma^t R(s_t,a_t)\right) \nabla \log\pi_{\theta}(a_h|s_h)\right]$.
In practice, the full gradient cannot be computed due to the infinite horizon length. Instead, it is commonly truncated to a length $\mathsf{H}$ horizon as follows.
    $\nabla J_{\mathsf{H}}(\theta)=\mathbb{E}[\sum_{h=0}^{\mathsf{H}-1} \Psi _h(\tau) \nabla \log\pi_{\theta}(a_h|s_h)]$,
where $\Psi_h(\tau)=\sum_{t=h}^{\mathsf{H}-1} \gamma^t R(s_t,a_t)$. 

Assume that we sample $m$ trajectories $\tau^{i}=\{s^{i}_{t},a^{i}_{t}\}_{t\ge0}$, $1\le i\le m$, and then compute $\hat{\nabla}_{m}J(\theta)=\frac{1}{m}\sum_{i=1}^{m}\sum_{h=0}^{\mathsf{H}-1} \Psi _h(\tau^{i}) \nabla \log\pi_{\theta}(a^{i}_h|s^{i}_h)$, which is an unbiased estimator for $\nabla J_{\mathsf{H}}(\theta)$. The vanilla SPG method is based on the following update: $\theta\gets \theta+\eta \hat{\nabla}_{m}J(\theta),$ where $\eta$ is the learning rate. 

It can be shown that the Hessian matrix of $J_{\mathsf{H}}(\theta)$ can be obtained as follows \citep[Appendix 7.2]{shen2019hessian}:
$
    \nabla^2 J_{\mathsf{H}}(\theta)=\mathbb{E}[\nabla \Phi(\theta;\tau)\nabla \log p(\tau|\pi_{\theta})^T+ \nabla^2\Phi(\theta;\tau)],$
where $\Phi(\theta;\tau)=\sum_{h=0}^{\mathsf{H}-1}\sum_{t=h}^{\mathsf{H}-1} \gamma^t r(s_t,a_t) \log \pi_{\theta}(a_h|s_h)$. As a result, for trajectories $\tau^{i}=\{s^{i}_{t},a^{i}_{t}\}_{t\ge0}$, $1\le i\le m$, $\hat{\nabla}^{2}_{m}J(\theta)=\frac{1}{m}\sum_{i=1}^{m}\nabla \Phi(\theta;\tau^{i})\nabla \log p(\tau^{i}|\pi_{\theta})^T+ \nabla^2\Phi(\theta;\tau^{i})$ is an unbiased estimator of Hessian matrix $\nabla^{2}J_{\mathsf{H}}(\theta)$.

\vspace{-2mm}
\subsection{Sample Complexity of SCRN}
In our analysis, we consider the recently introduced relaxed weak gradient dominance property with $\alpha=1$ \citep{yuan2021general}.
\begin{assumption}[Weak gradient dominance property with $\alpha=1$]\label{relaxed weak PL}
    $J$ satisfies the weak gradient dominance property if for all $\theta\in \mathbb{R}^{d}$, there exist $\tau_{J}>0$ and $\epsilon'>0$ such that
    \begin{align}
        \epsilon'+\tau_{J}\|\nabla J(\theta)\|\ge J^{*}-J(\theta),
    \end{align}
    where $J^*:=\max_{\theta} J(\theta)$.
\end{assumption}
\begin{remark}\label{motive_for_weak_PL_alpha=1}
Two commonly made assumptions in the literature: non-degenerate Fisher matrix \citep{liu2020improved,ding2021global} and transferred compatible function approximation error \citep{wang2019neural,agarwal2021theory} imply Assumption \ref{relaxed weak PL}. See Appendix \ref{discussion on PL_alpha=1_RL} for more details.

\end{remark}

Further, we also make Lipschitz and smooth policy (LS) assumptions which are widely adopted in the analysis of vanilla policy gradient (PG) \citep{zhang2020global} as well as variance-reduced PG methods, e.g. in \citep{shen2019hessian}.
\begin{assumption}[LS]\label{LS} There exist constants $G_1,G_2>0$ such that for every state $s\in\mathcal{S}$, the gradient and Hessian of $\log\pi_{\theta}(\cdot|s)$ satisfy $\|\nabla_{\theta}\log\pi_{\theta}(a|s)\|\le G_1$ and $\|\nabla^{2}_{\theta}\log\pi_{\theta}(a|s)\|\le G_2$.

\end{assumption}
\begin{lemma}\label{LS_yields_truncation}
Under Assumption \ref{LS}, we have $\|\nabla J(\theta)-\nabla J_{\mathsf{H}}(\theta)\|\le D_{g}\gamma^{\mathsf{H}}$ and $\|\nabla^{2} J(\theta)-\nabla^{2} J_{\mathsf{H}}(\theta)\|\le D_{H}\gamma^{\mathsf{H}},$ where $D_{g}=\frac{G_1R_{\max}}{1-\gamma}\sqrt{\frac{1}{1-\gamma}+\mathsf{H}}$ and $D_{H}=\frac{R_{\max}(G_2+G_{1}^{2})}{1-\gamma}\left(\mathsf{H}+\frac{1}{1-\gamma}\right)$. 
\end{lemma}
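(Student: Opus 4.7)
The plan is to express both $\nabla J-\nabla J_{\mathsf{H}}$ and $\nabla^2 J-\nabla^2 J_{\mathsf{H}}$ as expectations weighted by the tail of the reward sum $\sum_{t\ge\mathsf{H}}\gamma^t R(s_t,a_t)$, and then bound each term using causality together with the Lipschitz--smooth policy Assumption~\ref{LS}. Throughout I shall use the two log-trick identities $\nabla J(\theta)=\mathbb{E}_\tau[R(\tau)\nabla\log p(\tau|\pi_\theta)]$ and $\nabla^2 J(\theta)=\mathbb{E}_\tau[R(\tau)(\nabla^2\log p(\tau|\pi_\theta)+\nabla\log p(\tau|\pi_\theta)\nabla\log p(\tau|\pi_\theta)^T)]$, which follow from $\nabla p=p\nabla\log p$ and $\nabla^2 p=p(\nabla^2\log p+\nabla\log p\nabla\log p^T)$. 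Applying the same identities to $R_{\mathsf{H}}(\tau)$ viewed as a function of the full trajectory yields the analogous expressions for $\nabla J_{\mathsf{H}}$ and $\nabla^2 J_{\mathsf{H}}$, so subtracting produces expectations weighted by $R(\tau)-R_{\mathsf{H}}(\tau)=\sum_{t\ge\mathsf{H}}\gamma^t R(s_t,a_t)$.

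Next, for a single tail reward at time $t\ge\mathsf{H}$ I condition on $\tau_{0:t}:=(s_0,a_0,\ldots,s_t,a_t)$ and split $\log p(\tau|\theta)=\log p_{0:t}+\log p_{t+1:\infty\mid 0:t}$. Because $p_{t+1:\infty\mid 0:t}(\cdot|\theta,\tau_{0:t})$ is itself a probability density, the identities above applied conditionally give $\mathbb{E}[\nabla\log p_{t+1:\infty\mid 0:t}\mid\tau_{0:t}]=0$ and $\mathbb{E}[\nabla^2\log p_{t+1:\infty\mid 0:t}+\nabla\log p_{t+1:\infty\mid 0:t}(\nabla\log p_{t+1:\infty\mid 0:t})^T\mid\tau_{0:t}]=0$. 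Plugging these into the expanded forms of $\nabla\log p\,\nabla\log p^T$ and $\nabla^2\log p$ makes every future contribution drop out, leaving
\[
\nabla J-\nabla J_{\mathsf{H}}=\sum_{t\ge\mathsf{H}}\gamma^t\mathbb{E}\Bigl[R(s_t,a_t)\sum_{h=0}^{t}X_h\Bigr],\quad \nabla^2 J-\nabla^2 J_{\mathsf{H}}=\sum_{t\ge\mathsf{H}}\gamma^t\mathbb{E}[R(s_t,a_t)\,C_t],
\]
where $X_h:=\nabla\log\pi_\theta(a_h|s_h)$ and $C_t:=\sum_{h=0}^{t}\nabla^2\log\pi_\theta(a_h|s_h)+\bigl(\sum_{h=0}^{t}X_h\bigr)\bigl(\sum_{h=0}^{t}X_h\bigr)^T$.

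The final step uses $|R|\le R_{\max}$ together with two consequences of Assumption~\ref{LS}: (i) $\|X_h\|\le G_1$ and $\|\nabla^2\log\pi_\theta(a_h|s_h)\|\le G_2$; (ii) $\{X_h\}$ is a martingale difference sequence (since $\mathbb{E}[X_h\mid s_h]=0$), so cross terms vanish and $\mathbb{E}\|\sum_{h=0}^{t}X_h\|^2\le(t+1)G_1^2$, whence $\mathbb{E}\|\sum_{h=0}^{t}X_h\|\le G_1\sqrt{t+1}$ by Jensen's inequality. For the gradient this yields $\|\nabla J-\nabla J_{\mathsf{H}}\|\le G_1 R_{\max}\sum_{t\ge\mathsf{H}}\gamma^t\sqrt{t+1}$; applying Cauchy--Schwarz with weights $(\gamma^{t/2})\cdot(\gamma^{t/2}\sqrt{t+1})$ and evaluating the two resulting geometric/arithmetico--geometric sums produces $\tfrac{\gamma^{\mathsf{H}}}{1-\gamma}\sqrt{\mathsf{H}+\tfrac{1}{1-\gamma}}$, matching $D_g\gamma^{\mathsf{H}}$. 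For the Hessian, the triangle inequality and the same martingale bound give $\mathbb{E}\|C_t\|\le(t+1)(G_2+G_1^2)$, and the closed form $\sum_{t\ge\mathsf{H}}\gamma^t(t+1)=\tfrac{\gamma^{\mathsf{H}}}{1-\gamma}\bigl(\mathsf{H}+\tfrac{1}{1-\gamma}\bigr)$ delivers exactly $D_H\gamma^{\mathsf{H}}$.

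The main obstacle is the causality step for the Hessian: since the cross-time entries of $\nabla\log p\,\nabla\log p^T$ do not vanish term-by-term when multiplied by the reward at time $t$, one must split $\nabla\log p$ into past-$t$ and future-$t$ parts and invoke conditional mean-zero of the future part (given $\tau_{0:t}$) for the pure-Hessian piece and the outer-product piece simultaneously, rather than arguing score-pair by score-pair. Once this reduction is justified, the remaining computation is routine arithmetic with geometric and arithmetico--geometric series.
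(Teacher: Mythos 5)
Your proof is correct and follows essentially the same route as the paper's: write the difference as tail-reward-weighted expectations, eliminate the future score contributions via conditional mean-zero, use the orthogonality $\mathbb{E}[X_h^T X_{h'}]=0$ for $h\neq h'$ to get $\mathbb{E}\|\sum_{h\le t}X_h\|^2\le (t+1)G_1^2$, and sum the resulting arithmetico--geometric series. The only cosmetic differences are that you merge the paper's two outer-product terms into the single quadratic form $\bigl(\sum_{h\le t}X_h\bigr)\bigl(\sum_{h\le t}X_h\bigr)^T$ (the paper splits the sum over the leading score index at $\mathsf{H}$), and you actually prove the gradient bound via Cauchy--Schwarz on $\sum_{t\ge \mathsf{H}}\gamma^t\sqrt{t+1}$, whereas the paper simply cites Lemma 4.5 of Yuan et al.\ for that half.
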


\begin{assumption}[Lipschitz Hessian]\label{Lip Hes}
There exists a constant $\bar{L}_{2}$ such that the Hessian of $\log\pi_{\theta}(a|s)$ satisfies
\begin{align}\label{Lip_Hessian_eq}
    \|\nabla^{2}\log\pi_{\theta}(a|s)-\nabla^{2}\log\pi_{\theta'}(a|s)\|\le \bar{L}_{2}\|\theta-\theta'\|.
\end{align}
\end{assumption}
The Lipschitz Hessian assumption is commonly used to find SOSP in policy gradient algorithms \citep{yang2020sample}. For the Gaussian policy \eqref{gaussian_policy}, $\nabla^{2}\log\pi_{\theta}(a|s)$ reduces to the matrix $-\phi(s)\phi(s)^{T}/\sigma^{2}$, which is a constant function of $\theta$ and thus satisfies condition \eqref{Lip_Hessian_eq}.
Soft-max policy also satisfies this assumption (See appendix \ref{soft-max_lip_hessian}).

\begin{theorem}\label{th_RL_SCRN_PL_alpha=1}
For a policy $\pi_{\theta}$ satisfying Assumptions \ref{LS}, \ref{Lip Hes}, and the corresponding objective function $J(\theta)$ satisfying Assumption \ref{relaxed weak PL}, SCRN outputs the solution $\theta_{T}$ such that $J^{*}-\mathbb{E}[J(\theta_{T})]\le \epsilon+\epsilon'$ and the sample complexity (the number of observed state-action pairs) is: $T\times m\times \mathsf{H}=\tilde{\mathcal{O}}(\epsilon^{-2.5})$ for $\epsilon'=0$ and $T\times m\times \mathsf{H}=\tilde{\mathcal{O}}(\epsilon^{-0.5}\epsilon'^{-2})$ for $\epsilon'>0$.
\end{theorem}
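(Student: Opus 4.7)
The plan is to reduce the statement to a truncated-horizon variant of Theorem~\ref{th1_expectation} for $\alpha=1$ (the case analyzed in Section~\ref{subsec_SCRN_PL_alpha_(1,3/2)}) by (i) verifying that $J(\theta)$ fits into the framework of that theorem, (ii) controlling the bias introduced by horizon truncation through the choice of $\mathsf{H}$, and (iii) absorbing the additive slack $\epsilon'$ from the weak version of the gradient dominance property into the final accuracy.

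First I would establish the analytic pre-requisites. Using Assumption~\ref{LS} together with Assumption~\ref{Lip Hes}, a direct computation on $\nabla^2 J_{\mathsf H}$ (plus Lemma~\ref{LS_yields_truncation} to pass to $\nabla^2 J$) shows that $J(\theta)$ has Lipschitz Hessian with some constant $L_2$ depending on $G_1,G_2,\bar L_2,R_{\max},\gamma$, so Assumption~\ref{assump1} holds. Likewise, Assumption~\ref{LS} and the boundedness of $R$ imply that the single-trajectory truncated gradient and Hessian estimators $\hat\nabla_1 J_{\mathsf H}$ and $\hat\nabla^2_1 J_{\mathsf H}$ are uniformly bounded, which gives Assumption~\ref{assump2} for $\alpha=1$ with variances $\sigma_1^2,\sigma_{2,1}^2=\mathcal{O}(1)$. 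However, these estimators are unbiased only for $\nabla J_{\mathsf H}$ and $\nabla^2 J_{\mathsf H}$, not for $\nabla J$ and $\nabla^2 J$, so I would run SCRN on the truncated surrogate and treat the truncation gap as additional deterministic noise.

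Next I would replay the per-iteration recursion of Lemma~\ref{lemma0001} applied to $J$, splitting the estimator errors as
\begin{align*}
\|\nabla J(\theta_t)-\gv_t\|\le \|\nabla J(\theta_t)-\nabla J_{\mathsf H}(\theta_t)\|+\|\nabla J_{\mathsf H}(\theta_t)-\gv_t\|\le D_g\gamma^{\mathsf H}+\|\nabla J_{\mathsf H}(\theta_t)-\gv_t\|,
\end{align*}
and analogously for the Hessian via $D_H\gamma^{\mathsf H}$, using Lemma~\ref{LS_yields_truncation}. Choosing $\mathsf H=\Theta(\log(1/\epsilon)/(1-\gamma))$ makes both truncation biases $\mathcal{O}(\epsilon)$, so the recursion becomes exactly the one treated in Theorem~\ref{th1_expectation} (for $\alpha=1$) up to a constant factor in the noise, and weak gradient dominance (Assumption~\ref{relaxed weak PL}) plays the role of Assumption~\ref{assump:4} but contributes an extra additive $\epsilon'$ at the step where one converts $\|\nabla J\|$ control into function-gap control. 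Carrying the $\epsilon'$ through the telescoping argument yields $J^*-\mathbb{E}[J(\theta_T)]\le \epsilon+\epsilon'$ after $T=\mathcal{O}(\epsilon^{-1/2})$ iterations, exactly mirroring the $\alpha=1$ case of Theorem~\ref{th1_expectation}.

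Finally I would count samples. Each iteration uses $m=\max(n_1,n_2)$ trajectories of length $\mathsf H$. From Theorem~\ref{th1_expectation} with $\alpha=1$, when $\epsilon'=0$ the gradient batch dominates and $n_1=\mathcal{O}(\epsilon^{-2})$, $n_2=\mathcal{O}(\epsilon^{-1})$, giving $T\cdot m\cdot\mathsf H=\tilde{\mathcal{O}}(\epsilon^{-1/2}\cdot\epsilon^{-2})=\tilde{\mathcal{O}}(\epsilon^{-2.5})$. For $\epsilon'>0$, noting that only an accuracy of order $\epsilon'$ is required from the stochastic part (the $\epsilon'$ slack already dominates), the batch size can be relaxed to $n_1=\mathcal{O}(\epsilon'^{-2})$, yielding $\tilde{\mathcal{O}}(\epsilon^{-1/2}\epsilon'^{-2})$. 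The main technical obstacle I anticipate is the middle step: cleanly showing that the $\epsilon'$ slack in the weak gradient dominance property only adds to the final accuracy and does not degrade the rate of contraction in the recursion, which requires carefully separating ``progress'' iterations (where $\|\nabla J(\theta_t)\|$ is large enough that the gradient-dominance term dominates $\epsilon'$) from ``terminal'' iterations, and arguing that reaching the latter already implies $J^*-J(\theta_t)\le\epsilon+\epsilon'$.
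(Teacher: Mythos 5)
Your proposal follows essentially the same route as the paper's proof: establish Lipschitz Hessian of $J$ from Assumptions \ref{LS} and \ref{Lip Hes}, apply the recursion of Lemma \ref{lemma0001} (with $F=-J$), split the gradient/Hessian errors into truncation bias (controlled via Lemma \ref{LS_yields_truncation} by taking $\mathsf{H}=\Theta(\log(1/\epsilon')/\log(1/\gamma))$) plus stochastic estimation error, and reuse the $\alpha=1$ convergence analysis of Theorem \ref{th1_expectation} with the stated batch sizes. The obstacle you flag at the end is not actually an issue: the paper simply folds $\epsilon'$, the truncation biases, and the variance terms into a single stationary value $P(m,\epsilon')$ and shows $\delta_t:=(J^*-\mathbb{E}J(\theta_t)-P(m,\epsilon'))/C^3$ obeys the clean recursion $\delta_{t+1}\le(\delta_t-\delta_{t+1})^{2/3}$, so no case split between progress and terminal iterations is needed.
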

\begin{remark}
 Under weak gradient dominance property with $\alpha=1$ (Assumption \ref{relaxed weak PL}), it has been shown that the sample complexity of SPG is  $\tilde{\mathcal{O}}(\epsilon^{-3})$ in case of $\epsilon'=0$ and $\tilde{\mathcal{O}}(\epsilon^{-1}\epsilon'^{-2})$ in case of $\epsilon'>0$ \citep[Theorem C.1]{yuan2021general}. Therefore, SCRN improves upon the best-known sample complexity of  SPG in both cases $\epsilon'=0$ and $\epsilon'>0$ by a factor of $\mathcal{O}(\epsilon^{-0.5})$.
\end{remark}
\begin{remark}
Having access to exact gradient and Hessian (the deterministic case), under Assumption \ref{relaxed weak PL}, PG algorithm achieves  global convergence (i.e., $J^{*}-J(\theta_{T})\le \epsilon$) with $\tilde{\mathcal{O}}(\epsilon^{-1})$ iterations \cite{yuan2021general} while CRN requires $\tilde{\mathcal{O}}(\epsilon^{-0.5})$ iterations.
\end{remark}

\begin{remark}\label{remark_vr_scrn_RL}
Under the same assumptions as in Theorem \ref{th_RL_SCRN_PL_alpha=1} and Assumption \ref{assump:individual:main} for $\alpha=1$ and bounded variance of importance sampling weights (See Assumption \ref{bounded_var_IS}), a variance-reduced version of SCRN (See Algorithm~\ref{algorithm3}) achieves global convergence (i.e., $J^{*}-\mathbb{E}[J(\theta_{T})]\le \epsilon$) with a sample complexity of $\tilde{\mathcal{O}}(\epsilon^{-2})$. See Appendix \ref{vr-scrn_RL_append} for a proof.
\end{remark}
\vspace{-4mm}
\section{Experiments}
\vspace{-2mm}






In this section, we evaluate the performance of SCRN in the two following RL settings. First, we consider grid world environments with finite state and action spaces, and next some robotic control tasks with continuous state and action spaces.  
The details of the implementations for all methods and some additional experiments appear in the appendix and the codes are available in the supplementary material.

\textbf{Environments with finite state and action spaces:}
We consider two grid world environments in our experiments: cliff walking \citep[Example 6.6]{sutton2018reinforcement}, and random mazes \cite{mazelab}. In cliff walking, the agent's aim is to reach a goal state from a start state, avoiding a region of cells called ``cliff''. The episode is terminated if the agent enters the cliff region, or the number of steps exceeds $100$ without reaching the goal.
Moreover, we consider a soft-max tabular policy in the experiments of this part. Indeed, it has been shown that a variant of gradient dominance property with $\alpha=1$ holds for soft-max tabular policy in environments with finite state and action spaces \cite{mei2020global}. 

\begin{figure*}[t]
    \centering
    \includegraphics[width=0.7\textwidth]{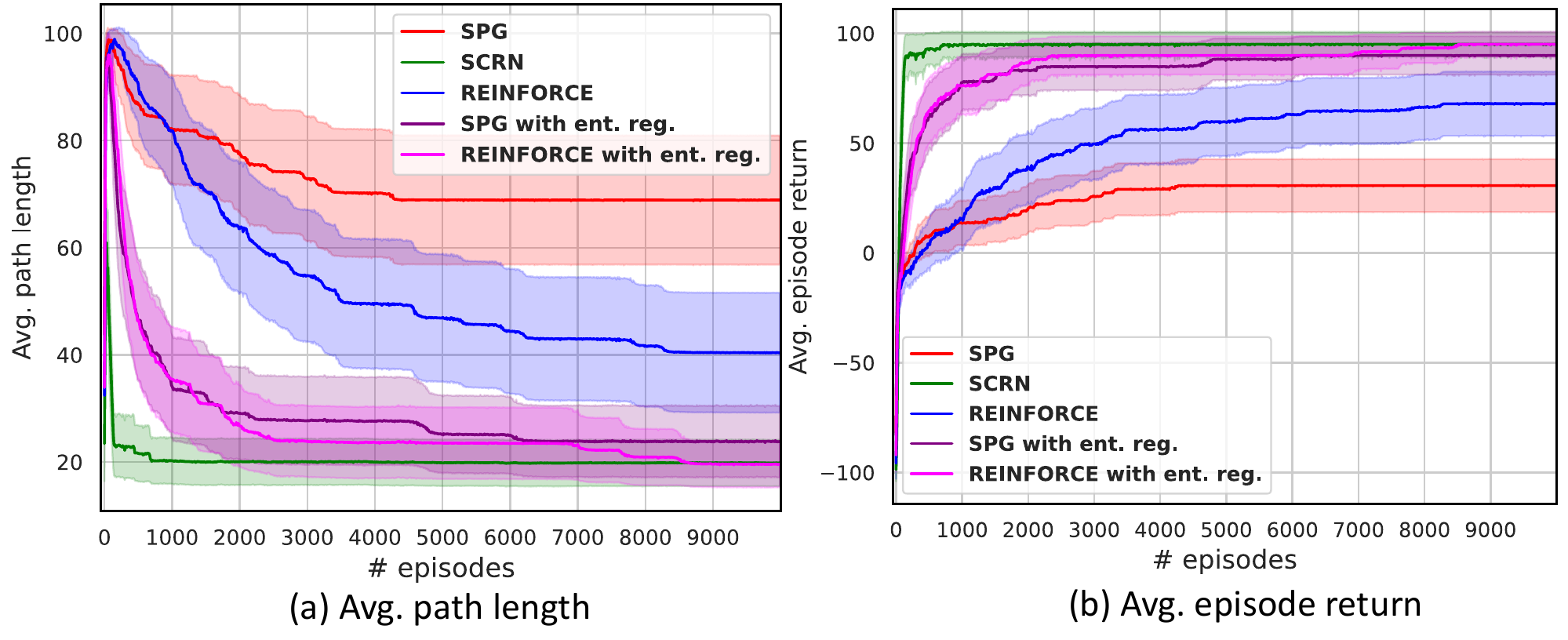}
    \caption{Comparison of SCRN with first-order methods in cliff walking environment.  The percentages of successful instances for SPG, SCRN, REINFORCE, SPG with entropy regularization, and REINFORCE with entropy regularization are $32.8\%$, $100\%$, $54.7\%$, $100\%$, and $92.2\%$, respectively.
    }
    \label{fig:cliff}
\end{figure*}
\vspace{-1.5mm}
\begin{figure}
    \centering
    \includegraphics[width=0.75\textwidth]{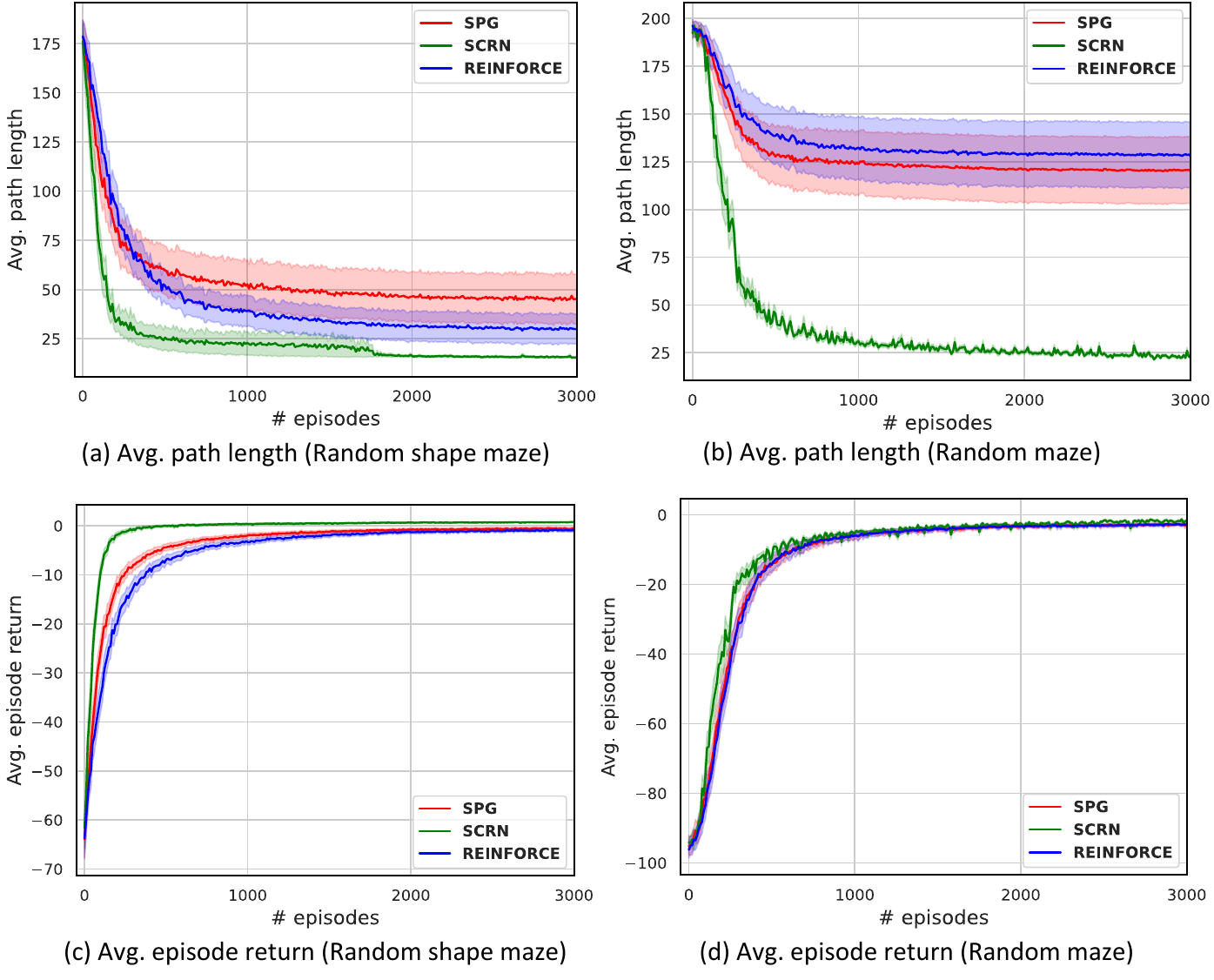}
    \caption{Comparison of SCRN with first-order methods in maze environments. In random shape maze, the percentages of successful instances for SPG, SCRN, and REINFORCE are $86\%$, $100\%$, and $95.3\%$, respectively. In random maze, the percentages of successful instances for SPG, SCRN, and REINFORCE are $45.3\%$, $97\%$, and $40.6\%$ respectively.}
    \label{fig:maze}
    \vspace{-1.5mm}
\end{figure}
We compare SCRN with two existing first-order methods: vanilla SPG and REINFORCE \cite{williams1992simple}. For the first-order methods, we use a time-varying learning rate and tune the parameters. To improve the performance of the first-order methods, we also add an entropy regularization term to the reward function.

In Fig. \ref{fig:cliff}, the average length of paths traversed by the agent and the average episode return are depicted against the number of episodes for each method. The results are averaged over 64 instances and the shaded region shows the $90\%$ confidence interval. 
As can be seen in Fig.~\ref{fig:cliff} (a), all the algorithms have a phase at the beginning during which the agent falls off the cliff in most episodes and the average length of paths are small. Then, the agent learns to avoid the cliff but could still not reach the goal in most cases. Finally, it finds a path to the goal and tries to reduce the path length. Note that SCRN finds the path very quickly and significantly outperforms the other algorithms. In fact, SPG and REINFORCE get stuck in the start state for some period of time, while SCRN easily escapes the flat plateau (for more details, please see the demonstrations in the supplementary file). The performance of SPG and REINFORCE  improves with entropy regularization, but SCRN still outperforms them. Moreover, SPG and REINFORCE fail to reach the goal in some instances while SCRN is successful in almost all instances.  In the captions of figures, for each algorithm, we also provide the percentage of instances in which the agent reached the goal. Please note that these percentages are obtained based on the parameters from the last update of each algorithm. 
\begin{figure*}[t]
\vspace{-1.5mm}
    \centering
    \includegraphics[width=0.75\textwidth]{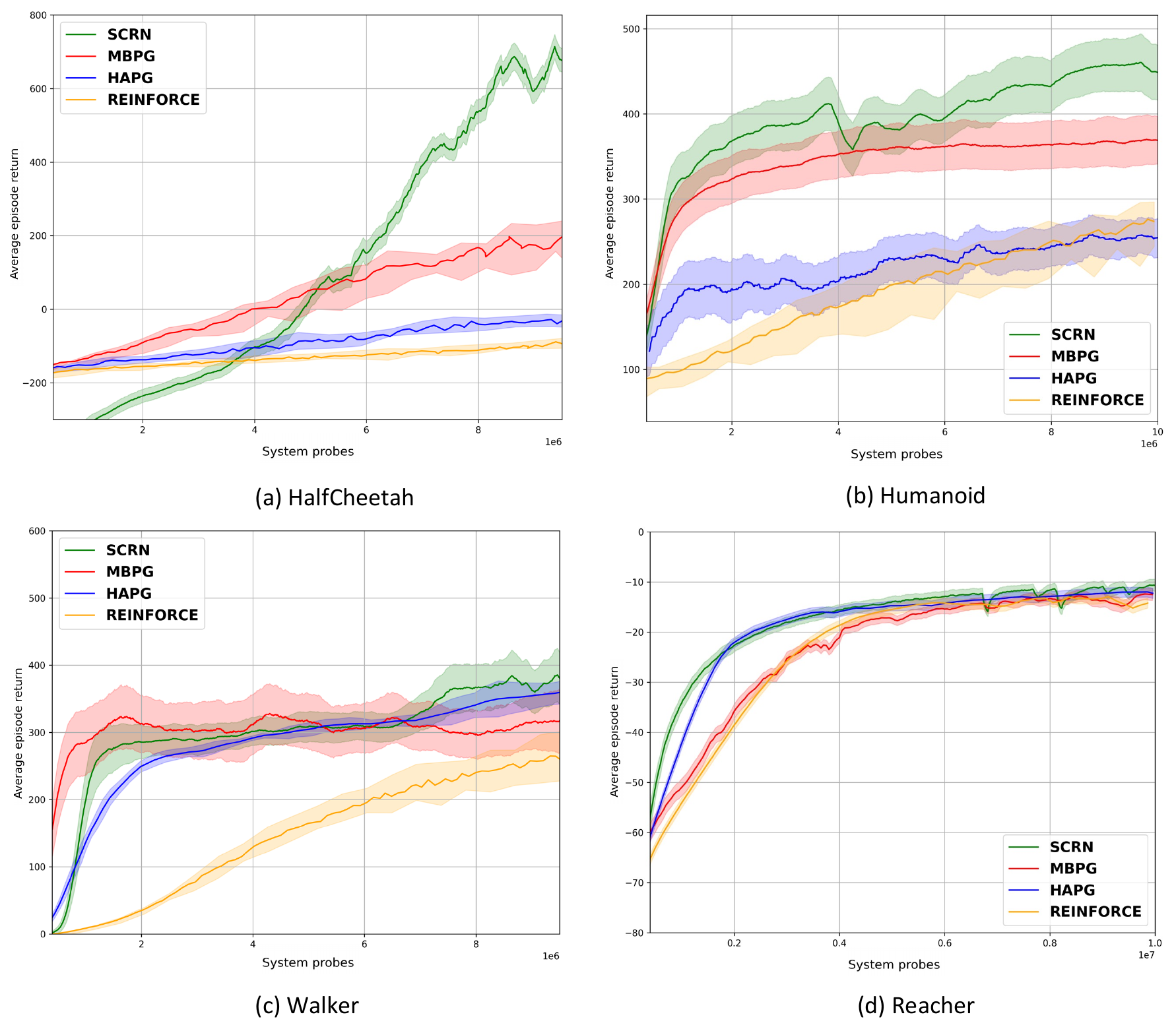}
    \caption{Comparison of SCRN with REINFORCE and variance-reduced SPG methods in MuJoCo environments.}
    \label{fig:MuJoCo}
    \vspace{-2mm}
\end{figure*}


Additionally, we studied the performance of the three aforementioned algorithms on a random maze and a random shape maze \cite{mazelab}. 
In the random shape maze, random shape blocks are placed on a grid and the agent tries to reach the goal state finding the shortest path. 
As shown in Fig. \ref{fig:maze}, SCRN again outperforms the first-order methods in both environments. In Appendix \ref{app:further resutls}, we also provided results for SPG and REINFROCE with entropy regularization which have slightly better performance.

\textbf{Environments with continuous state and action spaces:}
We consider the following control tasks in MuJoCo simulator \cite{todorov2012mujoco}: Walker, Humanoid, Reacher, and HalfCheetah. We compare SCRN with  first-order methods such as REINFORCE, and two state-of-the-art representatives of variance-reduced PG methods, HAPG \cite{shen2019hessian} and MBPG \cite{huang2020momentum}, both with guaranteed convergence to $\epsilon$-FOSP in general non-convex settings. HAPG uses second order information (Hessian vector products) for variance reduction and MBPG is a recent work based on STORM, a batch-free state-of-the-art variance reduction approach \cite{cutkosky2019momentum}.

We report average episode return against system probes as our performance measure. That is, the number of observed state-action pairs (see Fig. \ref{fig:MuJoCo}).
At each point, we run the trained policy $10$ times and compute the empirical estimate of the mean and the $90\%$ confidence interval of the episode return. As seen in Fig. \ref{fig:MuJoCo}, SCRN outperforms the other methods, especially in more complex environments such as HalfCheetah and Humanoid. In Appendix \ref{app:further resutls}, we also provided results for variance-reduced SCRN which improves upon SCRN in Humanoid and Reacher environments.
\vspace{-4mm}
\section{Conclusion}
\vspace{-3mm}
We studied the performance of SCRN for objectives satisfying the gradient dominance property for $1\leq \alpha\leq2$, which holds in various machine learning applications. We showed that SCRN improves the best-known sample complexity of SGD. The largest improvement is in the case of $\alpha=1$. Moreover, for $\alpha=1$, the average sample complexity of SCRN can be reduced to $\mathcal{O}(\epsilon^{-2})$ by utilizing a variance reduction method with time-varying batch sizes. A weak version of gradient dominance for $\alpha=1$ is satisfied in some policy-based RL settings. In the RL setting, we showed that SCRN achieves the same improvement over SPG under the weak version of gradient dominance property for $\alpha=1$. 
\vspace{-0.5cm}

\section{Acknowledgement}
\vspace{-0.3cm}
The authors would like to thank Mohammadsadegh Khorasani for conducting the experiments for continuous state-action environments. This research is partially supported by National Centre of Competence in Research (NCCR), grant agreement no. 51NF40\_180545.

\bibliographystyle{plainnat}
\bibliography{reference}

\newpage
\section*{Checklist}


\begin{enumerate}

\item For all authors...
\begin{enumerate}
  \item Do the main claims made in the abstract and introduction accurately reflect the paper's contributions and scope?
    \answerYes{}
  \item Did you describe the limitations of your work?
    \answerYes{In all the statements in the paper (theorems, lemmas,...),  we explicitly mentioned what are the assumptions under which the statements holds.}
  \item Did you discuss any potential negative societal impacts of your work?
    \answerNo{This is a theoretical work and it does not have any meaningful negative societal impacts.}
  \item Have you read the ethics review guidelines and ensured that your paper conforms to them?
    \answerYes{}
\end{enumerate}

\item If you are including theoretical results...
\begin{enumerate}
  \item Did you state the full set of assumptions of all theoretical results?
    \answerYes{}
        \item Did you include complete proofs of all theoretical results?
    \answerYes{}
\end{enumerate}

\item If you ran experiments...
\begin{enumerate}
  \item Did you include the code, data, and instructions needed to reproduce the main experimental results (either in the supplemental material or as a URL)?
    \answerYes{The codes with the instructions are available as the supplementary material.}
  \item Did you specify all the training details (e.g., data splits, hyperparameters, how they were chosen)?
    \answerYes{We discuss them in the appendix.}
        \item Did you report error bars (e.g., with respect to the random seed after running experiments multiple times)?
    \answerYes{We run our experiments on multiple instances and reported confidence intervals.}
        \item Did you include the total amount of compute and the type of resources used (e.g., type of GPUs, internal cluster, or cloud provider)?
    \answerYes{We mentioned them in the details of experiments in the appendix.}
\end{enumerate}

\item If you are using existing assets (e.g., code, data, models) or curating/releasing new assets...
\begin{enumerate}
  \item If your work uses existing assets, did you cite the creators?
    \answerYes{We cited them in the experiments section.}
  \item Did you mention the license of the assets?
    \answerNA{The codes that were used are publicly available.}
  \item Did you include any new assets either in the supplementary material or as a URL?
    \answerYes{We provided the implementations of our algorithms in a supplemental material.}
  \item Did you discuss whether and how consent was obtained from people whose data you're using/curating?
    \answerNA{}
  \item Did you discuss whether the data you are using/curating contains personally identifiable information or offensive content?
    \answerNA{}
\end{enumerate}

\item If you used crowdsourcing or conducted research with human subjects...
\begin{enumerate}
  \item Did you include the full text of instructions given to participants and screenshots, if applicable?
    \answerNA{}
  \item Did you describe any potential participant risks, with links to Institutional Review Board (IRB) approvals, if applicable?
    \answerNA{}
  \item Did you include the estimated hourly wage paid to participants and the total amount spent on participant compensation?
    \answerNA{}
\end{enumerate}

\end{enumerate}


\newpage
\appendix

\section{Appendix}
\subsection{Proofs of Section \ref{subsec_SCRN_PL_alpha_(1,3/2)}}
We first provide some lemmas and then prove Theorem \ref{th1_expectation}.

\begin{lemma}\label{lemma_norm_op_summation_of_iid_matrix}
Let $\Yv_{1},\ldots,\Yv_{N}$ be centered symmetric random $d\times d$ matrices. Then
\begin{align}
    \left(\mathbb{E}\left\|\sum_{i=1}^{n}\Yv_{i}\right\|^{p}\right)^{\frac{1}{p}}\le 2\sqrt{e\cdot\max\{p,2\log(d)\}}\left(\mathbb{E}\left\|\sum_{i=1}^{n}\Yv_{i}^{2}\right\|^{p/2}\right)^{\frac{1}{p}}
\end{align}
\end{lemma}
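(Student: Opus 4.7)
I would combine the classical symmetrization trick with the non--commutative Khintchine (Lust--Piquard--Pisier) inequality, and then convert from the Schatten $p$-norm back to the operator norm at the cost of a dimensional factor $d^{1/p}$, which is absorbed into the $\sqrt{e\max(p,2\log d)}$ constant.

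\textbf{Step 1 (symmetrization).} First I would introduce iid Rademacher signs $\varepsilon_i\in\{\pm1\}$ independent of $(\Yv_i)$. Since each $\Yv_i$ is centered, the standard argument using an independent copy $\Yv_i'$ and Jensen's inequality gives
$\bigl(\mathbb{E}\bigl\|\sum_i\Yv_i\bigr\|^p\bigr)^{1/p}\le 2\bigl(\mathbb{E}\bigl\|\sum_i\varepsilon_i\Yv_i\bigr\|^p\bigr)^{1/p}$.

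\textbf{Step 2 (noncommutative Khintchine).} Conditional on $(\Yv_i)$, I would upper bound the operator norm by the Schatten $p$-norm via $\|M\|^p\le \tr|M|^p$ (valid for any self-adjoint $M$, since $|M|^p$ is PSD with top eigenvalue $\|M\|^p$) and invoke Lust--Piquard--Pisier for symmetric matrix Rademacher sums,
$\mathbb{E}_\varepsilon \tr\bigl|\textstyle\sum_i\varepsilon_i\Yv_i\bigr|^p \le (2p-1)^{p/2}\,\tr\bigl(\textstyle\sum_i\Yv_i^2\bigr)^{p/2}$.
Taking $\mathbb{E}_Y$ and $p$-th roots yields $\bigl(\mathbb{E}\bigl\|\sum_i\varepsilon_i\Yv_i\bigr\|^p\bigr)^{1/p}\le \sqrt{2p-1}\,\bigl(\mathbb{E}\tr Q^{p/2}\bigr)^{1/p}$, where $Q=\sum_i \Yv_i^2$.

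\textbf{Step 3 (trace to operator norm).} Next I would use the crude bound $\tr Q^{p/2}\le d\,\|Q\|^{p/2}$, which holds because $Q$ is a $d\times d$ PSD matrix, and chain together Steps 1--2 to get
$\bigl(\mathbb{E}\bigl\|\textstyle\sum_i\Yv_i\bigr\|^p\bigr)^{1/p}\le 2\sqrt{2p-1}\cdot d^{1/p}\cdot\bigl(\mathbb{E}\|Q\|^{p/2}\bigr)^{1/p}$.

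\textbf{Step 4 (dimensional factor).} Finally I would dispatch the $d^{1/p}$ prefactor by a dichotomy. If $p\ge 2\log d$, then $d^{1/p}=\exp((\log d)/p)\le e^{1/2}$, so the constant is $\le 2\sqrt{e(2p-1)}=O(\sqrt{ep})$, matching $2\sqrt{e\max(p,2\log d)}$. If $p<2\log d$, I would use Lyapunov's monotonicity $\|\cdot\|_{L^p}\le\|\cdot\|_{L^q}$ on the LHS to bump $p$ up to an even integer $q$ with $q\ge 2\log d$, apply Steps 1--3 at level $q$, and use $d^{1/q}\le e^{1/2}$ to land at the bound with $\sqrt{eq}=\sqrt{e\cdot 2\log d}=\sqrt{e\max(p,2\log d)}$.

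\textbf{Main obstacle.} The delicate part is Step 4. The factor $d^{1/p}$ from the trace-to-operator-norm conversion blows up as $p\to 0$, so the claim effectively asserts a moment inequality whose constant is the optimum of $\sqrt{p}\cdot d^{1/p}$ over admissible exponents; elementary calculus shows this minimum is $\sqrt{2e\log d}$, attained at $p=2\log d$, which is precisely the cutoff appearing in the $\max$. The interchange between the LHS moment level (raised via Lyapunov) and the RHS moment of $\|Q\|$ has to be handled carefully, and one also has to verify that Lust--Piquard--Pisier, which is cleanest for even integer exponents, extends to general real $p$ by monotonicity/interpolation. Minor bookkeeping on absolute constants (e.g.\ $\sqrt{2}$ factors from symmetrization and from $\sqrt{2p-1}$ vs.\ $\sqrt{p}$) will then settle the exact prefactor $2\sqrt{e\max(p,2\log d)}$.
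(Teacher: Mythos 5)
Your Steps 1--3, and the regime $p\ge 2\log d$ of Step 4, follow essentially the paper's route (symmetrization, noncommutative Khintchine, trace-to-operator-norm conversion), modulo the $\sqrt{2p-1}$ versus $\sqrt{q}$ constant that you flag. The genuine gap is in Step 4 for the regime $p<2\log d$, which is the relevant one in this paper (there $p=2\alpha\le 4$ while $d$ is large). Your proposed fix is to raise the left-hand moment from $p$ to $q=2\log d$ by Lyapunov and run Steps 1--3 at level $q$. But running the argument at level $q$ produces the bound
\begin{align*}
\Bigl(\mathbb{E}\bigl\|\textstyle\sum_i\Yv_i\bigr\|^{q}\Bigr)^{1/q}\le 2\sqrt{eq}\,\Bigl(\mathbb{E}\|Q\|^{q/2}\Bigr)^{1/q},
\end{align*}
and Lyapunov gives $\bigl(\mathbb{E}\|Q\|^{p/2}\bigr)^{1/p}\le\bigl(\mathbb{E}\|Q\|^{q/2}\bigr)^{1/q}$ --- the inequality points the wrong way, so the right-hand side you obtain dominates, rather than is dominated by, the quantity $\bigl(\mathbb{E}\|Q\|^{p/2}\bigr)^{1/p}$ appearing in the lemma. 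The chain does not close, and this is not a bookkeeping issue: the whole difficulty of the statement is precisely that the $\|Q\|$-moment must stay at level $p/2$.

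The paper's proof avoids this by never raising the outer moment over the $\Yv_i$. Conditionally on $(\Yv_i)$, it bounds the operator norm by the Schatten $q$-norm with $q=\max\{p,2\log d\}$, raises only the \emph{inner Rademacher} moment from $p$ to $q$ via Jensen, i.e.\ $\mathbb{E}_{\boldsymbol{\epsilon}}\|\cdot\|_q^p\le(\mathbb{E}_{\boldsymbol{\epsilon}}\|\cdot\|_q^q)^{p/q}$, applies matrix Khintchine at Schatten index $q$ to get $\sqrt{q}\,\|(\sum_i\Yv_i^2)^{1/2}\|_q$, and converts back with $\|\Av\|_q\le d^{1/q}\|\Av\|\le\sqrt{e}\,\|\Av\|$. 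The dimensional factor is thus $d^{1/q}$ rather than your $d^{1/p}$, while the outer expectation over $\Yv$ remains at level $p$ throughout, yielding exactly $\bigl(\mathbb{E}\|\sum_i\Yv_i^2\|^{p/2}\bigr)^{1/p}$ on the right. You would need to restructure Steps 2--4 along these lines (decoupling the Schatten index and the Rademacher moment, both set to $q$, from the $\Yv$-moment, kept at $p$) for the argument to go through.
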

The proof of Lemma \ref{lemma_norm_op_summation_of_iid_matrix} is given in Appendix \ref{proof_lemma_norm_op_summation_of_iid_matrix}.

In our setting, we are going to bound $ \mathbb{E}\left[\|\Hv_{t}-\nabla^{2}F(\xv_{t})\|^{2\alpha}\right]$ for $\alpha\ge 1$ using Lemma \ref{lemma_norm_op_summation_of_iid_matrix}
\begin{align}
    &\mathbb{E}\|\Hv_{t}-\nabla^{2}F(\xv)\|^{2\alpha}= \mathbb{E}\left\|\frac{1}{n_{2}}\sum_{i=1}^{n_{2}}(\nabla^{2}f(\xv,\xi_{i})-\nabla^{2}F(\xv))\right\|^{2\alpha}\nonumber\\
    &\le 2^{2\alpha}(e\cdot \max\{2\alpha,\log d\})^{\alpha}\mathbb{E}\left\|\frac{1}{n^{2}_{2}}\sum_{i=1}^{n_{2}}(\nabla^{2}f(\xv,\xi_{i})-\nabla^{2}F(\xv))^{2}\right\|^{\alpha}\label{10028}\\
    &=\frac{2^{2\alpha}(e\cdot \max\{2\alpha,\log d\})^{\alpha}}{n^{\alpha}_{2}}\mathbb{E}\left\|\frac{1}{n_{2}}\sum_{i=1}^{n_{2}}(\nabla^{2}f(\xv,\xi_{i})-\nabla^{2}F(\xv))^{2}\right\|^{\alpha}\nonumber\\
    &\le \frac{2^{2\alpha}(e\cdot \max\{2\alpha,\log d\})^{\alpha}}{n^{\alpha}_{2}}\frac{1}{n_{2}}\sum_{i=1}^{n_{2}}\mathbb{E}\left\|(\nabla^{2}f(\xv,\xi_{i})-\nabla^{2}F(\xv))^{2}\right\|^{\alpha}\label{10029}\\
    &\le \frac{2^{2\alpha}(e\cdot \max\{2\alpha,\log d\})^{\alpha}}{n^{\alpha}_{2}}\frac{1}{n_{2}}\sum_{i=1}^{n_{2}}\mathbb{E}\left\|(\nabla^{2}f(\xv,\xi_{i})-\nabla^{2}F(\xv))\right\|^{2\alpha}\label{10029.5}\\
    &\le \frac{2^{2\alpha}(e\cdot \max\{2\alpha,\log d\})^{\alpha}}{n^{\alpha}_{2}}\cdot\sigma^{2}_{2,\alpha}\label{10030}
\end{align}
where \eqref{10028} comes from Lemma \ref{lemma_norm_op_summation_of_iid_matrix} and \eqref{10029} is derived by the convexity of $\|X\|^{\alpha}$.  \eqref{10029.5} is obtained by $\|AB\|\le \|A\|\|B\|$ for the square matrices $A,B$. The last inequality is obtained from Assumption \ref{assump2}.
\begin{lemma}
\begin{enumerate}[(i).]
    \item Assume that function $F$ satisfies Assumption \ref{assump1} (Lipschitz Hessian). Then the solution of sub-problem, ${{\bf\Delta}_{t}}$, in Algorithm \ref{algorithm} (line 7), satisfies the following conditions:

\begin{align}
 &\|\nabla F(\xv_{t}+{\bf {\Delta}}_{t})\|\le  \frac{M+L_{2}}{2}\|{\bf {\Delta}}_{t}\|^{2}+\|\nabla F(\xv_{t})-\gv_{t}\|+\frac{\|\nabla^{2}F(\xv_{t})-\Hv_{t}\|^2}{2}+\frac{\|{\bf {\Delta}}_{t}\|^2}{2}, \label{eq:1} \\
&\frac{3M-2L_2-8}{12}\|{{\bf{\Delta}_{t}}}\|^3\leq F(\xv_t)-F(\xv_{t}+{{\bf{\Delta}_{t}}})+\frac{2\|\nabla F(\xv_t)-\gv_{t}\|^{3/2}}{3}+ \frac{\|\nabla^2 F(\xv_t)-\Hv_{t}\|^{3}}{6}.\label{eq:2}
\end{align}
\item With the same assumption in (i) and using Assumption \ref{assump:4} (gradient dominance property) for $\alpha\in[1,\infty)$, we have 
\begin{align}
&F(\xv_{t}+{\bf {\Delta}}_{t})-F(\xv^*)\leq \nonumber\\
&C(F(\xv_t)-F(\xv_{t}+{\bf {\Delta}}_{t}))^{2\alpha/3}+C_g\|\nabla F(\xv_t)-\gv_{t}\|^\alpha+C_{H}\|\nabla^2F(\xv_t)-\Hv_{t}\|^{2\alpha},\label{eq:3}
\end{align}
where $C,C_g,C_{H}>0$ are some constants depending on $M,L_2,$ and $\tau_F$.
\end{enumerate}
\label{lemma:two_ineq}
\end{lemma}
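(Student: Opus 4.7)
The plan is to prove parts (i) and (ii) in sequence, deriving part (ii) directly from the two inequalities of part (i) combined with the gradient dominance property.

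For the first inequality of part (i), I would start from the first-order optimality condition of the cubic subproblem, namely $\gv_t + \Hv_t \Delta_t + \frac{M}{2}\|\Delta_t\|\Delta_t = 0$, so that $\|\gv_t + \Hv_t \Delta_t\| = \frac{M}{2}\|\Delta_t\|^2$. Then write
\begin{equation*}
\nabla F(\xv_{t+1}) = \bigl[\nabla F(\xv_{t+1}) - \nabla F(\xv_t) - \nabla^2 F(\xv_t)\Delta_t\bigr] + (\nabla F(\xv_t) - \gv_t) + (\nabla^2 F(\xv_t) - \Hv_t)\Delta_t + (\gv_t + \Hv_t \Delta_t),
\end{equation*}
apply the triangle inequality, use the Lipschitz-Hessian bound $\frac{L_2}{2}\|\Delta_t\|^2$ on the Taylor remainder, bound the subproblem optimality term by $\frac{M}{2}\|\Delta_t\|^2$, and absorb the cross term $\|\nabla^2 F(\xv_t) - \Hv_t\|\,\|\Delta_t\|$ via AM-GM as $\tfrac{1}{2}\|\nabla^2 F(\xv_t) - \Hv_t\|^2 + \tfrac{1}{2}\|\Delta_t\|^2$.

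For the second inequality of part (i), I would start from Taylor's expansion $F(\xv_{t+1}) \leq F(\xv_t) + \langle \nabla F(\xv_t),\Delta_t\rangle + \tfrac{1}{2}\langle \Delta_t, \nabla^2 F(\xv_t)\Delta_t\rangle + \tfrac{L_2}{6}\|\Delta_t\|^3$. Split the gradient and Hessian inner products into ``stochastic part plus error,'' and then use the zero-vs-minimizer comparison $m_t(\Delta_t) \leq m_t(0) = 0$ for the subproblem, which gives $\langle \gv_t,\Delta_t\rangle + \tfrac{1}{2}\langle \Delta_t,\Hv_t\Delta_t\rangle \leq -\tfrac{M}{6}\|\Delta_t\|^3$. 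The error terms $\langle \nabla F(\xv_t)-\gv_t,\Delta_t\rangle$ and $\tfrac{1}{2}\langle \Delta_t,(\nabla^2 F(\xv_t)-\Hv_t)\Delta_t\rangle$ are controlled by Young's inequality with conjugate exponents $(3/2,3)$ and $(3,3/2)$ respectively, producing the $\|\cdot\|^{3/2}$ and $\|\cdot\|^3$ terms on the right-hand side while yielding extra multiples of $\|\Delta_t\|^3$ to be absorbed. Rearranging the resulting bound gives the claimed inequality, with the exact coefficient $\tfrac{3M-2L_2-8}{12}$ determined by the specific Young's splits.

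For part (ii), I would invoke Assumption~\ref{assump:4} to write $F(\xv_{t+1})-F(\xv^*) \leq \tau_F \|\nabla F(\xv_{t+1})\|^\alpha$, then raise the first inequality of (i) to the power $\alpha \geq 1$ and apply $(a+b+c+d)^\alpha \leq 4^{\alpha-1}(a^\alpha+b^\alpha+c^\alpha+d^\alpha)$ to isolate terms of the form $\|\Delta_t\|^{2\alpha}$, $\|\nabla F(\xv_t)-\gv_t\|^\alpha$, and $\|\nabla^2 F(\xv_t)-\Hv_t\|^{2\alpha}$. The key reduction is to handle the $\|\Delta_t\|^{2\alpha}$ term by writing it as $(\|\Delta_t\|^3)^{2\alpha/3}$ and substituting the bound from the second inequality of part~(i); the exponents align exactly because $\frac{3}{2}\cdot\frac{2\alpha}{3}=\alpha$ and $3\cdot\frac{2\alpha}{3}=2\alpha$, which is precisely why the two inequalities in (i) combine to give the claimed form.

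The main obstacle is the power-inequality step for $p = 2\alpha/3 \in [2/3, 4/3]$: one needs a case split between $p \leq 1$ (where subadditivity $(a+b+c)^p \leq a^p + b^p + c^p$ holds with constant $1$) and $p \geq 1$ (where one must pay the factor $3^{p-1}$). Both cases still yield a uniform bound with constants $C,C_g,C_H$ depending only on $M$, $L_2$, $\tau_F$, and $\alpha$, which is what the statement requires; carefully tracking these constants through the Young's inequalities, the factor from $\tau_F$, and the inversion $\bigl(\tfrac{12}{3M-2L_2-8}\bigr)^{2\alpha/3}$ from step (i) yields the explicit forms referenced as \eqref{eq20}.
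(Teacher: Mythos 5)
Your plan for \eqref{eq:1} and for part (ii) matches the paper's proof essentially step for step: the same decomposition of $\nabla F(\xv_t+{\bf\Delta}_t)$ using the Taylor remainder, the stationarity identity $\|\gv_t+\Hv_t{\bf\Delta}_t\|=\frac{M}{2}\|{\bf\Delta}_t\|^2$, and Young's inequality on the cross term; and then, for (ii), raising \eqref{eq:1} to the power $\alpha$ via gradient dominance, splitting with a power-mean inequality (the paper groups the two $\|{\bf\Delta}_t\|^2$ terms into $\frac{M+L_2+1}{2}\|{\bf\Delta}_t\|^2$ and uses a $3^{\alpha-1}$ factor rather than your $4^{\alpha-1}$, a cosmetic difference), and substituting \eqref{eq:2} into $\|{\bf\Delta}_t\|^{2\alpha}=(\|{\bf\Delta}_t\|^3)^{2\alpha/3}$ exactly as you describe. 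For the exponent $2\alpha/3$ the paper avoids your case split by using the single inequality $(a+b+c)^{2\alpha/3}\le 3^{2\alpha/3-1/3}(a^{2\alpha/3}+b^{2\alpha/3}+c^{2\alpha/3})$, valid for all $\alpha\ge 1$; either handling works.

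The one substantive deviation is in \eqref{eq:2}. You bound the model-decrease term by the zero-vs-minimizer comparison $m_t({\bf\Delta}_t)\le m_t(\boldsymbol{0})=0$, which gives $\langle\gv_t,{\bf\Delta}_t\rangle+\frac12\langle{\bf\Delta}_t,\Hv_t{\bf\Delta}_t\rangle\le-\frac{M}{6}\|{\bf\Delta}_t\|^3$. The paper instead uses the Nesterov--Polyak conditions for the exact minimizer: the identity $\gv_t^{T}{\bf\Delta}_t+{\bf\Delta}_t^{T}\Hv_t{\bf\Delta}_t+\frac{M}{2}\|{\bf\Delta}_t\|^3=0$ together with $\Hv_t+\frac{M\|{\bf\Delta}_t\|}{2}I\succeq 0$, which forces $\gv_t^{T}{\bf\Delta}_t\le 0$ and hence $\langle\gv_t,{\bf\Delta}_t\rangle+\frac12\langle{\bf\Delta}_t,\Hv_t{\bf\Delta}_t\rangle\le-\frac{M}{4}\|{\bf\Delta}_t\|^3$. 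Your bound is strictly weaker ($-M/6$ versus $-M/4$), so after the same Young's splits you obtain the coefficient $\frac{2M-2L_2-8}{12}$ rather than the stated $\frac{3M-2L_2-8}{12}$; contrary to your remark, the exact coefficient is not recoverable from the Young's splits alone under your comparison. This does not affect the qualitative content of the lemma or part (ii), where only the existence of positive constants matters, but to reproduce \eqref{eq:2} as written you need the stationarity identity rather than $m_t({\bf\Delta}_t)\le 0$.
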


\begin{proof}

Proof of \eqref{eq:1}: We know from Lipschitzness of Hessian of $F$ that
\begin{align}
\|\nabla F(\xv_{t}+{\bf {\Delta}}_{t})-\nabla F(\xv_{t})-\nabla^{2}F(\xv_{t}){\bf {\Delta}}_{t}\|\le \frac{L_{2}}{2}\|{\bf {\Delta}}_{t}\|^{2}.
\end{align}
Moreover, ${\bf {\Delta}}_{t}$ is an optimal solution of the problem in Algorithm 1 (line 10) and therefore, it satisfies the optimality condition: $\gv_{t}+\Hv_{t}{\bf {\Delta}}_{t}+\frac{M}{2}\|{\bf {\Delta}}_{t}\|{\bf {\Delta}}_{t}=0$. Therefore, we have:
\begin{align}
    \|\gv_{t}+\Hv_{t}{\bf {\Delta}}_{t}\|=\frac{M}{2}\|{\bf {\Delta}}_{t}\|^{2}.
\end{align}
By summing the above two equations and using triangle inequality, we have:
\begin{align}
    \|\nabla F(\xv_{t}+{\bf {\Delta}}_{t})\|&\le \frac{M+L_{2}}{2}\|{\bf {\Delta}}_{t}\|^{2}+\|\nabla F(\xv_{t})-\gv_{t}\|+\|(\nabla^{2}F(\xv_{t})-\Hv_{t}){\bf {\Delta}}_{t}\|\nonumber\\
    &\le \frac{M+L_{2}}{2}\|{\bf {\Delta}}_{t}\|^{2}+\|\nabla F(\xv_{t})-\gv_{t}\|+\|\nabla^{2}F(\xv_{t})-\Hv_{t}\|\|{\bf {\Delta}}_{t}\|\\
    &\leq \frac{M+L_{2}}{2}\|{\bf {\Delta}}_{t}\|^{2}+\|\nabla F(\xv_{t})-\gv_{t}\|+\frac{\|\nabla^{2}F(\xv_{t})-\Hv_{t}\|^2}{2}+\frac{\|{\bf {\Delta}}_{t}\|^2}{2},
\end{align}
where the last inequality is due to Young's inequality. 
\\Proof of \eqref{eq:2}: For the sub-problem, we have from \citep{nesterov2006cubic} that
\begin{align}\label{10023}
    \gv_{t}^{T}{\bf {\Delta}}_{t}+{\bf {\Delta}}_{t}^{T}\Hv_{t}{\bf {\Delta}}_{t}+\frac{M}{2}\|{\bf {\Delta}}_{t}\|^{3}=0,\quad \Hv_{t}+\frac{M\|{{\bf {\Delta}}_{t}}\|}{2}I_{d\times d}\succeq 0
\end{align}
which yields 
\begin{equation}\label{10024}
    \gv_{t}^{T}{\bf {\Delta}}_{t}\le0.
\end{equation}
From Lipschitzness of Hessian, we have the following:
\begin{align}
    &F(\xv_{t}+{\bf {\Delta}}_{t})\le F(\xv_{t})+\langle \nabla F(\xv_{t}), {{\bf {\Delta}}_{t}}\rangle +\frac{1}{2}\langle {{\bf {\Delta}}_{t}}, \nabla^{2}F(\xv_{t}){{\bf {\Delta}}_{t}}\rangle+\frac{L_{2}}{6}\|{\bf {\Delta}}_{t}\|^{3}\\
    &=F(\xv_{t})+\langle \nabla F(\xv_{t})-\gv_{t}, {{\bf {\Delta}}_{t}}\rangle +\frac{1}{2}\langle {{\bf {\Delta}}_{t}}, (\nabla^{2}F(\xv_{t})-\Hv_{t}){{\bf {\Delta}}_{t}}\rangle+\gv_{t}^{T}{\bf {\Delta}}_{t}+\frac{1}{2}{\bf {\Delta}}_{t}^{T}\Hv_{t}{\bf {\Delta}}_{t}+\frac{L_{2}}{6}\|{\bf {\Delta}}_{t}\|^{3}\nonumber\\
    &\overset{(a)}{\le} F(\xv_{t})+\langle \nabla F(\xv_{t})-\gv_{t}, {{\bf {\Delta}}_{t}}\rangle +\frac{1}{2}\langle {{\bf {\Delta}}_{t}}, (\nabla^{2}F(\xv_{t})-\Hv_{t}){{\bf {\Delta}}_{t}}\rangle+\frac{2L_{2}-3M}{12}\|{{\bf {\Delta}}_{t}}\|^{3}\\
    &\overset{(b)}{\le}F(\xv_{t})+\|\nabla F(\xv_{t})-\gv_{t}\|_{2} \|{{\bf {\Delta}}_{t}}\|_{2} +\frac{1}{2} \|\nabla^{2}F(\xv_{t})-\Hv_{t}\|\|{{\bf {\Delta}}_{t}}\|_{2}^{2}+\frac{2L_{2}-3M}{12}\|{{\bf {\Delta}}_{t}}\|^{3}\\
    &\overset{(c)}\leq F(\xv_t)+\frac{2\|\nabla F(\xv_t)-\gv_{t}\|^{3/2}}{3}+ \frac{\|\nabla^2 F(\xv_t)-\Hv_{t}\|^{3}}{6}+\frac{8+2L_{2}-3M}{12}\|{{\bf {\Delta}}_{t}}\|^{3},
\end{align}
where (a) comes from \eqref{10023} and \eqref{10024}, (b) comes from Cauchy-Schwartz inequality and (c) is due to Young's inequality. 
\\Proof of \eqref{eq:3}: Based on gradient dominance property and  \eqref{eq:1}, we have:
\begin{equation}\label{10044}
    \begin{split}
        &(F(\xv_{t}+{\bf {\Delta}}_{t})-F(\xv^*))\leq \tau_F\|\nabla F(\xv_t+{\bf {\Delta}}_{t})\|^\alpha\\
        &\leq \tau_F\Big(\frac{M+L_{2}+1}{2}\|{\bf {\Delta}}_{t}\|^{2}+\|\nabla F(\xv_{t})-\gv_{t}\|+\frac{\|\nabla^{2}F(\xv_{t})-\Hv_{t}\|^2}{2}\Big)^\alpha\\
        &\leq  3^{\alpha-1}\tau_F\Big[\Big(\frac{M+L_2+1}{2}\Big)^\alpha\|{\bf {\Delta}}_{t}\|^{2\alpha}+\|\nabla F(\xv_{t})-\gv_{t}\|^{\alpha}+\frac{\|\nabla^{2}F(\xv_{t})-\Hv_{t}\|^{2\alpha}}{2^{\alpha}}\Big]\\
    \end{split}
\end{equation}
where we used $(a+b+c)^\alpha\leq 3^{\alpha-1}(a^\alpha+b^\alpha+c^\alpha)$, $\forall a,b,c>0$ in the last two above inequalities.

Substituting \eqref{eq:2} into \eqref{10044}, we have:
\begin{small}
\begin{equation}
    \begin{split}
        &F(\xv_{t}+{\bf {\Delta}}_{t})-F(\xv^*)\leq\\ &3^{\alpha-1}\tau_F\Big[ \Big(\frac{M+L_2+1}{2}\Big)^\alpha\Big(\frac{12}{3M-2L_2-8}\Big)^{2\alpha/3}\Big(F(\xv_{t})-F(\xv_{t}+{\bf {\Delta}}_{t})+\frac{2\|\nabla F(\xv_t)-\gv_{t}\|^{3/2}}{3}+ \\
       &\qquad\qquad\frac{\|\nabla^2 F(\xv_t)-\Hv_{t}\|^{3}}{6}\Big)^{2\alpha/3}+ \|\nabla F(\xv_t)-\gv_{t}\|^\alpha+\frac{\|\nabla^2F(\xv_t)-\Hv_{t}\|^{2\alpha}}{2^{\alpha}}\Big]\\
     &\qquad\qquad\leq C(F(\xv_t)-F(\xv_{t}+{\bf {\Delta}}_{t}))^{2\alpha/3}+C_g\|\nabla F(\xv_t)-\gv_{t}\|^\alpha+C_{H}\|\nabla^2F(\xv_t)-\Hv_{t}\|^{2\alpha},
    \end{split}
\end{equation}
\end{small}
where in the last inequality we used $(a+b+c)^{2\alpha/3}\le3^{2\alpha/3-1/3}(a^{2\alpha/3}+b^{2\alpha/3}+c^{2\alpha/3}) $ which is derived by the following inequalities: \[(a+b+c)^{2\alpha}\leq 3^{2\alpha-1}(a^{2\alpha}+b^{2\alpha}+c^{2\alpha})\le 3^{2\alpha-1}(a^{2\alpha/3}+b^{2\alpha/3}+c^{2\alpha/3})^{3}\]
for any $a,b,c\in \mathbb{R^{+}}$ and,
\begin{equation}\label{eq20}
    \begin{split}
      C&=3^{(5\alpha-4)/3}\tau_F \Big(\frac{M+L_2+1}{2}\Big)^\alpha\Big(\frac{12}{3M-2L_2-8}\Big)^{2\alpha/3}
       \\
     C_g&=2^{2\alpha/3}\times3^{\frac{5\alpha-7}{3}}\tau_F \Big(\frac{M+L_2+1}{2}\Big)^\alpha\Big(\frac{12}{3M-2L_2-8}\Big)^{2\alpha/3}+3^{\alpha-1}\tau_{F}\\
     C_{H}&=2^{-2\alpha/3}\times3^{\frac{5\alpha-7}{3}}\tau_F \Big(\frac{M+L_2+1}{2}\Big)^\alpha\Big(\frac{12}{3M-2L_2-8}\Big)^{2\alpha/3}+3^{\alpha-1}2^{-\alpha}\tau_{F}
    \end{split}
\end{equation}
and the proof is complete.
\end{proof}
\textbf{Theorem 1.}
\textit{Let $F(\xv)$ satisfy Assumptions \ref{assump1} and \ref{assump:4} for a given $\alpha$ and the stochastic gradient and Hessian  satisfy Assumption \ref{assump2} for the same $\alpha$. Moreover, assume that an exact solver for sub-problem \eqref{sub-problem} exists. Then Algorithm \ref{algorithm} outputs a point $\xv_{T}$ such that $\mathbb{E}[F(\xv_{T})]-F(\xv^{*})\le \epsilon$ after $T$ iterations, where
    \begin{enumerate}[(i)]
        \item if $\alpha\in[1,3/2)$,
   $T=\mathcal{O}(\epsilon^{-\frac{3-2\alpha}{2\alpha}})$, with access to the following numbers of samples of the stochastic gradient and Hessian per iteration:
    \begin{align}\label{eq009}
       n_{1}\ge\frac{{C_{g}}^{2/\alpha}}{C^{6/\alpha}}\cdot\frac{2^{2/\alpha}\sigma_{1}^{2/\alpha}}{\epsilon^{2/\alpha}},\quad n_{2}\ge\frac{{C'_{H}}^{1/\alpha}}{C^{3/\alpha}}\cdot\frac{2^{1/\alpha}\sigma_{2,\alpha}^{2/\alpha}}{\epsilon^{1/\alpha}},
    \end{align}
    where $C'_{H}$ is defined in \eqref{def_C'_{H}} and depends on $\log(d)$.
    \item if $\alpha=3/2$, 
     $T=\mathcal{O}\left(\log(1/\epsilon)\right)$ with the same numbers of samples per iteration as in \eqref{eq009}.
\end{enumerate}}
\subsubsection{Proof of Theorem \ref{th1_expectation}}\label{proof_theorem_SCRN_pL_alpha=1}

\textbf{Part (i)}: By Lemma \ref{lemma:two_ineq}, we have:
\begin{align*}
       &F(\xv_{t}+{\bf {\Delta}}_{t})-F(\xv^*)\leq C(F(\xv_t)-F(\xv_{t}+{\bf {\Delta}}_{t}))^{2\alpha/3}+C_g\|\nabla F(\xv_t)-\gv_{t}\|^\alpha+C_{H}\|\nabla^2F(\xv_t)-\Hv_{t}\|^{2\alpha}
\end{align*}
Taking expectation of both sides given $\xv_{t}$
\begin{align}
&\mathbb{E}[F(\xv_{t}+{\bf {\Delta}}_{t})-F(\xv^*)]\le\nonumber\\ &C(\mathbb{E}F(\xv_t)-\mathbb{E}F(\xv_{t}+{\bf {\Delta}}_{t}))^{2\alpha/3}+C_g\mathbb{E}\|\nabla F(\xv_t)-\gv_{t}\|^\alpha+C_{H}\mathbb{E}\|\nabla^2F(\xv_t)-\Hv_{t}\|^{2\alpha}\nonumber\\
&{\leq} C(\mathbb{E}F(\xv_t)-\mathbb{E}F(\xv_{t}+{\bf {\Delta}}_{t}))^{2\alpha/3}+C_g\frac{\sigma_{1}^{\alpha}}{n_{1}^{\alpha/2}}+C_{H}\mathbb{E}\|\nabla^2F(\xv_t)-\Hv_{t}\|^{2\alpha}\label{10047}\\
&{\le} C(\mathbb{E}F(\xv_t)-\mathbb{E}F(\xv_{t}+{\bf {\Delta}}_{t}))^{2\alpha/3}+C_g\frac{\sigma_{1}^{\alpha}}{n_{1}^{\alpha/2}}+C_{H}\frac{2^{2\alpha}(e\cdot \max\{2\alpha,\log d\})^{\alpha}}{n^{\alpha}_{2}}\cdot\sigma^{2}_{2,\alpha}\label{10048}.
\end{align}
where \eqref{10047} comes from the following facts that, $$\mathbb{E}[\|\nabla F(\xv_t)-\gv_{t}\|^{\alpha}]\le (\mathbb{E}[\|\nabla F(\xv_t)-\gv_{t}\|^{2}])^{\alpha/2}\le \frac{\sigma_{1}^{\alpha}}{n_{1}^{\alpha/2}},$$
which is comes from Jensen's inequality. $\mathbb{E}[A^{2\alpha/3}]\le (\mathbb{E}[A])^{2\alpha/3}$. Inequality \eqref{10048} 
is derived by Equation \eqref{10030} for bounding the Hessian error.
Let define
\begin{align}\label{def_C'_{H}}
    C'_{H}:=C_{H}2^{2\alpha}(e\cdot \max\{2\alpha,\log d\})^{\alpha}
\end{align}
and $P(n_{1},n_{2}):=\frac{C_g\sigma_{1}}{n_{1}^{\alpha/2}}+\frac{C'_{H}\sigma_{2,\alpha}^{2}}{n_{2}^{\alpha}}$ and $\delta_{t}:=\frac{\mathbb{E}[F(\xv_{t})]-F(\xv^*)}{C^{3/(3-2\alpha)}}-\frac{P(n_{1},n_{2})}{C^{3/(3-2\alpha)}}$. Then rewrite Inequality \eqref{10048} as follows,
\begin{align}
   \delta_{t+1}\le (\delta_{t}-\delta_{t+1})^{2\alpha/3}.
\end{align}

First we show that $\delta_{t}\to 0$ when $t\to \infty$ and then with good choices of batch sizes, $P(n_{1},n_{2})\to 0$. Let $h(t):=\frac{2\alpha}{3-2\alpha}t^{1-\frac{3}{2\alpha}}$ . To show $\delta_{t}\to 0$, We have two cases: 

Case (1): Assume that $\delta_{t+1}\ge {2}^{-\frac{2\alpha}{3}}\delta_{t}$ for some $t$. Then
\begin{align}
    &h(\delta_{t+1})-h(\delta_{t})=\int_{\delta_{t}}^{\delta_{t+1}}\frac{d}{dt}h(t)dt=\int^{\delta_{t}}_{\delta_{t+1}}t^{-\frac{3}{2\alpha}}dt\ge (\delta_{t}-\delta_{t+1})\delta_{t}^{-\frac{3}{2\alpha}}\nonumber\\
    &\ge (\delta_{t}-\delta_{t+1})\frac{1}{2}\delta_{t+1}^{-\frac{3}{2\alpha}}\ge \frac{1}{2}
\end{align}
Case (2): If we have $\delta_{t+1}\le {2}^{-\frac{2\alpha}{3}}\delta_{t}$ for some $t\ge {0}$, $\delta_{t+1}^{1-\frac{3}{2\alpha}}\ge 2^{\frac{3-2\alpha}{3}}\delta_{t}^{1-\frac{3}{2\alpha}}$.
\begin{align}
    &h(\delta_{t+1})-h(\delta_{t})=\frac{2\alpha}{3-2\alpha}(\delta_{t+1}^{1-\frac{3}{2\alpha}}-\delta_{t}^{1-\frac{3}{2\alpha}})\ge \frac{2\alpha}{3-2\alpha}(2^{\frac{3-2\alpha}{3}}-1)\delta_{t}^{1-\frac{3}{2\alpha}}\nonumber\\
    &\ge \frac{2\alpha}{3-2\alpha}(2^{\frac{3-2\alpha}{3}}-1)\delta_{0}^{1-\frac{3}{2\alpha}}.
\end{align}
Let define $D:=\min\{\frac{1}{2},\frac{2\alpha}{3-2\alpha}(2^{\frac{3-2\alpha}{3}}-1)\delta_{0}^{1-\frac{3}{2\alpha}}\}$. Then we have
\[
h(\delta_{t+1})-h(\delta_{t})\ge D
\]
which implies 
\[
h(\delta_{T})\ge \sum_{t=1}^{T}h(\delta_{t})-h(\delta_{t-1})\ge D\cdot T.
\]
Then we get
\begin{align}\label{eq_recursion_exp_alpha<3/2}
    \delta_{T}\le \left(\frac{2\alpha}{3-2\alpha}\right)^{\frac{2\alpha}{3-2\alpha}}\frac{1}{(DT)^{\frac{2\alpha}{3-2\alpha}}}.
\end{align}
Hence, $F(x_{T})-F(x^{*})$ converges to a stationary point $P(n_{1},n_{2})$ at a rate of $\mathcal{O}\left(\frac{1}{T^{\frac{2\alpha}{3-2\alpha}}}\right)$. We can choose $n_{1}\ge\frac{4^{2/\alpha}C^{2/\alpha}_{g}\sigma_{1}^{2/\alpha}}{C^{\frac{2}{\alpha}\cdot\frac{3}{3-2\alpha}}\epsilon^{2/\alpha}}$, and $n_{2}\ge\frac{4^{1/\alpha}C'^{1/\alpha}_{H}\sigma_{2,\alpha}^{2/\alpha}}{C^{\frac{1}{\alpha}\cdot\frac{3}{3-2\alpha}}\epsilon^{1/\alpha}}$ to have $P(n_{1},n_{2})\le C^{3/(3-2\alpha)}\frac{\epsilon}{2}$.
With $T\ge\frac{2\alpha}{3-2\alpha}\frac{1}{D(\epsilon/2)^{\frac{3-2\alpha}{2\alpha}}}$, we have $\delta_{T}\le \epsilon/2$ and then $ {\mathbb{E}[F(\xv_{T})]-F(\xv^*)}\le C^{\frac{3}{3-2\alpha}}\epsilon$.  Finally, the total sample complexity for having $ {\mathbb{E}[F(\xv_{T})]-F(\xv^*)}\le\epsilon$ would be
\[
T\cdot (n_{1}+n_{2})=\left(\frac{2\alpha}{3-2\alpha}D\frac{C^{3/(2\alpha)}}{(\epsilon/2)^{\frac{3-2\alpha}{2\alpha}}}\right)\left(\frac{4^{2/\alpha}C^{2/\alpha}_{g}\sigma_{1}^{2/\alpha}}{\epsilon^{2/\alpha}}+\frac{4^{1/\alpha}C'^{1/\alpha}_{H}\sigma_{2,\alpha}^{2/\alpha}}{\epsilon^{1/\alpha}}\right)=\mathcal{O}\left(\frac{1}{\epsilon^{\frac{7}{2\alpha}-1}}\right).
\]

\textbf{Part (ii)}:
By Lemma \ref{lemma:two_ineq}, we have:
\begin{align}
       F(\xv_{t}+{\bf {\Delta}}_{t})-F(\xv^*)\leq C(F(\xv_t)-F(\xv_{t}+{\bf {\Delta}}_{t}))+C_g\|\nabla F(\xv_t)-\gv_{t}\|^{3/2}+C_{H}\|\nabla^2F(\xv_t)-\Hv_{t}\|^{3}.
\end{align}
Taking expectation of both sides given $\xv_{t}$ and using the same arguments as in \eqref{10047} and \eqref{10048}, we get
\begin{align}
 \mathbb{E}[F(\xv_{t}+{\bf {\Delta}}_{t})-F(\xv^*)]\leq C(\mathbb{E}F(\xv_t)-\mathbb{E}F(\xv_{t}+{\bf {\Delta}}_{t}))+C_g\frac{\sigma_{1}^{\alpha}}{n_{1}^{3/4}}+C'_{H}\frac{\sigma_{2,1.5}^{2}}{n_{2}^{3/2}}.
\end{align}
Let define $\delta_{t}:=\mathbb{E}[F(\xv_{t})]-F(\xv^*)-C_g\frac{\sigma_{1}}{n_{1}^{3/4}}-C'_{H}\frac{\sigma_{2,1.5}^{2}}{n_{2}^{3/2}}$ and rewrite the above inequality as follows,
\begin{align}
   \delta_{t+1}\le C(\delta_{t}-\delta_{t+1}).
\end{align}
We get $\delta_{T}\le\left( \frac{C}{C+1}\right)^{T}\delta_{0}$. Thus, $F(x_{T})-F(x^{*})$ converges to a stationary point
\[
{C_g}\frac{\sigma_{1}}{n_{1}^{3/4}}+{C'_{H}}\frac{\sigma_{2,1.5}^{2}}{n_{2}^{3/2}}
\]
at a rate of $\left(\frac{C}{C+1}\right)^{T}\cdot\delta_{0}$.
In order to have $ {\mathbb{E}[F(\xv_{T})]-F(\xv^*)}\le\epsilon$, we can choose $T\ge \frac{\log(2\delta_{0}/\epsilon)}{\log\left(\frac{C+1}{C}\right)}$, $n_{1}\ge\frac{C^{4/3}_{g}}{C^{4}}\frac{4^{4/3}\sigma_{1}^{4/3}}{\epsilon^{4/3}}$, and $n_{2}\ge\frac{C'^{2/3}_{H}}{C^{2}}\frac{4^{2/3}\sigma_{2,1.5}^{4/3}}{\epsilon^{2/3}}$. Finally, the total sample complexity would be
\[
T\cdot (n_{1}+n_{2})=\frac{\log\left(\frac{2\delta_{0}}{\epsilon}\right)}{\log\left(\frac{C+1}{C}\right)}\cdot\left(\frac{C^{4/3}_{g}}{C^{4}}\frac{4^{4/3}\sigma_{1}^{4/3}}{\epsilon^{4/3}}+\frac{C'^{2/3}_{H}}{C^{2}}\frac{4^{2/3}\sigma_{2,1.5}^{4/3}}{\epsilon^{2/3}}\right).
\]
\subsubsection{Proof of Lemma \ref{lemma_norm_op_summation_of_iid_matrix}}\label{proof_lemma_norm_op_summation_of_iid_matrix}
The proof is mainly adapted from the symmetrization argument \citep{wainwright2019high,chen2011masked}. Consider $\Yv'_{i}$ as an independent copy of $\Yv_{i}$ for $i=1,\ldots,n$. Then
\begin{align}
    &\mathbb{E}\left\|\sum_{i=1}^{n}\Yv_{i}\right\|^{p}=\mathbb{E}\left\|\sum_{i=1}^{n}\mathbb{E}_{\Yv'_{i}}(\Yv_{i}-\Yv'_{i})\right\|^{p}\nonumber\\
    &\le \mathbb{E}_{\Yv}\mathbb{E}_{\Yv'}\left\|\sum_{i=1}^{n}(\Yv_{i}-\Yv'_{i})\right\|^{p}=\mathbb{E}\left\|\sum_{i=1}^{n}\epsilon_{i}(\Yv_{i}-\Yv'_{i})\right\|^{p}\label{10057}\\
    &\le \mathbb{E}\left[2^{p-1}\left(\left\|\sum_{i=1}^{n}\epsilon_{i}\Yv_{i}\right\|^{p}+\left\|\sum_{i=1}^{n}\epsilon_{i}\Yv'_{i}\right\|^{p}\right)\right]=2^{p}\mathbb{E}\left\|\sum_{i=1}^{n}\epsilon_{i}\Yv_{i}\right\|^{p}\label{10058}
\end{align}
where \eqref{10057} comes from the fact that $\Yv_{i}-\Yv'_{i}$ has the same distribution as $\Yv'_{i}-\Yv_{i}$ and $\epsilon_{i}$'s are Rademacher random variables (i.e. $\mathbb{P}[\epsilon_{i}=-1]=\mathbb{P}[\epsilon_{i}=1]=\frac{1}{2}$). \eqref{10058} comes from the inequalities $\|\Av-\Hv\|\le \|\Av\|+\|\Hv\|$ and $\left(\frac{a+b}{2}\right)^{q}\le \frac{a^{q}+b^{q}}{2}$. Hence,
\begin{align*}
    \left[\mathbb{E}\left\|\sum_{i=1}^{n}\Yv_{i}\right\|^{p}\right]^{\frac{1}{p}}\le 2\left[\mathbb{E}\left\|\sum_{i=1}^{n}\epsilon_{i}\Yv_{i}\right\|^{p}\right]^{\frac{1}{p}}
\end{align*}
Let define the Schatten $p$-norm of a matrix $\Av$ as $\|\Av\|_{p}:=\left(\Tr[(\Av^{T}\Av)^{p/2}]\right)^{\frac{1}{p}}:=\left(\sum_{i\ge1}s^{p}_{i}(A)\right)^{\frac{1}{p}}$ where $s_{i}(A)$'s are the singular value of $\Av$. Schatten $\infty$-norm of a matrix is its operator norm by the definition ($\|\Av\|=\|\Av\|_{\infty}$). Then for $q\ge p$, we have
\begin{align}
     &\left[\mathbb{E}\left\|\sum_{i=1}^{n}\Yv_{i}\right\|^{p}\right]^{\frac{1}{p}}\le 2\left[\mathbb{E}\left\|\sum_{i=1}^{n}\epsilon_{i}\Yv_{i}\right\|^{p}\right]^{\frac{1}{p}}\nonumber\\
     &\le 2\left[\mathbb{E}\left\|\sum_{i=1}^{n}\epsilon_{i}\Yv_{i}\right\|_{q}^{p}\right]^{\frac{1}{p}}\le 2\left[\mathbb{E}_{\Yv}\left(\mathbb{E}_{\boldsymbol{\epsilon}}\left\|\sum_{i=1}^{n}\epsilon_{i}\Yv_{i}\right\|_{q}^{q}\right)^{p/q}\right]^{\frac{1}{p}}\label{10060}
\end{align}
where \eqref{10060} comes from the fact that $\left(\mathbb{E}\|X\|^{p}\right)^{1/p}\le \left(\mathbb{E}\|X\|^{q}\right)^{1/q}$ for $q\ge p$. 

The matrix Khintchine inequality is as follows:
\begin{lemma}\citep{mackey2014matrix}\label{matrix Khintchine inequality_lemma}
Suppose $q>2$ and consider the deterministic, symmetric matrices $\Av_{i},\,1\le i\le n$. Then
\begin{align*}
    \left(\mathbb{E}_{\boldsymbol{\epsilon}}\left\|\sum_{i=1}^{n}\epsilon_{i}\Av_{i}\right\|_{q}^{q}\right)^{1/q}\le \sqrt{q}\left\|\left[\sum_{i=1}^{n}\Av_{i}^{2}\right]^{1/2}\right\|_{q}
\end{align*}
\end{lemma}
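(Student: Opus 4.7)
Since the lemma is cited from Mackey et al. (2014), my plan is to sketch their method of exchangeable pairs, which yields a non-commutative Khintchine inequality with the stated constant $\sqrt{q}$. Write $S := \sum_{i=1}^n \epsilon_i \Av_i$. Construct an exchangeable pair $(S, S')$ by drawing $I \in \{1,\ldots,n\}$ uniformly and an independent Rademacher $\epsilon'_I$, then setting $S' := S - (\epsilon_I - \epsilon'_I) \Av_I$. Exchangeability $(S,S') \stackrel{d}{=} (S',S)$ is immediate from the i.i.d.\ structure of the signs. A direct calculation, using $(\epsilon_i - \epsilon'_i)^2 \in \{0,4\}$ each with probability $1/2$, gives the Stein kernel
$$\Delta \;:=\; \tfrac{n}{2}\, \mathbb{E}\bigl[(S-S')^2 \mid S\bigr] \;=\; \sum_{i=1}^n \Av_i^2,$$
which, crucially, is deterministic.

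The next step is to invoke the matrix Stein moment inequality of Mackey et al.: for any exchangeable pair of symmetric matrices whose Stein kernel is $\Delta$, and for every integer $p \geq 1$,
$$\mathbb{E}\, \Tr(S^{2p}) \;\leq\; (2p-1)^{p}\, \mathbb{E}\, \Tr(\Delta^{p}).$$
Specializing with $\Delta = \Sigma := \sum_i \Av_i^2$ (deterministic), the right-hand side equals $(2p-1)^p\, \Tr(\Sigma^p) = (2p-1)^p\, \bigl\|\Sigma^{1/2}\bigr\|_{2p}^{2p}$. Taking $(2p)$-th roots yields the claim for $q = 2p$ with the constant $\sqrt{2p-1} \leq \sqrt{q}$. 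For arbitrary real $q > 2$, I would extend by complex interpolation on the Schatten scale, which preserves the bound up to an arbitrarily small multiplicative loss and thereby covers the range $q \geq 2$ uniformly.

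The main obstacle is the matrix Stein moment inequality itself. Its proof uses exchangeability as an integration-by-parts identity, namely $\mathbb{E}\, \Tr(S^{2p} - (S')^{2p}) = 0$, expanded via a telescoping sum in $(S-S')$ through the cyclic invariance of the trace. One then applies H\"older's inequality in the Schatten scale to factor out $\Delta$ from the resulting bilinear expression; this is where the sharp constant $\sqrt{2p-1}$ arises. All of the combinatorial work of the classical Lust-Piquard/Buchholz approach (pair partitions, Stirling) is compressed into this one Stein-kernel step. Once the Stein moment inequality is granted, specialization to Rademacher sums is essentially a one-line computation because the kernel is deterministic; the technical core lies entirely in the exchangeable-pair moment bound, which would need to be either imported as a black box or reproved with a careful Hilbert-space-valued martingale argument.
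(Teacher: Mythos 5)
The paper does not prove this lemma: it is imported verbatim from \citet{mackey2014matrix} (their matrix Burkholder--Davis--Gundy / Stein moment bound) and used as a black box inside the proof of the moment bound for sums of centered random matrices. So there is no in-paper argument to compare against; what you have written is a reconstruction of the cited source, and it is essentially faithful to it. Your exchangeable-pair construction and the Stein-kernel computation are correct: $\mathbb{E}[S-S'\mid S]=\tfrac{1}{n}S$ gives $\lambda=1/n$, and $\tfrac{n}{2}\mathbb{E}[(S-S')^2\mid S]=\sum_i \Av_i^2$ is indeed deterministic, so Theorem 7.1 of Mackey et al.\ immediately yields $\bigl(\mathbb{E}\|S\|_{2p}^{2p}\bigr)^{1/(2p)}\le\sqrt{2p-1}\,\bigl\|\bigl(\sum_i\Av_i^2\bigr)^{1/2}\bigr\|_{2p}$ for integer $p$. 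The one soft spot is the passage from even integers $q=2p$ to general real $q>2$, which the lemma as stated (and as used in the paper, with $q=\max\{p,2\log d\}$ typically non-integer) requires. Interpolating between two widely separated even exponents does \emph{not} preserve the constant $\sqrt{q}$: the geometric mean of $\sqrt{2p_0-1}$ and $\sqrt{2p_1-1}$ is controlled by the arithmetic mean of the exponents, whereas the interpolated Schatten exponent is the harmonic mean, which can be much smaller. You would need to interpolate only between \emph{consecutive} even integers (where a short computation shows the constant stays below $\sqrt{q_\theta}$), or simply accept an absolute-constant loss, which is harmless for the paper's application since all such constants are absorbed into $\mathcal{O}(\cdot)$ anyway. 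With that caveat made precise, your sketch is a valid account of the result being cited.
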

Assume that $q=\max\{p,2\log d\}\ge p$. From \eqref{10060},
\begin{align}
    &\left[\mathbb{E}\left\|\sum_{i=1}^{n}\Yv_{i}\right\|^{p}\right]^{\frac{1}{p}}\le 2\left[\mathbb{E}_{\Yv}\left(\mathbb{E}_{\boldsymbol{\epsilon}}\left\|\sum_{i=1}^{n}\epsilon_{i}\Yv_{i}\right\|_{q}^{q}\right)^{p/q}\right]^{\frac{1}{p}}\nonumber\\
    &\le 2\sqrt{q}\left[\mathbb{E}_{\Yv}\left\|\left(\sum_{i=1}^{n}\Yv_{i}^{2}\right)^{1/2}\right\|_{q}^{p}\right]^{\frac{1}{p}}\label{10062}\\
    &\le 2\sqrt{q}\left[\mathbb{E}_{\Yv}\left(d^{\frac{1}{q}}\left\|\left(\sum_{i=1}^{n}\Yv_{i}^{2}\right)^{1/2}\right\|\right)^{p}\right]^{\frac{1}{p}}\label{10063}\\
    &\le 2\sqrt{eq}\left[\mathbb{E}_{\Yv}\left(\left\|\left(\sum_{i=1}^{n}\Yv_{i}^{2}\right)^{1/2}\right\|\right)^{p}\right]^{\frac{1}{p}}\label{10064}\\
    &= 2\sqrt{eq}\left[\mathbb{E}_{\Yv}\left(\left\|\sum_{i=1}^{n}\Yv_{i}^{2}\right\|\right)^{p/2}\right]^{\frac{1}{p}}\label{10065}
\end{align}
where \eqref{10062} is derived by Lemma \ref{matrix Khintchine inequality_lemma} and \eqref{10063} comes from $\|\Av\|_{q}\le d^{\frac{1}{q}}\|A\|$. \eqref{10064} is due to $d^{1/q}\le d^{\frac{1}{2\log d}}\le \sqrt{e}$ and \eqref{10065} follows from $\|\Av\|^{2}=\lambda_{\max}(\Av^{T}\Av)=\lambda_{\max}(\Av^{2})=\|\Av^{2}\|$ when $\Av$ is positive semi-definite ($\Av\succeq 0$).
\subsection{Proofs of Section \ref{subsec_SCRN_PL_alpha_(3/2,2)}}

We begin by providing some lemmas that will be used in the main proof.

\begin{lemma}\label{berns_lemma}
Under Assumptions \ref{assump2} and \ref{assump2.5}, we can adjust gradient and Hessian mini-batch sizes
\begin{align*}
    &n_{1}\ge \frac{8}{3}\max\left(\frac{M_{1}}{{\epsilon}},\frac{\sigma_{1}^{2}}{\epsilon^{2}}\right)\log\frac{2d}{\delta},\\
    &n_{2}\ge \frac{8}{3}\max\left(\frac{M_{2}}{\sqrt{\epsilon}},\frac{\sigma_{2}^{2}}{\epsilon}\right)\log\frac{2d}{\delta},
\end{align*}
such that with probability at least $1-\delta$,
\begin{align}
    &\|\gv_{t}-\nabla F(\xv_{t})\|\le \epsilon,\\
     &\|\Hv_{t}-\nabla^{2} F(\xv_{t})\|\le \sqrt{\epsilon}.
\end{align}
\end{lemma}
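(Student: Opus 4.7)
The lemma is essentially a direct application of the matrix Bernstein inequality to each of the two estimators, with an adjustment for the symmetric dilation in the gradient case. The plan is as follows.

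First, I would handle the gradient bound. Let $\Yv_i := \Gv(\xv_t,\xi_i)$ denote the symmetric $(d+1)\times(d+1)$ dilations of the centered gradient samples, which by Assumption~\ref{assump2.5} are i.i.d.\ centered Hermitian matrices satisfying the matrix Bernstein condition \eqref{bernstein condition_equation} with parameter $M_1$. A key observation is that $\|\sum_i \Yv_i/n_1\|$ equals $\|\gv_t-\nabla F(\xv_t)\|$, so controlling the operator norm of the dilated sum is equivalent to controlling the Euclidean norm of the gradient error. Moreover, the variance proxy satisfies $\|\mathbb{E}[\Yv_1^2]\| = \mathbb{E}[\|\gv(\xv_t,\xi)\|^2] \le \sigma_1^2$ by Assumption~\ref{assump2}. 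Applying the standard matrix Bernstein tail bound (Tropp, Theorem~6.2) to the $n_1$ i.i.d.\ summands yields
\begin{equation*}
\Pr\!\left(\|\gv_t-\nabla F(\xv_t)\|\ge \epsilon\right) \;\le\; 2(d+1)\,\exp\!\left(-\frac{n_1\epsilon^2/2}{\sigma_1^2+M_1\epsilon/3}\right).
\end{equation*}
Setting this at most $\delta/2$ and using the elementary bound $a+b\le 2\max(a,b)$ on the denominator reduces the requirement to $n_1 \ge (8/3)\max(M_1/\epsilon,\sigma_1^2/\epsilon^2)\log(2d/\delta)$, which matches the stated threshold (absorbing the $d+1$ versus $d$ into constants).

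Second, I would repeat this argument for the Hessian, now without any dilation: the centered Hessian samples $\Hv(\xv_t,\xi_i)$ are already i.i.d.\ centered symmetric $d\times d$ matrices satisfying Bernstein's condition with parameter $M_2$, and $\|\mathbb{E}[\Hv(\xv_t,\xi)^2]\|\le \mathbb{E}\|\Hv(\xv_t,\xi)\|^2\le \sigma_{2,1}^2 =: \sigma_2^2$ via Assumption~\ref{assump2} with $\alpha=1$ and Jensen/operator-norm monotonicity. Matrix Bernstein then gives
\begin{equation*}
\Pr\!\left(\|\Hv_t-\nabla^2 F(\xv_t)\|\ge \sqrt{\epsilon}\right) \;\le\; 2d\,\exp\!\left(-\frac{n_2\epsilon/2}{\sigma_2^2+M_2\sqrt{\epsilon}/3}\right),
\end{equation*}
and the same algebraic manipulation with $t=\sqrt{\epsilon}$ in place of $\epsilon$ yields the stated bound on $n_2$. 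A union bound over the two failure events concludes the proof, each event being controlled at level $\delta/2$ (or $\delta$, up to harmless constants in the $\log$).

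The only real subtlety, rather than an obstacle, is the symmetric-dilation trick needed to convert the vector-valued gradient estimate into a Hermitian matrix so that matrix Bernstein applies; once that is in place, the two parts are parallel and the rest is just matching the sufficient-batch-size condition obtained from the tail bound with the $(8/3)\max(\cdot,\cdot)\log(2d/\delta)$ formula given in the statement.
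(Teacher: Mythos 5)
Your proposal is correct and follows essentially the same route as the paper: the paper also applies the matrix Bernstein inequality (in the form of Wainwright, Theorem 6.17, rather than Tropp's version) to the symmetric dilation $\Gv(\xv_t,\xi)$ of the centered gradient and directly to the centered Hessian, verifies $\|\gv_t-\nabla F(\xv_t)\| = \|\tfrac{1}{n_1}\sum_i \Gv(\xv_t,\xi_i)\|$ via the eigenvalue identity for the dilation, and bounds the variance proxies by $\sigma_1^2$ and $\sigma_2^2$ exactly as you do. The only cosmetic difference is in the constants extracted from the exponent (the paper's derivation yields a factor $8$ rather than $8/3$), which does not affect the argument.
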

\begin{proof}[Proof of Lemma \ref{berns_lemma}]
Recall $\gv(\xv,\xi)=\nabla f(\xv,\xi)-\nabla F(\xv)$, $\Gv(\xv,\xi)=\begin{bmatrix}\boldsymbol{0}_{1\times 1} &\gv(\xv,\xi)^{T}\\
\gv(\xv,\xi)&\boldsymbol{0}_{d\times d}
\end{bmatrix}$, and $\Hv(\xv,\xi)=\nabla^{2}f(\xv,\xi)-\nabla^{2}F(\xv)$. 

We use the matrix Bernstein's inequality from \citep[Theorem 6.17]{wainwright2019high}
to control the estimation error in the stochastic gradients and stochastic Hessians under Assumptions \ref{assump2} and \ref{assump2.5}. For a given $\xv_{t}$,
\begin{equation}
\mathbb{P}\left[\left\|\frac{1}{n_{1}}\sum_{i=1}^{n_{1}}\Gv(\xv_{t},\xi_{i})\right\|\ge t \right]\le 2d\exp\left(\frac{-n_{1}t^{2}}{2(\sigma_{g}^{2}+M_{1}t)}\right)\le 2d\exp\left(-\frac{n_{1}}{8}\min\left\{\frac{t}{M_{1}},\frac{t^{2}}{\sigma_{g}^{2}}\right\}\right),
\end{equation}
where $M_{1}$ is the parameter of Bernstein's condition for $\Gv(\xv_{t},\xi_{i})$ in Assumption \ref{assump2.5} and
\[
\sigma_{g}^{2}:=\max\left\{\left\|\frac{1}{n_{1}}\sum_{i=1}^{n_{1}}\mathbb{E}[\gv(\xv_{t},\xi_{i})\gv(\xv_{t},\xi_{i})^{T}]\right\|_{2},\left\|\frac{1}{n_{1}}\sum_{i=1}^{n_{1}}\mathbb{E}[\gv(\xv_{t},\xi_{i})^{T}\gv(\xv_{t},\xi_{i})]\right\|_{2}\right\}.
\]
Note that $\sigma_{g}^{2}\le \sigma_{1}^{2}$ where $\sigma_{1}^{2}$ is the variance parameter in Assumption \ref{assump2}.
\begin{equation}
\mathbb{P}\left[\left\|\frac{1}{n_{2}}\sum_{i=1}^{n_{2}}\Hv(\xv_{t},\xi_{i})\right\|\ge t \right]\le 2d\exp\left(\frac{-n_{2}t^{2}}{2(\sigma_{2}^{2}+M_{2}t)}\right)\le 2d\exp\left(-\frac{n_{2}}{8}\min\left\{\frac{t}{M_{2}},\frac{t^{2}}{\sigma_{2}^{2}}\right\}\right).
\end{equation}
where $M_{2}$ is the parameter of Bernstein's condition for $\Hv(\xv_{t},\xi_{i})$ in Assumption \ref{assump2.5} and
\[
\sigma_{H}^{2}:=\left\|\frac{1}{n_{1}}\sum_{i=1}^{n_{1}}\mathbb{E}[\Hv^{2}(\xv_{t},\xi_{i})]\right\|.
\]
Note that $\sigma_{H}^{2}\le \sigma_{2}^{2}$ where $\sigma_{2}^{2}$ is the parameter for the bounded variance condition \ref{assump2}. 
First, we claim that for every $\vv\in \mathbb{R}^{d}$ we have
\begin{align}\label{lin_algebra_lemma}
    \|\vv\|_{2}=\left\|\begin{bmatrix}\boldsymbol{0}_{1\times 1}&\vv^{T}\\
\vv&\boldsymbol{0}_{d\times d}\end{bmatrix}\right\|
\end{align}

Using Inequality \eqref{lin_algebra_lemma}, we can obtain $\gv_{t}-\nabla F(\xv_{t})\le\frac{1}{n_{1}}\sum_{i=1}^{n}\Gv(\xv_{t},\xi_{i})$. Note that $\Hv_{t}-\nabla^{2}F(\xv_{t})=\frac{1}{n_{2}}\sum_{i=1}^{n}\Hv(\xv_{t},\xi_{i})$. Hence,
\begin{align*}
    &\mathbb{P}\left[\left\|\gv_{t}-\nabla F(\xv_{t})\right\|\ge t \right]\le 2d\exp\left(-\frac{n_{1}}{8}\min\left\{\frac{t}{M_{1}},\frac{t^{2}}{\sigma_{1}^{2}}\right\}\right)\\
    &\mathbb{P}\left[\left\|\Hv_{t}-\nabla^{2} F(\xv_{t})\right\|\ge t \right]\le 2d\exp\left(-\frac{n_{2}}{8}\min\left\{\frac{t}{M_{2}},\frac{t^{2}}{\sigma_{2}^{2}}\right\}\right)
\end{align*}

In other words, for
\begin{align}
     &n_{1}\ge 8\max\left(\frac{M_{1}}{{\epsilon}},\frac{\sigma_{1}^{2}}{\epsilon^{2}}\right)\log\frac{2d}{\delta},\\
    &n_{2}\ge 8\max\left(\frac{M_{2}}{\sqrt{\epsilon}},\frac{\sigma_{2}^{2}}{\epsilon}\right)\log\frac{2d}{\delta},
\end{align}
we have:
\begin{align}
    &\|\gv_{t}-\nabla F(\xv_{t})\|\le_{1-\delta} \epsilon,\\
    &\|\Hv_{t}-\nabla^{2} F(\xv_{t})\|\le_{1-\delta}\sqrt{\epsilon}.
\end{align}
The claims of Lemma \ref{berns_lemma} are established.

Proof of \eqref{lin_algebra_lemma}: For every symmetric matrix $\Av$ we have $|\lambda_{\max}(\Av)|=\sigma_{\max}(\Av)$. For matrix $\Av=\begin{bmatrix}\boldsymbol{0}_{1\times 1}&\vv^{T}\\
\vv&\boldsymbol{0}_{d\times d}\end{bmatrix}$ we have $\det(\lambda I-\Av)=\lambda^{d}-\lambda^{d-2}\|\vv\|^{2}=0$. Thus, $\lambda_{\max}=\|\vv\|$.


\end{proof}

\begin{lemma}\label{lemma1}
Under Assumptions \ref{assump1}, \ref{assump2}, and \ref{assump2.5} and for
\begin{align}
    &n_{1}\ge 8\max\left(\frac{M_{1}}{\epsilon^{1/\alpha}},\frac{\sigma_{1}^{2}}{\epsilon^{2/\alpha}}\right)\log\frac{2d}{\delta},\label{eq:n1}\\
    &n_{2}\ge 8\max\left(\frac{M_{2}}{{\epsilon^{1/(2\alpha)}}},\frac{\sigma_{2}^{2}}{\epsilon^{1/\alpha}}\right)\log\frac{2d}{\delta}
    \label{eq:n2}
\end{align}
where $\alpha\ge0$, we have for Algorithm \ref{algorithm}:
\begin{align}
    &\|\nabla F(\xv_{t+1})\|\le_{1-2\delta} \frac{3}{2}{\epsilon^{1/\alpha}}+\frac{M+L_{2}+1}{2}\|{\bf \Delta}_{t}\|^{2}.
\end{align}
\end{lemma}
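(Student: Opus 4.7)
The strategy is to combine the deterministic bound \eqref{eq:1} from Lemma \ref{lemma:two_ineq}(i) with high-probability concentration bounds on the two stochastic errors $\|\nabla F(\xv_t)-\gv_t\|$ and $\|\nabla^2 F(\xv_t)-\Hv_t\|$, which are supplied by Lemma \ref{berns_lemma} (applied at suitably rescaled tolerance levels).

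First, I would invoke Lemma \ref{berns_lemma} with tolerance $\epsilon^{1/\alpha}$ on the gradient side and tolerance $\epsilon^{1/(2\alpha)}$ on the Hessian side. These tolerances are exactly what is needed so that the terms $\|\nabla F(\xv_t)-\gv_t\|$ and $\tfrac{1}{2}\|\nabla^2 F(\xv_t)-\Hv_t\|^2$ appearing in \eqref{eq:1} are each of order $\epsilon^{1/\alpha}$. Substituting $\epsilon\leftarrow\epsilon^{1/\alpha}$ in the gradient batch size requirement of Lemma \ref{berns_lemma} recovers \eqref{eq:n1}, and substituting $\epsilon\leftarrow\epsilon^{1/(2\alpha)}$ in the Hessian batch-size requirement (while respecting that the variance term scales as $\sigma_2^2/(\epsilon^{1/(2\alpha)})^2=\sigma_2^2/\epsilon^{1/\alpha}$) recovers \eqref{eq:n2}. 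Hence, each of the events
\begin{align*}
\mathcal{E}_g:\ \|\gv_t-\nabla F(\xv_t)\|\le \epsilon^{1/\alpha},\qquad \mathcal{E}_H:\ \|\Hv_t-\nabla^2 F(\xv_t)\|\le \epsilon^{1/(2\alpha)}
\end{align*}
holds with probability at least $1-\delta$.

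Next, a union bound gives $\mathbb{P}[\mathcal{E}_g\cap \mathcal{E}_H]\ge 1-2\delta$. On this joint event, I plug the two bounds into inequality \eqref{eq:1} of Lemma \ref{lemma:two_ineq}(i):
\begin{align*}
\|\nabla F(\xv_{t+1})\|&\le \frac{M+L_2}{2}\|{\bf\Delta}_t\|^2+\|\nabla F(\xv_t)-\gv_t\|+\frac{\|\nabla^2 F(\xv_t)-\Hv_t\|^2}{2}+\frac{\|{\bf\Delta}_t\|^2}{2}\\
&\le \frac{M+L_2}{2}\|{\bf\Delta}_t\|^2+\epsilon^{1/\alpha}+\frac{\epsilon^{1/\alpha}}{2}+\frac{\|{\bf\Delta}_t\|^2}{2}\\
&=\frac{3}{2}\epsilon^{1/\alpha}+\frac{M+L_2+1}{2}\|{\bf\Delta}_t\|^2,
\end{align*}
which is exactly the claimed inequality, holding with probability at least $1-2\delta$.

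I don't anticipate a serious obstacle here: the proof is essentially bookkeeping. The one point to be careful about is matching the exponents of $\epsilon$ in the batch-size expressions \eqref{eq:n1}--\eqref{eq:n2} to the tolerances used in the concentration step, so that the Hessian term enters \eqref{eq:1} through its square and still contributes at level $\epsilon^{1/\alpha}$ (explaining the $\epsilon^{1/(2\alpha)}$ tolerance on the Hessian). Everything else is a direct application of Lemma \ref{berns_lemma}, a union bound, and Lemma \ref{lemma:two_ineq}(i).
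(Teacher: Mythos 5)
Your proposal is correct and follows essentially the same route as the paper: the paper's proof of this lemma simply re-derives inequality \eqref{eq:1} inline (Hessian-Lipschitz bound plus the exact optimality condition $\gv_t+\Hv_t{\bf\Delta}_t+\tfrac{M}{2}\|{\bf\Delta}_t\|{\bf\Delta}_t=0$, then Young's inequality on the cross term $\epsilon^{1/(2\alpha)}\|{\bf\Delta}_t\|$), invokes Lemma \ref{berns_lemma} at tolerance $\epsilon^{1/\alpha}$, and takes the same union bound, arriving at the identical constants. The only cosmetic slip is your description of the Hessian substitution: one substitutes $\epsilon\leftarrow\epsilon^{1/\alpha}$ into Lemma \ref{berns_lemma} (whose Hessian guarantee is $\sqrt{\epsilon}$), which yields the tolerance $\epsilon^{1/(2\alpha)}$ and exactly the batch size \eqref{eq:n2}, as you correctly conclude.
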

\begin{proof}[Proof of Lemma \ref{lemma1}]
We know from Lipschitzness of Hessian of $F$ that
\begin{align}
\|\nabla F(\xv_{t+1})-\nabla F(\xv_{t})-\nabla^{2}F(\xv_{t})(\xv_{t+1}-\xv_{t})\|\le \frac{L_{2}}{2}\|\xv_{t+1}-\xv_{t}\|^{2}.
\end{align}
Moreover, ${\bf \Delta}_{t}$ is an optimal solution of the problem in \eqref{sub-problem} and therefore, it satisfies the optimality condition: $\gv_{t}+\Hv_{t}{\bf \Delta}_{t}+\frac{M}{2}\|{\bf \Delta}_{t}\|{\bf \Delta}_{t}=0$. Therefore, we have:
\begin{align}
    \|\gv_{t}+\Hv_{t}{\bf \Delta}_{t}\|=\frac{M}{2}\|{\bf \Delta}_{t}\|^{2}.
\end{align}
By summing the above two equations and using triangle inequality, we have:
\begin{align}
    \|\nabla F(\xv_{t+1})\|&\le \frac{M+L_{2}}{2}\|{\bf \Delta}_{t}\|^{2}+\|\nabla F(\xv_{t})-\gv_{t}\|+\|(\nabla^{2}F(\xv_{t})-\Hv_{t})(\xv_{t+1}-\xv_{t})\|\nonumber\\
    &\le \frac{M+L_{2}}{2}\|{\bf \Delta}_{t}\|^{2}+\|\nabla F(\xv_{t})-\gv_{t}\|+\|\nabla^{2}F(\xv_{t})-\Hv_{t}\|\|{\bf \Delta}_{t}\|\\
    &\overset{(a)}{\le}_{1-2\delta}\frac{M+L_{2}}{2}\|{\bf \Delta}_{t}\|^{2}+\epsilon^{1/\alpha}+{\epsilon^{1/(2\alpha)}}\|{\bf \Delta}_{t}\|\\&\overset{(b)}{\le}_{1-2\delta} \frac{M+L_{2}+1}{2}\|{\bf \Delta}_{t}\|^{2}+\frac{3}{2}{\epsilon^{1/\alpha}}.
\end{align}
(a) Due to Lemma \ref{berns_lemma} if we consider the batch sizes considered in \eqref{eq:n1} and \eqref{eq:n2}.\\
(b) According to Young's inequality, we have: $ab\le \frac{a^{2}+b^{2}}{2} $.
\end{proof}

\begin{lemma}\label{lemma2}
Under the assumptions of Lemma \ref{lemma1}, we have
\begin{align}
        F(\xv_{t+1})\le_{1-2\delta} F(\xv_{t})+{{\epsilon^{1/\alpha}}} \|{\bf\Delta}_{t}\|_{2} +\frac{1}{2} {\epsilon^{1/(2\alpha)}}\|{\bf\Delta}_{t}\|_{2}^{2}+\frac{2L_{2}-3M}{12}\|{\bf\Delta}_{t}\|^{3}.
\end{align}
\end{lemma}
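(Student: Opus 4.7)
The plan is to mirror the derivation of inequality \eqref{eq:2} in Lemma~\ref{lemma:two_ineq}, substituting the high-probability concentration bounds of Lemma~\ref{berns_lemma} in place of the explicit stochastic error norms. First, I would start from the cubic Taylor-type upper bound implied by Assumption~\ref{assump1} (Lipschitz Hessian), namely
$$F(\xv_{t+1})\le F(\xv_t) + \langle \nabla F(\xv_t),{\bf\Delta}_t\rangle + \tfrac{1}{2}\langle {\bf\Delta}_t,\nabla^2 F(\xv_t){\bf\Delta}_t\rangle + \tfrac{L_2}{6}\|{\bf\Delta}_t\|^3,$$
and decompose $\nabla F(\xv_t) = \gv_t + (\nabla F(\xv_t)-\gv_t)$ together with $\nabla^2 F(\xv_t) = \Hv_t + (\nabla^2 F(\xv_t)-\Hv_t)$ inside the linear and quadratic forms respectively.

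Second, I would invoke the optimality conditions of the cubic sub-problem already established in \eqref{10023}--\eqref{10024}: namely $\gv_t^\top {\bf\Delta}_t + {\bf\Delta}_t^\top \Hv_t {\bf\Delta}_t + \tfrac{M}{2}\|{\bf\Delta}_t\|^3 = 0$ together with $\gv_t^\top {\bf\Delta}_t \le 0$. Substituting these collapses the ``exact'' $\gv_t,\Hv_t$ contributions into the single coefficient $\tfrac{2L_2-3M}{12}\|{\bf\Delta}_t\|^3$, leaving only the two stochastic-error inner products. Applying Cauchy--Schwarz to those remaining inner products gives
$$F(\xv_{t+1})\le F(\xv_t) + \|\nabla F(\xv_t)-\gv_t\|\,\|{\bf\Delta}_t\| + \tfrac{1}{2}\|\nabla^2 F(\xv_t)-\Hv_t\|\,\|{\bf\Delta}_t\|^2 + \tfrac{2L_2-3M}{12}\|{\bf\Delta}_t\|^3,$$
which is identical in form to inequality (b) appearing inside the proof of \eqref{eq:2}.

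Finally, I would invoke Lemma~\ref{berns_lemma} with its accuracy parameter rescaled to $\epsilon^{1/\alpha}$, exactly as was already done inside Lemma~\ref{lemma1}. Under the batch-size prescriptions \eqref{eq:n1}--\eqref{eq:n2}, a union bound over the gradient and Hessian concentration events yields, with probability at least $1-2\delta$, the simultaneous bounds $\|\nabla F(\xv_t)-\gv_t\|\le \epsilon^{1/\alpha}$ and $\|\nabla^2 F(\xv_t)-\Hv_t\|\le \epsilon^{1/(2\alpha)}$. Substituting these two bounds into the previous display delivers the claimed inequality.

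I expect this to be essentially routine — the main (minor) subtlety is correctly tracking the union bound so that both stochastic estimates are simultaneously well-concentrated with confidence $1-2\delta$, and verifying that the batch-size scalings in \eqref{eq:n1}--\eqref{eq:n2} are precisely those obtained from Lemma~\ref{berns_lemma} after the substitution $\epsilon\mapsto \epsilon^{1/\alpha}$. No genuinely new ideas beyond those already assembled in Lemma~\ref{lemma:two_ineq} and Lemma~\ref{lemma1} are needed.
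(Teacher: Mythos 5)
Your proposal is correct and follows essentially the same route as the paper's own proof: the Lipschitz-Hessian cubic upper bound, the decomposition into stochastic errors plus the exact sub-problem terms, the optimality identities $\gv_t^\top{\bf\Delta}_t+{\bf\Delta}_t^\top\Hv_t{\bf\Delta}_t+\tfrac{M}{2}\|{\bf\Delta}_t\|^3=0$ and $\gv_t^\top{\bf\Delta}_t\le 0$ to produce the coefficient $\tfrac{2L_2-3M}{12}$, Cauchy--Schwarz on the error terms, and the Bernstein concentration bounds of Lemma~\ref{berns_lemma} with accuracy $\epsilon^{1/\alpha}$ and $\epsilon^{1/(2\alpha)}$ under a union bound giving the $1-2\delta$ confidence. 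No differences worth noting.
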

\begin{proof}[Proof of Lemma \ref{lemma2}] 
For the sub-problem mentioned in \eqref{sub-problem}, we have from \citep[Proposition 1]{nesterov2006cubic} that
\begin{align}\label{1023}
    \gv_{t}^{T}{\bf \Delta}_{t}+{\bf \Delta}_{t}^{T}\Hv_{t}{\bf \Delta}_{t}+\frac{M}{2}\|{\bf \Delta}_{t}\|^{3}=0,\quad \Hv_{t}+\frac{M\|{\bf\Delta}_{t}\|}{2}I_{d\times d}\succeq 0
\end{align}
which yields 
\begin{equation}\label{1024}
    \gv_{t}^{T}{\bf \Delta}_{t}\le0.
\end{equation}
From Lipschitzness of Hessian, we have the following:
\begin{align}
    &F(\xv_{t+1})\le F(\xv_{t})+\langle \nabla F(\xv_{t}), {\bf\Delta}_{t}\rangle +\frac{1}{2}\langle {\bf\Delta}_{t}, \nabla^{2}F(\xv_{t}){\bf\Delta}_{t}\rangle+\frac{L_{2}}{6}\|{\bf \Delta}_{t}\|^{3}\\
    &=F(\xv_{t})+\langle \nabla F(\xv_{t})-\gv_{t}, {\bf\Delta}_{t}\rangle +\frac{1}{2}\langle {\bf\Delta}_{t}, (\nabla^{2}F(\xv_{t})-\Hv_{t}){\bf\Delta}_{t}\rangle+\gv_{t}^{T}{\bf \Delta}_{t}+\frac{1}{2}{\bf \Delta}_{t}^{T}\Hv_{t}{\bf \Delta}_{t}+\frac{L_{2}}{6}\|{\bf \Delta}_{t}\|^{3}\nonumber\\
    &\overset{(a)}{\le} F(\xv_{t})+\langle \nabla F(\xv_{t})-\gv_{t}, {\bf\Delta}_{t}\rangle +\frac{1}{2}\langle {\bf\Delta}_{t}, (\nabla^{2}F(\xv_{t})-\Hv_{t}){\bf\Delta}_{t}\rangle+\frac{2L_{2}-3M}{12}\|{\bf\Delta}_{t}\|^{3}\\
    &\overset{(b)}{\le}F(\xv_{t})+\|\nabla F(\xv_{t})-\gv_{t}\|_{2} \|{\bf\Delta}_{t}\|_{2} +\frac{1}{2} \|\nabla^{2}F(\xv_{t})-\Hv_{t}\|\|{\bf\Delta}_{t}\|_{2}^{2}+\frac{2L_{2}-3M}{12}\|{\bf\Delta}_{t}\|^{3}\\
    &\overset{(c)}{\le}_{1-2\delta} F(\xv_{t})+{\epsilon^{1/\alpha}} \|{\bf\Delta}_{t}\|_{2} +\frac{1}{2} {\epsilon^{1/(2\alpha)}}\|{\bf\Delta}_{t}\|_{2}^{2}+\frac{2L_{2}-3M}{12}\|{\bf\Delta}_{t}\|^{3}
\end{align}
where (a) comes from \eqref{1024} and \eqref{1023}. Inequality (b) comes from Cauchy-Schwartz inequality and (c) is derived by Lemma \ref{berns_lemma}. 
\end{proof}
\textbf{Theorem 2.}\textit{ When $F(\xv)$ satisfies Assumptions \ref{assump1}, \ref{assump:4}, the stochastic gradient and Hessian satisfy Assumption \ref{assump2} (with $\alpha=1$) and Assumption \ref{assump2.5}, and there exists an exact solver for sub-problem \eqref{sub-problem}, Algorithm \ref{algorithm}, with probability $1-\delta$, outputs a solution $\xv_{T}$ such that $F(\xv_{T})-F(\xv^{*})\le \epsilon$ after $T=\mathcal{O}\left(\log\left(\log(1/\epsilon)\right)\right)$ iterations with the following numbers of samples for the stochastic gradient and Hessian, respectively per iteration:
\begin{align}
    &n_{1}\ge \frac{8}{3}\max\left(\frac{{\tilde{C}^{1/\alpha}}M_{1}}{\epsilon^{1/\alpha}},\frac{\tilde{C}^{2/\alpha}\sigma_{1}^{2}}{\epsilon^{2/\alpha}}\right)\log\left(\frac{4(T+1)d}{\delta}\right),\\
    &n_{2}\ge \frac{8}{3}\max\left(\frac{{\tilde{C}^{1/(2\alpha)}}M_{2}}{{\epsilon^{1/(2\alpha)}}},\frac{{\tilde{C}^{1/\alpha}}\sigma_{2,1}^{2}}{{\epsilon^{1/\alpha}}}\right)\log\left(\frac{4(T+1)d}{\delta}\right),
\end{align}
where 
$\tilde{C}=1+\frac{\tau_{F}}{2}\left(\frac{M+L_{2}+4}{2}\right)^{\alpha}$.}
\subsubsection{Proof of Theorem \ref{th1}}\label{proof_theorem_SCRN_pL_alpha=2}
Now, we are ready to prove the main theorem. 
Suppose that $t=N_{0}$ is the first time that $\|{\bf \Delta}_{t}\|\le \sqrt[2\alpha]{\epsilon}$. In other words, for $t=0,\ldots,N_{0}-1$, we have $\|{\bf \Delta}_{t}\|> \sqrt[2\alpha]{\epsilon}$. The following analysis is valid for $t=0,\ldots,N_{0}-1$. From Lemma \ref{lemma1}, for 
\begin{align}
     &n_{1}\ge 8\max\left(\frac{M_{1}}{{\epsilon^{1/\alpha}}},\frac{\sigma_{1}^{2}}{\epsilon^{2/\alpha}}\right)\log\frac{2d}{\delta}\label{sample_comp_n1_proof_th1}\\
    &n_{2}\ge 8\max\left(\frac{M_{2}}{{\epsilon^{1/(2\alpha)}}},\frac{\sigma_{2}^{2}}{{\epsilon^{1/\alpha}}}\right)\log\frac{2d}{\delta}\label{sample_comp_n2_proof_th1},
\end{align}
we have:
\begin{align}
          \|\nabla F(\xv_{t+1})\|&\le_{1-2\delta} \frac{3}{2}\sqrt[\alpha]{\epsilon}+\frac{M+L_{2}+1}{2}\|{\bf \Delta}_{t}\|^{2}\label{eq_139}\\
       &\le_{1-2\delta} \frac{3}{2}\|{\bf\Delta}_{t}\|^{2}+\frac{M+L_{2}+1}{2}\|{\bf \Delta}_{t}\|^{2}.
\end{align}
Let $A:=\frac{3}{2}+\frac{M+L_{2}+1}{2}$. Then, we can rewrite the above inequality in the following form:
\begin{align}
    \frac{1}{A}\|\nabla F(\xv_{t+1})\|\le_{1-2\delta}\|{\bf \Delta}_{t}\|^{2}.
    \label{eq:A}
\end{align}
Using Lemma \ref{lemma2}, we have:
\begin{align}
   F(\xv_{t+1})&\le_{1-2\delta} F(\xv_{t})+{\epsilon^{1/\alpha}} \|{\bf\Delta}_{t}\|_{2} +\frac{1}{2} {\epsilon^{1/(2\alpha)}}\|{\bf\Delta}_{t}\|_{2}^{2}+\frac{2L_{2}-3M}{12}\|{\bf\Delta}_{t}\|^{3}\\
    &\le_{1-2\delta} F(\xv_{t})-\left(\frac{3M-2L_{2}}{12}-\frac{3}{2}\right)\|{\bf\Delta}_{t}\|^{3}\\
    &\le_{1-4\delta}F(\xv_{t})-\left(\frac{3M-2L_{2}}{12}-\frac{3}{2}\right)\frac{1}{A^{3/2}}\|\nabla F(\xv_{t+1})\|^{3/2},
\end{align}
where the last inequality is according to \eqref{eq:A}.
Let $B:=\left(\frac{3M-2L_{2}}{12}-\frac{3}{2}\right)/A^{3/2}$.  Using gradient dominance property, we have:
\begin{align}
    (F(\xv_{t+1})-F(\xv^{*}))\le_{1-4\delta} \tau_{F}\frac{(F(\xv_{t})-F(\xv_{t+1}))^{2\alpha/3}}{B^{2\alpha/3}}.
\end{align}
Let $h_{t}:=\frac{\tau_{F}^{\frac{3}{2\alpha-3}}}{B^{\frac{2\alpha}{2\alpha-3}}}[F(\xv_{t})-F(\xv^{*})]$. Then,
\begin{align}
   h_{t+1}^{3/2\alpha}\le_{1-4\delta}h_{t}-h_{t+1}
\end{align}
We define $K_{N_{0}}:=\max_{0\le t\le N_{0}-1}h_{t}$. Therefore, 
\[
1+\frac{1}{K_{N_{0}}^{\frac{2\alpha-3}{2\alpha}}}\le 1+\frac{1}{h^{\frac{2\alpha-3}{2\alpha}}_{t+1}}\le_{1-4\delta}\frac{h_{t}}{h_{t+1}}.
\]
Finally, we can find $1\le N_{1}\le N_{0}$ such that 
\begin{align}
   h_{N_{1}} \le_{1-4N_{1}\delta}{h_{0}}\cdot\left(1+\frac{1}{K_{N_{0}}^{\frac{2\alpha-3}{2\alpha}}}\right)^{-N_{1}}\le \frac{1}{2}.
\end{align}
From above inequality, we imply
\begin{align}\label{eq10475}
    N_{1}\ge \frac{\log 2h_{0}}{\log\left(1+\frac{1}{K_{N_{0}}^{\frac{2\alpha-3}{2\alpha}}}\right)}
\end{align}
For iteration $t\ge N_{1}$, we use the following recursion: $h_{t+1}\le_{1-4\delta}h^{2\alpha/3}_{t}$. Thus
\begin{align}\label{eq1047}
    h_{N_{0}}\le_{1-4(N_{0}-N_{1})\delta}h^{(\frac{2\alpha}{3})^{N_{0}-N_{1}}}_{N_{1}}\le_{1-4N_{0}\delta}\left(\frac{1}{2}\right)^{(\frac{2\alpha}{3})^{N_{0}-N_{1}}}
\end{align}
On the other hand, from \eqref{eq_139}, gradient dominance property, and stopping criterion $\|{\bf \Delta}\|_{2}\le \sqrt[2\alpha]{\epsilon}$, we have
\begin{equation}\label{eq1048}
    F(\xv_{N_{0}})-F(\xv^{*})\le \tau_{F}\|\nabla F(\xv_{N_{0}})\|^{\alpha}\le_{1-4\delta}\tau_{F}\left(\frac{M+L_{2}+4}{2}\right)^{\alpha}\cdot \epsilon.
\end{equation}
Summing up \eqref{eq1047} and \eqref{eq1048}, we get
\begin{align}
    F(\xv_{N_{0}})-F(\xv^{*}) \le_{1-(4N_{0}+4)\delta}\frac{B^{\frac{2\alpha}{2\alpha-3}}}{\tau_{F}^{\frac{3}{2\alpha-3}}}\left(\frac{1}{2}\right)^{(\frac{2\alpha}{3})^{N_{0}-N_{1}}}+\frac{\tau_{F}}{2}\left(\frac{M+L_{2}+4}{2}\right)^{\alpha}\cdot \epsilon.
\end{align}
First denote $\tilde{C}:=1+\frac{\tau_{F}}{2}\left(\frac{M+L_{2}+4}{2}\right)^{\alpha}$ and $D:=\frac{B^{\frac{2\alpha}{2\alpha-3}}}{\tau_{F}^{\frac{3}{2\alpha-3}}}$. Hence, we have to choose $N_{0}\ge N_{1}+\frac{\log\left(\frac{\log D+\log\frac{1}{\epsilon}}{\log 2}\right)}{\log(2\alpha/3)}$, in order to guarantee that $F(\xv_{N_{0}})-F(\xv^{*})\leq \tilde{C}\epsilon$ with probability at least $1-(4N_{0}+4)\delta$. By plugging $\delta'=4(N_0+1)\delta$ and $\epsilon'=\tilde{C}\epsilon$ in the sample complexities $n_{1}, n_{2}$ in \eqref{sample_comp_n1_proof_th1} and \eqref{sample_comp_n2_proof_th1}, we can get the desired result in Theorem \ref{th1} and the proof is complete.

\subsubsection{Inexact cubic sub-solver (Proof of Lemma \ref{lemm22})}\label{append_inexact_subsolver}
In this part, we show that under some mild assumptions, we can use a gradient descant based algorithm to find an approximate solution $\tilde{\mathbf{\Delta}}_{t}$ of sub-problem in \eqref{sub-problem} such that Theorem \ref{th1_expectation} and Theorem \ref{th1} still hold. To do so, we provide a new version of \eqref{eq:3} for $\tilde{\mathbf{\Delta}}_{t}$ and then show that the order of error terms in the recursion is not changed. Recall that the sub-problem \eqref{sub-problem} has the following form:
\begin{align}
  \min_{{\bf\Delta}\in\mathbb{R}^{d}}m_{t}({\bf\Delta}):=\langle \gv_{t}, {\bf\Delta}\rangle+\frac{1}{2}\langle {\bf\Delta}, \Hv_{t}{\bf\Delta}\rangle+\frac{M}{6}\|{\bf\Delta}\|^{3}.
\end{align}
We need the following assumption:
\begin{assumption}\label{assum10}
Hessian estimators satisfy $\|\Hv_{t}\|\le\beta$ for all $1\le t\le T$ where $\beta$ is a positive constant.
\end{assumption}

Consider any iteration $t\in[1,T]$ of Algorithm \ref{algorithm} and in what follows, the iteration $t$ is fixed. Thus, we drop the subscription $t$ for simplifying the notations.  We use the gradient descent algorithm as our sub-solver with the following update and initial point $\tilde{{\bf {\Delta}}}^{(0)}$:
\[
\tilde{{\bf {\Delta}}}^{(k+1)}=\tilde{{\bf {\Delta}}}^{(k)}-\eta \nabla m(\tilde{{\bf {\Delta}}}^{k})=\left(I-\eta\Hv-\eta \frac{M}{2}\|\tilde{{\bf {\Delta}}}^{(k)}\|\right)\tilde{{\bf {\Delta}}}^{(k)}-\eta\gv.
\]
We define the following quantity 
\[
R_c\triangleq-\frac{\gv_{t}^{T}\Hv\gv}{2M\|\gv_{t}\|^{2}}+\sqrt{\frac{\|\gv\|}{M}+\left(\frac{\gv^{T}\Hv\gv}{2M\|\gv_{t}\|^{2}}\right)^{2}},
\]
which is the Cauchy radius \citep{carmon2016gradient} and 
\[
R\triangleq\frac{\|\Hv\|}{2M}+\sqrt{\left(\frac{\|\Hv\|}{2M}\right)^{2}+\frac{\|\gv\|}{M}},
\]
which is an upper bound on $\|{\bf {\Delta}}\|$ (See \citep[Claim 2.1]{carmon2016gradient}).
We make the following assumptions.
\begin{assumption}\label{assum8}
The step-size $\eta$ satisfies $0\le\eta\le \frac{1}{4(\|\Hv\|+MR)}$.
\end{assumption}
\begin{assumption}\label{assum9}
The initialization $\tilde{{\bf {\Delta}}}^{(0)}$ satisfies $\tilde{{\bf {\Delta}}}^{(0)}=-r\frac{\gv}{\|\gv\|}$, with $0\le r\le R_{c}$.
\end{assumption}
In \citep[Lemma 2.3]{carmon2016gradient}, Carmon and Duchi showed that with the Assumptions \ref{assum8} and \ref{assum9}, $\gv^{T}\tilde{{\bf {\Delta}}}^{(k)}\le 0$ and in \citep[Lemma 2.2]{carmon2016gradient}, they showed that $\nabla m(\tilde{{\bf {\Delta}}}^{(k)})^{T}\tilde{{\bf {\Delta}}}^{(k)}\le 0$ for all $1\le k\le K$. Assumptions \ref{assum10}, \ref{assum9}, and \ref{assum8} implies
\[
\|\nabla^{2} m(\tilde{{\bf {\Delta}}}^{(k)})\|=\left\|\Hv+\frac{M\|\tilde{{\bf {\Delta}}}^{(k)}\|}{2}I_{d\times d}\right\|\le \|\Hv\|+\frac{M\|\tilde{{\bf {\Delta}}}^{(k)}\|}{2}\overset{(a)}{\le}\beta+\frac{MR}{2}
\]
where (a) comes from Assumption \ref{assum10} and $\|\tilde{{\bf {\Delta}}}^{(k)}\|\le R$ which is from \citep[Lemma 2.2]{carmon2016gradient}. Hence, the Lipschitzness of gradient of $m(\cdot)$ is established and the gradient descent algorithm obtains $\|\nabla m(\tilde{{\bf {\Delta}}}^{(K)})\|\le \epsilon'$ with $K= \mathcal{O}(\epsilon'^{-2})$ \cite{nesterov2003introductory}. Let $\tilde{{\bf {\Delta}}}_{t}=\tilde{{\bf {\Delta}}}^{(K)}$ be the inexact solution of above sub-solver.

From Lipschitzness of Hessian, we have the following:
\begin{align}
    &F(\xv_{t}+\tilde{{\bf {\Delta}}}_{t})\le F(\xv_{t})+\langle \nabla F(\xv_{t}), \tilde{{\bf {\Delta}}}_{t}\rangle +\frac{1}{2}\langle \tilde{{\bf {\Delta}}}_{t}, \nabla^{2}F(\xv_{t})\tilde{{\bf {\Delta}}}_{t}\rangle+\frac{L_{2}}{6}\|{\bf {\Delta}}_{t}\|^{3}\nonumber\\
    &=F(\xv_{t})+\langle \nabla F(\xv_{t})-\gv_{t}, \tilde{{\bf {\Delta}}}_{t}\rangle +\frac{1}{2}\langle \tilde{{\bf {\Delta}}}_{t}, (\nabla^{2}F(\xv_{t})-\Hv_{t})\tilde{{\bf {\Delta}}}_{t}\rangle+\frac{L_{2}}{6}\|\tilde{{\bf {\Delta}}}_{t}\|^{3}+\gv_{t}^{T}\tilde{{\bf {\Delta}}}_{t}+\frac{1}{2}\tilde{{\bf {\Delta}}}_{t}^{T}\Hv_{t}\tilde{{\bf {\Delta}}}_{t}\nonumber\\
    &= F(\xv_{t})+\langle \nabla F(\xv_{t})-\gv_{t}, \tilde{{\bf {\Delta}}}_{t}\rangle +\frac{1}{2}\langle \tilde{{\bf {\Delta}}}_{t}, (\nabla^{2}F(\xv_{t})-\Hv_{t})\tilde{{\bf {\Delta}}}_{t}\rangle+\frac{2L_{2}-3M}{12}\|\tilde{{\bf {\Delta}}}_{t}\|^{3}+\frac{1}{2}\gv_{t}^{T}\tilde{{\bf {\Delta}}}_{t}\nonumber\\
    &+\frac{1}{2}\nabla m(\tilde{{\bf {\Delta}}}_{t})^{T}\tilde{{\bf {\Delta}}}_{t}\nonumber\\
    &\overset{(a)}{\le} F(\xv_{t})+\langle \nabla F(\xv_{t})-\gv_{t}, \tilde{{\bf {\Delta}}}_{t}\rangle +\frac{1}{2}\langle \tilde{{\bf {\Delta}}}_{t}, (\nabla^{2}F(\xv_{t})-\Hv_{t})\tilde{{\bf {\Delta}}}_{t}\rangle+\frac{2L_{2}-3M}{12}\|\tilde{{\bf {\Delta}}}_{t}\|^{3}\nonumber\\
    &\overset{(b)}{\le}F(\xv_{t})+\|\nabla F(\xv_{t})-\gv_{t}\|_{2} \|\tilde{{\bf {\Delta}}}_{t}\|_{2} +\frac{1}{2} \|\nabla^{2}F(\xv_{t})-\Hv_{t}\|\|\tilde{{\bf {\Delta}}}_{t}\|_{2}^{2}+\frac{2L_{2}-3M}{12}\|\tilde{{\bf {\Delta}}}_{t}\|^{3}\nonumber\\
    &\overset{(c)}\leq F(\xv_t)+\frac{2\|\nabla F(\xv_t)-\gv_{t}\|^{3/2}}{3}+ \frac{\|\nabla^2 F(\xv_t)-\Hv_{t}\|^{3}}{6}+\frac{2L_{2}-3M+8}{12}\|\tilde{{\bf {\Delta}}}_{t}\|^{3}\label{100042},
\end{align}
where (a) comes from the fact that $\gv^{T}\tilde{{\bf {\Delta}}}_{t}\le 0$ and $\nabla m(\tilde{{\bf {\Delta}}}_{t})^{T}\tilde{{\bf {\Delta}}}_{t}\le 0$ and (b) comes from Cauchy-Schwartz inequality and (c) is due to Young's inequality. 

 We know from Lipschitzness of Hessian of $F$ that
\begin{align}
\|\nabla F(\xv_{t}+\tilde{{\bf {\Delta}}}_{t})-\nabla F(\xv_{t})-\nabla^{2}F(\xv_{t})\tilde{{\bf {\Delta}}}_{t}\|\le \frac{L_{2}}{2}\|\tilde{{\bf {\Delta}}}_{t}\|^{2}.
\end{align}
Moreover, the gradient descent returns  $\tilde{{\bf {\Delta}}}_{t}$ which is $\epsilon'$-approximate first-order stationary point:
$\|\nabla m(\tilde{{\bf {\Delta}}}_{t})\|=\|\gv_{t}+\Hv_{t}\tilde{{\bf {\Delta}}}_{t}+\frac{M}{2}\|\tilde{{\bf {\Delta}}}_{t}\|\tilde{{\bf {\Delta}}}_{t}\|\le \epsilon'$. Therefore, we have:
\begin{align}
    \|\gv_{t}+\Hv_{t}{\tilde{\bf {\Delta}}}_{t}\|\le\epsilon'+ \frac{M}{2}\|{\tilde{\bf {\Delta}}}_{t}\|^{2}.
\end{align}
By summing the above two equations and using triangle inequality, we have:
\begin{align}
    \|\nabla F(\xv_{t}+\tilde{{\bf {\Delta}}}_{t})\|&\le \frac{M+L_{2}}{2}\|\tilde{{\bf {\Delta}}}_{t}\|^{2}+\|\nabla F(\xv_{t})-\gv_{t}\|+\|(\nabla^{2}F(\xv_{t})-\Hv_{t})\tilde{{\bf {\Delta}}}_{t}\|+\epsilon'\nonumber\\
    &\le \frac{M+L_{2}}{2}\|\tilde{{\bf {\Delta}}}_{t}\|^{2}+\|\nabla F(\xv_{t})-\gv_{t}\|+\|\nabla^{2}F(\xv_{t})-\Hv_{t}\|\|\tilde{{\bf {\Delta}}}_{t}\|+\epsilon'\\
    &\leq \frac{M+L_{2}}{2}\|\tilde{{\bf {\Delta}}}_{t}\|^{2}+\|\nabla F(\xv_{t})-\gv_{t}\|+\frac{\|\nabla^{2}F(\xv_{t})-\Hv_{t}\|^2}{2}+\frac{\|\tilde{{\bf {\Delta}}}_{t}\|^2}{2}+\epsilon'\label{100043},
\end{align}

Based on gradient dominance property and  \eqref{100043}, we have:
\begin{equation}\label{100044}
    \begin{split}
        &(F(\xv_{t}+\tilde{{\bf {\Delta}}}_{t})-F(\xv^*))\leq \tau_F\|\nabla F(\xv_t+\tilde{{\bf {\Delta}}}_{t})\|^\alpha\\
        &\leq \tau_F\Big(\frac{M+L_{2}+1}{2}\|\tilde{{\bf {\Delta}}}_{t}\|^{2}+\|\nabla F(\xv_{t})-\gv_{t}\|+\frac{\|\nabla^{2}F(\xv_{t})-\Hv_{t}\|^2}{2}+\epsilon'\Big)^\alpha\\
        &\leq  4^{\alpha-1}\tau_F\Big[\Big(\frac{M+L_2+1}{2}\Big)^\alpha\|\tilde{{\bf {\Delta}}}_{t}\|^{2\alpha}+\|\nabla F(\xv_{t})-\gv_{t}\|^{\alpha}+\frac{\|\nabla^{2}F(\xv_{t})-\Hv_{t}\|^{2\alpha}}{2^{\alpha}}+\epsilon'^{\alpha}\Big]\\
    \end{split}
\end{equation}
where we used $(a+b+c+d)^\alpha\leq 4^{\alpha-1}(a^\alpha+b^\alpha+c^\alpha+d^{\alpha})$, $\forall a,b,c,d>0$ in the last inequality.

Substituting \eqref{100042} into \eqref{100044}, we have:
\begin{small}
\begin{equation}
    \begin{split}
        &F(\xv_{t}+\tilde{{\bf {\Delta}}}_{t})-F(\xv^*)\leq\\ &4^{\alpha-1}\tau_F\Big[ \Big(\frac{M+L_2+1}{2}\Big)^\alpha\Big(\frac{12}{3M-2L_2-6}\Big)^{2\alpha/3}\Big(F(\xv_{t})-F(\xv_{t}+\tilde{{\bf {\Delta}}}_{t})+\frac{2\|\nabla F(\xv_t)-\gv_{t}\|^{3/2}}{3}+ \\
       &\qquad\qquad\frac{\|\nabla^2 F(\xv_t)-\Hv_{t}\|^{3}}{6}\Big)^{2\alpha/3}+ \|\nabla F(\xv_t)-\gv_{t}\|^\alpha+\frac{\|\nabla^2F(\xv_t)-\Hv_{t}\|^{2\alpha}}{2^{\alpha}}+\epsilon'^{\alpha}\Big]\\
     &\qquad\qquad\leq C(F(\xv_t)-F(\xv_{t}+\tilde{{\bf {\Delta}}}_{t}))^{2\alpha/3}+C_g\|\nabla F(\xv_t)-\gv_{t}\|^\alpha+C_{H}\|\nabla^2F(\xv_t)-\Hv_{t}\|^{2\alpha}+C_{\epsilon'}\epsilon'^{\alpha},
    \end{split}
\end{equation}
\end{small}
where in the last inequality we used $(a+b+c)^{2\alpha/3}\le3^{2\alpha/3-1/3}(a^{2\alpha/3}+b^{2\alpha/3}+c^{2\alpha/3}) $ which is derived by the following inequalities: \[(a+b+c)^{2\alpha}\leq 3^{2\alpha-1}(a^{2\alpha}+b^{2\alpha}+c^{2\alpha})\le 3^{2\alpha-1}(a^{2\alpha/3}+b^{2\alpha/3}+c^{2\alpha/3})^{3}\]
for any $a,b,c\in \mathbb{R^{+}}$ and,
\begin{equation}\label{eq200}
    \begin{split}
      C&=4^{\alpha-1}3^{(2\alpha-1)/3}\tau_F \Big(\frac{M+L_2+1}{2}\Big)^\alpha\Big(\frac{12}{3M-2L_2-8}\Big)^{2\alpha/3}
       \\
     C_g&=2^{8\alpha/3-2}\times3^{-\frac{1}{3}}\tau_F \Big(\frac{M+L_2+1}{2}\Big)^\alpha\Big(\frac{12}{3M-2L_2-8}\Big)^{2\alpha/3}+2^{\alpha-2}\tau_{F}\\
     C_{H}&=2^{4\alpha/3-2}\times3^{-\frac{1}{3}}\tau_F \Big(\frac{M+L_2+1}{2}\Big)^\alpha\Big(\frac{12}{3M-2L_2-8}\Big)^{2\alpha/3}+2^{\alpha-2}\tau_{F}\\
     C_{\epsilon'}&=4^{\alpha-1}\tau_{F}
    \end{split}
\end{equation}
If $\epsilon'=\epsilon^{1/\alpha}$, we have the following recursion inequality.
\begin{align}
    \delta_{t+1}\le C(\delta_{t}-\delta_{t+1})^{2\alpha/3}+C_g\|\nabla F(\xv_t)-\gv_{t}\|^\alpha+C_{H}\|\nabla^2F(\xv_t)-\Hv_{t}\|^{2\alpha}+C_{\epsilon'}\epsilon.
\end{align}
Recall that in the analysis of Sections \ref{subsec_SCRN_PL_alpha_(1,3/2)} and \ref{subsec_SCRN_PL_alpha_(3/2,2)}, we keep the error terms $\|\nabla F(\xv_t)-\gv_{t}\|^\alpha$ and $\|\nabla^2F(\xv_t)-\Hv_{t}\|^{2\alpha}$ in the order of $\epsilon$ (in expectation or with high probability) and note that the effect of inexactness in sub-solver appears in the term $C_{\epsilon'}\epsilon$ with the same order. Hence, we still have a valid analysis of the global optimum when using an inexact sub-solver based on gradient descent. It is important to note that when using the inexact sub-solver, finding a local minimum solution of sub-problem is sufficient for the global convergence analysis of SCRN and despite second order convergence analysis of SCRN \cite{carmon2016gradient,tripuraneni2018stochastic, zhou2019stochastic}, its global convergence analysis does not need to approximate the exact solution of sub-solver, i.e., ${\bf {\Delta}}_{t}$.

\subsubsection{Simulations of SCRN and SGD on synthetic functions satisfying gradient dominance property}\label{app:sim_synthetic}

\begin{example}
We consider function $x^{p/q}:[-1,1]\to[0,1]$, satisfying gradient dominance property with $\alpha=\frac{p}{p-q}\in (1,\infty)$ where $p$ is an even positive integer and $q<p$ is a positive integer. For this choice of objective function, we compare average global optimization error ($\mathbb{E}[F(x)]-F(x^*)$) of SCRN and SGD for $1<\alpha<3/2$ in Figure \ref{F_alpha<2}. We observe that the improvement of SCRN upon SGD is increasing when $\alpha$ decreases from $4/3$ to $7/6$. 
\end{example}

\begin{figure}
    \centering
    \includegraphics[width=1\textwidth]{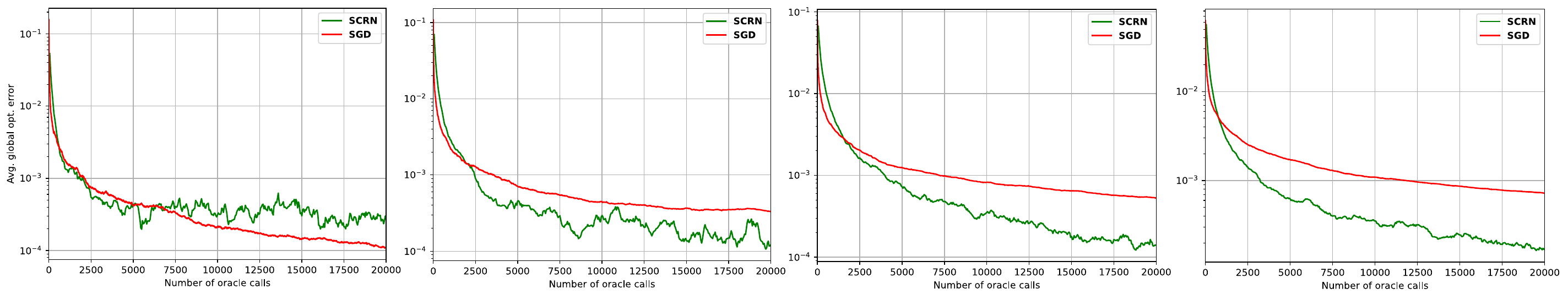}
    \caption{Performance of SCRN versus SGD with adaptive step-size for different values of $\alpha$. From left to right, average global optimization error of SCRN and SGD versus number of oracle calls for $\alpha=4/3,5/4,6/5,7/6$.}
    
    \label{F_alpha<2}
\end{figure}
\begin{example}
As we discussed in Remark \ref{F_tua_varying}, SCRN improves SGD in terms of dependency on $\tau_{F}$. We consider the function $F(x)=x^{2}+a \sin^{2}(x)$ for $0\le a\le 4.60\bar{3}$. This choice of function satisfy gradient dominance property of $\alpha=2$ with different values of $\tau_{F}$. When $a$ goes to $4.60\bar{3}$, $\tau_{F}$ goes to infinity. By tuning the parameter $a$, we run SCRN and SGD with the best adaptive step-size on function $F(x)$ for four different values of $\tau_{F}$ in Figure \ref{F_tua_varying}. As can be seen, SCRN outperforms SGD for large values of $\tau_F$.
\end{example}
\begin{figure}
    \centering
    \includegraphics[width=1\textwidth]{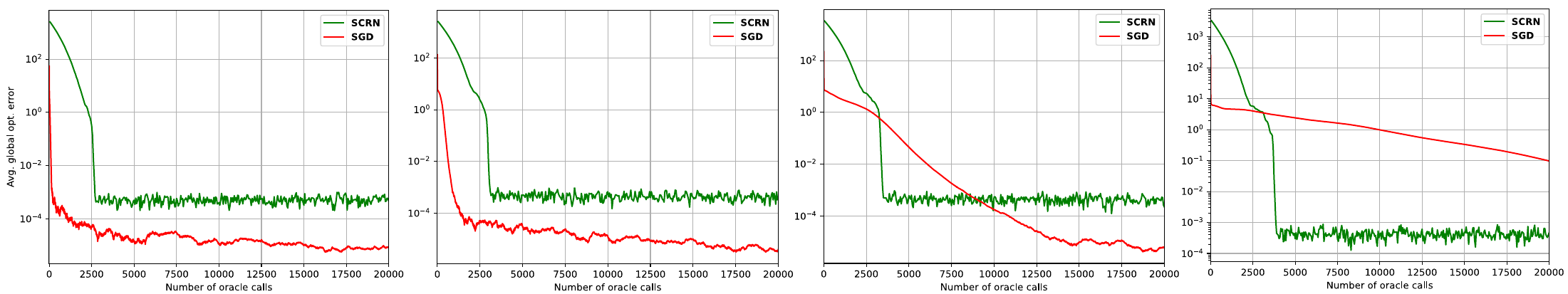}
    \caption{Performance of SCRN versus SGD on a function satisfying gradient dominance property with $\alpha=2$ for different values of $\tau_{F}$. From left to right, average global optimization error versus number of oracle calls for $\tau_{F}\approx8, 32, 85, 190$.}
    \label{F_tua_varying}
\end{figure}

\subsection{variance-reduced SCRN under gradient dominance property with $\alpha=1$} \label{app:vr SCRN}

We first provide two lemmas that are used in analyzing variance-reduced version of SCRN. They are stated for more general case $1\le \alpha<3/2$ in the following. We will use them in the proof of Theorem \ref{th_vr_SCRN_Pl_alpha=1} for the case $\alpha=1$.

\begin{lemma}\label{lemma:rec_eq_noerror}
Consider the following recursive inequality for $1\leq \alpha<3/2$:
\begin{equation}\label{eq:rec_eq}
    \delta_{t+1}\leq (\delta_{t}-\delta_{t+1})^{2\alpha/3}.
\end{equation}
Then,
\begin{itemize}
    \item (i) As far as $\delta_t\geq 1$, we have: $\delta_t\leq (1/2)^t \delta_0$.
    \item (ii) For any pair $(t,t')$ such that $t'\leq t$: 
    \begin{equation}
        \delta_t\leq \beta^{\beta} \left(\beta \delta_{t'}^{-1/\beta}+D(t-t')\right)^{-\beta},
    \end{equation}
    where $\beta:=\frac{2\alpha}{3-2\alpha}$ and $D:= \min\{\frac{1}{2},\beta(2^{\frac{3-2\alpha}{3}}-1)\delta_{t'}^{-1/\beta}\} $.
\end{itemize}
\end{lemma}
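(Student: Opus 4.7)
The plan is to adapt the two-case potential-function argument already employed in the proof of Theorem~\ref{th1_expectation}(i), where a recursion of exactly this form was analyzed starting from the initial index; the more general statement~(ii) will then follow by observing that the sequence $\{\delta_t\}$ is monotonically non-increasing, so that the same analysis can be restarted at any reference index $t'$ with $\delta_{t'}$ playing the role formerly played by $\delta_0$.

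First, I would establish monotonicity by rearranging the recursion into $\delta_t \ge \delta_{t+1}+\delta_{t+1}^{3/(2\alpha)} \ge \delta_{t+1}$. For part~(i), the hypothesis $\delta_t\ge 1$ combined with $3/(2\alpha)\ge 1$ (valid since $\alpha\le 3/2$) gives $\delta_{t+1}^{3/(2\alpha)}\ge \delta_{t+1}$, hence $\delta_{t-1}\ge 2\delta_t$ whenever $\delta_t\ge 1$; iterating from $0$ yields $\delta_t\le 2^{-t}\delta_0$.

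For part~(ii), I would introduce the potential $h(s):=\beta s^{-1/\beta}$ with $\beta=2\alpha/(3-2\alpha)>0$, noting that $h'(s)=-s^{-3/(2\alpha)}$. The goal is to show $h(\delta_{k+1})-h(\delta_k)\ge D$ for every $k\ge t'$ via a two-case split. In Case~1, when $\delta_{k+1}\ge 2^{-2\alpha/3}\delta_k$, I would compute
\[
h(\delta_{k+1})-h(\delta_k)=\int_{\delta_{k+1}}^{\delta_k} s^{-3/(2\alpha)}\,ds \ge (\delta_k-\delta_{k+1})\,\delta_k^{-3/(2\alpha)} \ge \tfrac12 (\delta_k-\delta_{k+1})\,\delta_{k+1}^{-3/(2\alpha)} \ge \tfrac12,
\]
where the last inequality invokes the recursion $\delta_k-\delta_{k+1}\ge \delta_{k+1}^{3/(2\alpha)}$. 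In Case~2, when $\delta_{k+1}\le 2^{-2\alpha/3}\delta_k$, raising to the negative exponent $1-3/(2\alpha)$ and then using monotonicity $\delta_k\le \delta_{t'}$ gives
\[
h(\delta_{k+1})-h(\delta_k)\ge \beta\bigl(2^{(3-2\alpha)/3}-1\bigr)\delta_k^{-1/\beta} \ge \beta\bigl(2^{(3-2\alpha)/3}-1\bigr)\delta_{t'}^{-1/\beta}.
\]
Either way the increment is at least $D$.

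Telescoping from $k=t'$ to $k=t-1$ then yields $\beta\delta_t^{-1/\beta}\ge \beta\delta_{t'}^{-1/\beta}+D(t-t')$, which on inversion is precisely the advertised bound $\delta_t\le \beta^\beta\bigl(\beta\delta_{t'}^{-1/\beta}+D(t-t')\bigr)^{-\beta}$. The main delicacy I anticipate is tracking the negative sign of $1-3/(2\alpha)=-1/\beta$, which flips every inequality it touches and is precisely what motivates the two-case split: Case~1 handles the regime where $\delta_k$ has decreased by at most a constant factor (so the recursion itself forces a unit increment in $h$), while Case~2 handles the regime where $\delta_k$ has dropped by a definite factor (so the negative power of the small value $\delta_{t'}$ produces the increment). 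Apart from this bookkeeping, the argument is a direct transcription of the analysis already given for Theorem~\ref{th1_expectation}(i).
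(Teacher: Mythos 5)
Your proof is correct and takes essentially the same approach as the paper's: part (i) via the ratio bound $\delta_{t}/\delta_{t-1}\le 1/(1+\delta_{t}^{3/(2\alpha)-1})\le 1/2$, and part (ii) via the same potential $h(s)=\beta s^{-1/\beta}$, the same two-case lower bound $h(\delta_{k+1})-h(\delta_k)\ge D$ (Case 1 using the recursion directly, Case 2 using the sign flip from the negative exponent together with $\delta_k\le\delta_{t'}$), and telescoping. Your per-index formulation of the case split is in fact a slightly cleaner statement of what the paper's proof does.
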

\begin{proof}
For part $(i)$, as long as $\delta_t\geq 1$, we have:
\begin{equation}
\begin{split}
    \frac{\delta_t}{\delta_{t-1}}\leq \frac{1}{1+\delta_t^{-1+3/(2\alpha)}}\leq \frac{1}{2},
\end{split}
\end{equation}
where the first inequality is due to \eqref{eq:rec_eq} and the second inequality comes from the fact that $\delta_t\geq 1$. Therefore, $\prod_{k=1}^t \delta_k/\delta_{k-1}\leq (1/2)^t$ which we can imply that: $\delta_t\leq (1/2)^t \delta_0$.

For part $(ii)$, let $h(t):=\frac{2\alpha}{3-2\alpha}t^{1-\frac{3}{2\alpha}}$. Consider the following two cases: 

Case (1): Suppose that $\delta_{t+1}\ge {2}^{-\frac{2\alpha}{3}}\delta_{t}$ for all $t\ge t'$. Then
\begin{align}
    &h(\delta_{t+1})-h(\delta_{t})=\int_{\delta_{t}}^{\delta_{t+1}}\frac{d}{dt}h(t)dt=\int^{\delta_{t}}_{\delta_{t+1}}t^{-\frac{3}{2\alpha}}dt\ge (\delta_{t}-\delta_{t+1})\delta_{t}^{-\frac{3}{2\alpha}}\nonumber\\
    &\ge (\delta_{t}-\delta_{t+1})\frac{1}{2}\delta_{t+1}^{-\frac{3}{2\alpha}}\ge \frac{1}{2}.
\end{align}
Case (2): Suppose that $\delta_{t+1}\le {2}^{-\frac{2\alpha}{3}}\delta_{t}$ for all $t\ge t'$. Then, we have: $\delta_{t+1}^{1-\frac{3}{2\alpha}}\ge 2^{\frac{3-2\alpha}{3}}\delta_{t}^{1-\frac{3}{2\alpha}}$. Therefore,
\begin{align}
    &h(\delta_{t+1})-h(\delta_{t})=\frac{2\alpha}{3-2\alpha}(\delta_{t+1}^{1-\frac{3}{2\alpha}}-\delta_{t}^{1-\frac{3}{2\alpha}})\ge \frac{2\alpha}{3-2\alpha}(2^{\frac{3-2\alpha}{3}}-1)\delta_{t}^{1-\frac{3}{2\alpha}}\nonumber\\
    &\ge \frac{2\alpha}{3-2\alpha}(2^{\frac{3-2\alpha}{3}}-1)\delta_{t'}^{1-\frac{3}{2\alpha}}.
\end{align}
Let $D:=\min\{\frac{1}{2},\frac{2\alpha}{3-2\alpha}(2^{\frac{3-2\alpha}{3}}-1)\delta_{t'}^{1-\frac{3}{2\alpha}}\}$. Then we have
\[
h(\delta_{t+1})-h(\delta_{t})\ge D
\]
which implies 
\[
h(\delta_{t})-h(\delta_{t'})= \sum_{k=t'+1}^{t}h(\delta_{k})-h(\delta_{k-1})\ge D\cdot (t-t').
\]
Then we get
\begin{align}\label{eq_recursion_exp_alpha<3/2_VR}
    \delta_{t}\le \left(\frac{2\alpha}{3-2\alpha}\right)^{\frac{2\alpha}{3-2\alpha}}\frac{1}{(h(\delta_{t'})+D(t-t'))^{\frac{2\alpha}{3-2\alpha}}}.
\end{align}
\end{proof}

\begin{lemma}\label{lemma:rec_eq}
Consider the following recursive inequality for $1\leq \alpha< 3/2$:
\begin{equation}\label{eq:key_rec}
    \delta_{t}\leq C(\delta_{t-1}-\delta_{t})^{2\alpha/3}+\frac{C'}{(\lceil t/S\rceil S)^{\beta}}, \qquad \forall t\geq 1,
\end{equation}
where $\beta:=\frac{2\alpha}{3-2\alpha}$, $C$, and $C'$ are some positive constants and $S$ is a positive integer. Then,
\begin{equation}
    \delta_t\leq  
  \frac{C_{\delta}C^{3/(3-2\alpha)}}{(\lfloor t/S\rfloor S)^{\beta}}+\frac{C'}{(\lceil t/S\rceil S)^{\beta}},
\end{equation}

 for all $t\geq S$ where $C_{\delta}$ satisfies $C_{\delta}^{-1/\beta}\left(1+A^{1/(1+\beta)}\right)\leq \frac{1}{2\beta}$ where $A:=\beta C'/(C^{3/(3-2\alpha)}C_{\delta})$ and $S>\max\{C_{\delta}^{1/\beta}(1+A)^{1/\beta},C_{\delta}^{1/\beta}(t_0-2\beta)/(C_{\delta}^{1/\beta}-2\beta)\}$, and $t_0\leq   \lceil \log(\delta_0/C^{3/(3-2\alpha)})/\log(2)\rceil$.
 
\end{lemma}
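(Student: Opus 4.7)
The plan is to reduce the perturbed recursion to the clean recursion of Lemma~\ref{lemma:rec_eq_noerror} by partitioning the time axis into blocks of length $S$. On block $k$, meaning iterates with $(k-1)S<t\le kS$, the additive term $C'/(\lceil t/S\rceil S)^{\beta}$ is the \emph{constant} $C'/(kS)^{\beta}$. Introducing the block-specific affine change of variables
\[
\delta^{k}_{t}\;:=\;\frac{\delta_{t}-C'/(kS)^{\beta}}{C^{3/(3-2\alpha)}},
\]
the scaling exponent $3/(3-2\alpha)$ is precisely the one that cancels the factor $C$ in front of $(\delta_{t-1}-\delta_{t})^{2\alpha/3}$, so $\delta^{k}_{t}$ satisfies the clean recursion $\delta^{k}_{t}\le(\delta^{k}_{t-1}-\delta^{k}_{t})^{2\alpha/3}$ throughout block $k$; this is exactly the form to which Lemma~\ref{lemma:rec_eq_noerror} applies.

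The bulk of the proof is an induction on $k$ establishing $\delta^{k}_{kS}\le C_{\delta}/(kS)^{\beta}$, which directly yields the claimed endpoint bound at $t=kS$. For the base case $k=1$, I would invoke part~(i) of Lemma~\ref{lemma:rec_eq_noerror} on $\delta^{1}_{t}$ to produce $t_{0}\le\lceil\log(\delta_{0}/C^{3/(3-2\alpha)})/\log 2\rceil$ with $\delta^{1}_{t_{0}}<1$; then part~(ii) on $[t_{0},S]$, noting that $\delta^{1}_{t_{0}}<1$ makes the parameter $D$ equal to $1/2$, yields a bound of the form $\beta^{\beta}(\beta+(S-t_{0})/2)^{-\beta}$, and requiring this to be at most $C_{\delta}/S^{\beta}$ rearranges algebraically into the second lower bound on $S$ stated in the lemma.

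The inductive step is where the real work lives. Assuming the bound at $t=kS$, I pass from block $k$ to block $k+1$: since $C'/(kS)^{\beta}>C'/((k+1)S)^{\beta}$, a mean-value estimate on $x\mapsto x^{-\beta}$ gives
\[
\delta^{k+1}_{kS}\;\le\;\frac{C_{\delta}}{(kS)^{\beta}}\Bigl(1+\frac{A}{k}\Bigr),\qquad A=\frac{\beta C'}{C^{3/(3-2\alpha)}C_{\delta}}.
\]
Applying Lemma~\ref{lemma:rec_eq_noerror}(ii) from $t'=kS$ to $t=(k+1)S$ then reduces the induction to the algebraic inequality
\[
\beta C_{\delta}^{-1/\beta}\Bigl(k+1-k(1+A/k)^{-1/\beta}\Bigr)\;\le\;D.
\]
This is the main obstacle of the proof: the bracket tends to $1+A/\beta$ as $k\to\infty$ but is larger at $k=1$, so a single condition on $C_{\delta}$ must cover \emph{both} regimes uniformly. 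The calibration $C_{\delta}^{-1/\beta}(1+A^{1/(1+\beta)})\le 1/(2\beta)$ is chosen precisely so that the worst-case right-hand side dominates the bracket across all $k\ge 1$, while the first lower bound $S>C_{\delta}^{1/\beta}(1+A)^{1/\beta}$ guarantees $\delta^{k+1}_{kS}<1$ so that $D=1/2$ indeed applies at every step of the induction.

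To extend the endpoint estimate to arbitrary $t$, I use monotonicity of $\delta^{k}_{t}$ on the interior of each block. Once the clean recursion is in the regime $\delta^{k}_{t}\le 1$, it forces $2\delta^{k}_{t}\le\delta^{k}_{t-1}$, so $\delta^{k}_{t}$ is non-increasing; combining the already-proven bound at $t=(k-1)S$ with the size of the block-specific constant then yields the stated two-term bound with $\lfloor t/S\rfloor$ in the first term and $\lceil t/S\rceil$ in the second.
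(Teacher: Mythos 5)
Your proposal follows the paper's proof essentially step for step: the same block-wise affine change of variables $\delta^{k}_{t}=(\delta_t-C'/(kS)^{\beta})/C^{3/(3-2\alpha)}$, the same induction on $k$ with the base case handled by parts (i) and (ii) of Lemma~\ref{lemma:rec_eq_noerror}, the same mean-value passage from block $k$ to block $k+1$, the same calibration condition on $C_{\delta}$ to close the inductive step (with the constraint on $S$ guaranteeing $D=1/2$ throughout), and the same monotonicity argument to cover $t$ not a multiple of $S$. The only divergence is cosmetic: your mean-value estimate gives the factor $1+A/k$ where the paper writes $1+A/k^{\beta+1}$; your version is the one the derivative bound of $x\mapsto x^{-\beta}$ actually delivers, and the subsequent algebra closes either way.
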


\begin{remark}
In the analysis of SCRN where $\delta_{t}=\mathbb{E}[F(\xv_{t})]-F(\xv^{*})$, and $C'=C_{g}+C_{H}$ which are chosen in \eqref{eq20}, if $C_{\delta}=(2\beta)^{\beta+1}C'$, then the condition $C_{\delta}^{-1/\beta}\left(1+\left(\frac{\beta C'}{C^{3/(3-2\alpha)}C_{\delta}}\right)^{1/(1+\beta)}\right)\leq \frac{1}{2\beta}$ turns to
\[
1+\frac{1}{C\cdot 2^{\frac{\beta}{1+\beta}}\beta^{\frac{\beta}{1+\beta}}}\le \beta^{1/\beta}C'^{1/\beta}
\]
which is satisfied for large $M$. Hence, after $T$ iterations, if the errors of gradient and Hessian are
\[
\mathbb{E}[\|\nabla F(\xv_t)-\gv_{t}\|^\alpha]\le\frac{1}{(\lceil t/S\rceil S)^{\beta}},\quad \mathbb{E}[\|\nabla^2F(\xv_t)-\Hv_{t}\|^{2\alpha}]\le\frac{1}{(\lceil t/S\rceil S)^{\beta}},
\]
then
\begin{align}
     \delta_T\leq  
  \frac{(2\beta)^{\beta+1}(C_{g}+C_{H})C^{3/(3-2\alpha)}}{(\lfloor T/S\rfloor S)^{\beta}}+\frac{C_{g}+C_{H}}{(\lceil T/S\rceil S)^{\beta}}.
\end{align}
\end{remark}
\begin{proof}
We define $\delta^k_t:=(\delta_t-C'/(kS)^{\beta})/C^{3/(3-2\alpha)}$ for $(k-1)S\leq t\leq kS$ and $k\geq 1$. Therefore, we have the following recursive inequality: $\delta^k_{t}\leq (\delta^k_{t}-\delta^k_{t-1})^{2\alpha/3}$ for $(k-1)S+1\leq t\leq kS$. By induction on $k$, we will show that $\delta^k_{kS}\leq C_{\delta}/(kS)^{\beta}$ for all $k\geq 1$. We first prove the statement for the base case $k=1$. Suppose $t_0$ is the first iteration such that $\delta^1_{t_0}<1$. From Lemma \ref{lemma:rec_eq_noerror} (part (i)), we know that $t_0\leq \lceil \log(\delta_0/C^{3/(3-2\alpha)})/\log(2)\rceil$. Moreover, from part (ii) in Lemma \ref{lemma:rec_eq_noerror}, by setting $t'=t_0$, we have:
\begin{equation}
\begin{split}
    \delta^1_{S}&\leq\beta^{\beta} \left(\beta (\delta^1_{t_0})^{-1/\beta}+D(S-t_0)\right)^{-\beta}\\
    &\leq \beta^{\beta} \left(\beta +(S-t_0)/2\right)^{-\beta}\\
    &\leq \frac{C_{\delta}}{S^{\beta}}, 
\end{split}
\end{equation}
where the second inequality is due to fact that $D=1/2$ as $\delta^1_{t_0}<1$ and
the third inequality comes from  the constraints on $C_{\delta}$ and $S$ in the statement of lemma.


Now, suppose that the statement holds up to some $k$. We will show that it also holds for $k+1$. From the induction hypothesis, we know that
\begin{equation}\label{eq:delta_k^k+1}
\begin{split}
    \delta_{kS}&\leq \frac{C^{3/(3-2\alpha)}C_{\delta}}{(kS)^{\beta}}+\frac{C'}{(kS)^{\beta}}\\
    \implies& \delta^{k+1}_{kS}\leq \frac{C_{\delta}}{(kS)^{\beta}}+\frac{C'}{C^{3/(3-2\alpha)}}\left(\frac{1}{(kS)^{\beta}}-\frac{1}{((k+1)S)^{\beta}}\right)\\
    &\quad\quad\leq \frac{C_{\delta}}{(kS)^{\beta}}\left(1+\frac{A}{k^{\beta+1}}\right),
\end{split}
\end{equation}
where  $A= \beta C'/(C^{3/(3-2\alpha)}C_{\delta})$ and the second inequality is due to definition of $\delta^{k+1}_{kS}$.

From the constraint on $C_{\delta}$ in the statement of lemma, we have:
\begin{equation}\label{eq:longimplications}
    \begin{split}
        &C_{\delta}^{-1/\beta}\left(1+A^{1/(1+\beta)}\right)\leq \frac{1}{2\beta}\\
        \overset{(a)}{\implies}&  C_{\delta}^{-1/\beta}\left(1+\frac{A^{1/\beta}}{k^{1/\beta}+A^{1/\beta}/k }\right)\leq \frac{1}{2\beta}\\
        \implies&  C_{\delta}^{-1/\beta}\left(k+1-\frac{k}{\left(1+A^{1/\beta}/k^{1/\beta+1}\right)}\right)\leq \frac{1}{2\beta} \\
        \overset{(b)}{\implies}&  C_{\delta}^{-1/\beta}\left(k+1-\frac{k}{\left(1+A/k^{\beta+1}\right)^{1/\beta}}\right)\leq \frac{1}{2\beta} \\
        \implies&  C_{\delta}^{-1/\beta}(k+1)S \leq\frac{kS}{(C_{\delta}(1+A/k^{\beta+1}))^{1/\beta}}+S/(2\beta),
\end{split}
\end{equation}
where $(a)$ is due to fact that $A^{1/(\beta(\beta+1))}<(\beta^{1/(1+\beta)}+\beta^{-\beta/(1+\beta)}) A^{1/(\beta(1+\beta))}\leq k^{1/\beta}+A^{1/\beta}/k$ for $\beta>1$ and $k>0$ and $(b)$ comes from $(a+b)^{1/\beta}\leq (a^{1/\beta}+b^{1/\beta})$ for $\beta>1$ and any $a,b>0$.

Now, from part (ii) of Lemma \ref{lemma:rec_eq_noerror}, we have:
\begin{equation}
\begin{split}
    \delta^{k+1}_{(k+1)S}&\leq \beta^{\beta} \left(\beta (\delta^{k+1}_{kS})^{-1/\beta}+DS\right)^{-\beta}\\
    &\overset{(a)}{\leq} \left(\frac{kS}{(C_\delta(1+A/k^{\beta+1}))^{1/\beta}}+S/(2\beta)\right)^{-\beta} \\
    &\overset{(b)}{\leq} \frac{C_{\delta}}{((k+1)S)^{\beta}},
\end{split}
\end{equation}
 where $(a)$ is according to \eqref{eq:delta_k^k+1} and the fact that $D=1/2$ (as $\delta^{k+1}_{kS}<1$ due to the constraint on $S$ in the statement of lemma) and $(b)$ comes from \eqref{eq:longimplications}.

From the definition of $\delta^{k}_{kS}$, we can imply that for all $k\geq 1$,
\begin{equation}
    \delta_{kS}\leq \frac{C^{3/(3-2\alpha)}C_{\delta}+C'}{(kS)^{\beta}}.
\end{equation}
Moreover, from $(\delta^k_{t})^{3/2\alpha}+\delta^k_{t}\leq \delta^k_{t-1}$ for $(k-1)S+1\leq t\leq kS$, we can imply that: $\delta^{k}_t\leq \delta^k_{(k-1)S}$ for $(k-1)S+1\leq t\leq kS$. Therefore,
\begin{equation}
    \delta_t\leq \frac{C_{\delta}C^{3/(3-2\alpha)}}{(\lfloor t/S\rfloor S)^{\beta}}+\frac{C'}{(\lceil t/S\rceil S)^{\beta}}\quad \text{mod}(t,S)\neq0.
\end{equation}



\end{proof}

The above lemma shows that it suffices to bound the expectations of error terms for stochastic gradient and Hessian by $\mathcal{O}(1/t^{\beta})$ and $\delta_t:=\mathbb{E}[F(\xv_t)]-F(\xv^*)$ still has the same convergence rate in Theorem \ref{th1_expectation}. In the following, we show that 
incorporating time-varying batch sizes in conjunction with variance reduction improves sample complexity results.

\begin{algorithm}
\caption{variance-reduced stochastic cubic regularized Newton method}\label{alg:cap1}
\textbf{Input:} Maximum number of iterations $T$, batch sizes $\{n^{t}_{g}\}_{t=1}^{T}$, $\{n^{t}_{H}\}_{t=1}^{T}$, the period length $S$, initial point $\xv_{0}$, and cubic penalty parameter $M$.
\begin{algorithmic}[1]
\State $t\gets 0$
\While{$t\le T$}
\State Sample index set $\mathcal{J}_{t}$ with $|\mathcal{J}_{t}|=n^{t}_{g}$; $\mathcal{I}_{t}$ with $|\mathcal{I}_{t}|=n^{t}_{H}$
\State \[
\vv_{t} \gets \begin{cases}
\nabla f_{\mathcal{J}_{t}}(\xv_{t}),\quad &\text{mod}(t,S)=0\\
\nabla f_{\mathcal{J}_{t}}(\xv_{t})-\nabla f_{\mathcal{J}_{t}}(\xv_{t-1})+\vv_{t-1},\quad&\text{else}
\end{cases}
\]
\State \[
\Uv_{t} \gets \begin{cases}
\nabla^2 f_{\mathcal{I}_{t}}(\xv_{t}),\quad &\text{mod}(t,S)=0\\
\nabla^2 f_{\mathcal{I}_{t}}(\xv_{t})-\nabla^2 f_{\mathcal{I}_{t}}(\xv_{t-1})+\Uv_{t-1},\quad&\text{else}
\end{cases}
\]
\State ${\bf\Delta}_{t} \gets \argmin_{{\bf\Delta}\in\mathbb{R}^{d}}\langle \vv_{t}, {\bf\Delta}\rangle+\frac{1}{2}\langle {\bf\Delta}, \Uv_{t}{\bf\Delta}\rangle+\frac{M}{6}\|{\bf\Delta}\|^{3}$
\State $\xv_{t+1}\gets \xv_{t}+{\bf \Delta}_{t}$
\State $t \gets t+1$
\EndWhile
\State\Return{$\xv_{t}$}
\end{algorithmic}
\label{algorithm2}
\end{algorithm}

We consider the following assumptions:
\begin{assumption}\label{assump:individual_smoothness}
We assume that $f(\xv,\xi)$ satisfies $L'_1$-average smoothness and $(L'_2,\alpha)$-average Hessian Lipschitz continuity, i.e., $\mathbb{E}[\|\nabla f(\xv,\xi) -\nabla f(\yv,\xi)\|^{2}] \leq L'^{2}_{1}\|\xv-\yv\|^{2}$ and $\mathbb{E}[\|\nabla^2 f(\xv,\xi)-\nabla^2 f(\yv,\xi)\|^{2\alpha}] \leq L'^{2\alpha}_{2}\|\xv-\yv\|^{2\alpha}$ for all $\xv,\yv \in \mathbb{R}^d$ and $1\le\alpha<3/2$.
\end{assumption}


It is noteworthy that the above assumption hold in policy-based RL setting that is considered in this paper.

Let define $\nabla f_{\mathcal{J}_{t}}(\xv_{t}):=\frac{1}{|\mathcal{J}_{t}|}\sum_{j\in\mathcal{J}_{t}}\nabla f(\xv_{t},\xi_{j})$ and $\nabla^{2} f_{\mathcal{I}_{t}}(\xv_{t}):=\frac{1}{|\mathcal{I}_{t}|}\sum_{i\in\mathcal{I}_{t}}\nabla^2 f(\xv_{t},\xi_{i})$.
By adapting the variance reduction method in \cite{zhou2020stochastic} (see Algorithm \ref{algorithm2}), we can have the following bound on the errors of estimated gradient $\vv_t$ and Hessian $\Uv_t$:

\begin{lemma} \label{lemma:vr}
Let $n_g^t$ and $n_H^t$ be the number of stochastic gradients and Hessian matrices that are taken at time $t$. Suppose that for a given $\epsilon>0$, we take the following number of samples at checkpoints:
\begin{equation}
    n_g^t\ge \frac{2\sigma_{1}^{2}}{\epsilon^{2/\alpha}}, n_H^t\ge \frac{2^{1/\alpha}(8e \cdot \log d)^{2}\cdot\sigma_{2,\alpha}^{2/\alpha}S^{1-1/\alpha}}{\epsilon^{1/\alpha}}.
\end{equation}
and at the other iterations, we take the following number of samples:
\begin{equation}\label{eq:bounds on samples of grad and hessian}
    n_g^t(\xv_{t},\xv_{t-1})\ge \frac{4 L'^{2}_{1} S \|\mathbf{\Delta}_{t-1}\|^2}{\epsilon^{2/\alpha}}, n_H^t(\xv_{t},\xv_{t-1})\ge \frac{4\times 2^{1/\alpha}(8e\cdot \log d)^{2}\cdot L'^{2}_{2} S \|\mathbf{\Delta}_{t-1}\|^{2}}{\epsilon^{1/\alpha}},
\end{equation}
Then, under Assumptions \ref{assump2} and \ref{assump:individual_smoothness} for a specific amount of $\alpha\in[1,3/2)$, we have:
\begin{equation}\label{eq:bounds on grad and hessian}
    \mathbb{E}[\|\nabla F(\xv_t)-\vv_t\|^{\alpha}]\leq \epsilon, \qquad \mathbb{E}[\|\nabla^2 F(\xv_t)-\Uv_t\|^{2\alpha}]\leq \epsilon.
\end{equation}
\end{lemma}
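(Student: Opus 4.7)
The plan is to analyze the gradient and Hessian errors separately, exploiting the SPIDER-style recursion built into Algorithm \ref{algorithm2}. Throughout, let $t_0 := \lfloor t/S \rfloor S$ denote the most recent checkpoint at or before iteration $t$, and let $\mathcal{F}_{s-1}$ be the $\sigma$-algebra generated by all random choices before the sampling performed at step $s$.

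For the gradient bound I would unroll line 4 to obtain the telescoping decomposition
\begin{equation*}
\vv_t - \nabla F(\xv_t) = \bigl(\vv_{t_0}-\nabla F(\xv_{t_0})\bigr) + \sum_{s=t_0+1}^{t} \eta_s,
\end{equation*}
where $\eta_s := \nabla f_{\mathcal{J}_s}(\xv_s) - \nabla f_{\mathcal{J}_s}(\xv_{s-1}) - \bigl(\nabla F(\xv_s) - \nabla F(\xv_{s-1})\bigr)$ satisfies $\mathbb{E}[\eta_s\mid \mathcal{F}_{s-1}]=\mathbf{0}$. Orthogonality of the martingale increments in $L^2$ gives $\mathbb{E}\|\vv_t-\nabla F(\xv_t)\|^2 = \mathbb{E}\|\vv_{t_0}-\nabla F(\xv_{t_0})\|^2 + \sum_{s=t_0+1}^t \mathbb{E}\|\eta_s\|^2$. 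Assumption \ref{assump2} bounds the checkpoint term by $\sigma_1^2/n_g^{t_0} \le \epsilon^{2/\alpha}/2$, while $L_1'$-average smoothness and iid sampling of $\mathcal{J}_s$ yield $\mathbb{E}[\|\eta_s\|^2 \mid \mathcal{F}_{s-1}] \le L_1'^{2}\|\mathbf{\Delta}_{s-1}\|^2/n_g^s \le \epsilon^{2/\alpha}/(4S)$. Summing at most $S-1$ increments gives $\mathbb{E}\|\vv_t-\nabla F(\xv_t)\|^2 \le \epsilon^{2/\alpha}$, and since $\alpha/2\le 1$ makes $x\mapsto x^{\alpha/2}$ concave, Jensen's inequality delivers $\mathbb{E}\|\vv_t-\nabla F(\xv_t)\|^\alpha \le \epsilon$.

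For the Hessian bound the analogous decomposition $\Uv_t-\nabla^2 F(\xv_t) = (\Uv_{t_0}-\nabla^2 F(\xv_{t_0})) + \sum_{s=t_0+1}^t \Xi_s$ produces matrix-valued martingale differences $\Xi_s$. The checkpoint piece is handled by the chain of inequalities leading to \eqref{10030}: applying Lemma \ref{lemma_norm_op_summation_of_iid_matrix} to the $n_H^{t_0}$ centered Hessian samples at $\xv_{t_0}$ shows that the prescribed checkpoint batch size controls this contribution by $\epsilon/2$ in the $2\alpha$-th moment. For each subsequent $\Xi_s$, conditioning on $(\mathcal{F}_{s-1},\xv_s)$ makes it a sample mean of iid zero-mean matrices whose $2\alpha$-th moment is bounded, via $L_2'$-average Hessian Lipschitzness, by $L_2'^{2\alpha}\|\mathbf{\Delta}_{s-1}\|^{2\alpha}$; combining this with Lemma \ref{lemma_norm_op_summation_of_iid_matrix} conditionally gives $\mathbb{E}\|\Xi_s\|^{2\alpha} \le (8e\log d)^{\alpha} L_2'^{2\alpha}\|\mathbf{\Delta}_{s-1}\|^{2\alpha}/(n_H^s)^\alpha$, which under the prescribed $n_H^s$ is at most $\epsilon/(2\cdot 4^\alpha (8e\log d)^\alpha S^\alpha)$. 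I would then invoke a matrix Burkholder--Davis--Gundy / Rosenthal-type inequality for operator-norm martingales to combine the at most $S-1$ increments with only one additional $\sqrt{\log d}$ factor (via the predictable quadratic variation $\sum_s \mathbb{E}[\Xi_s^2\mid \mathcal{F}_{s-1}]$), so that the martingale part also contributes at most $\epsilon/2$.

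The principal technical obstacle is exactly that last combination step: summing matrix-valued martingale differences in the operator-norm $L^{2\alpha}$ metric. A naive Minkowski inequality in $L^{2\alpha}$ destroys the martingale cancellation and produces an $S^\alpha$ blow-up that cannot be reabsorbed by a polynomial batch-size increase; the cancellation must be recovered through the square-function (quadratic-variation) bound of matrix BDG. This is precisely why the stated checkpoint batch size carries $(8e\log d)^2$ rather than $(8e\log d)$ and an extra $S^{1-1/\alpha}$ factor: they are calibrated to absorb both the $\sqrt{\log d}$ penalty from the matrix BDG inequality and the $\alpha\ge 1$ convexity step $\mathbb{E}\bigl(\sum_s \|Y_s\|^2\bigr)^\alpha \le S^{\alpha-1}\sum_s \mathbb{E}\|Y_s\|^{2\alpha}$ needed to pass from the quadratic variation back to the per-step bound above.
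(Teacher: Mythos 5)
Your proposal is correct and follows essentially the same route as the paper's proof: the gradient bound via the telescoping martingale decomposition, $L^2$ orthogonality, average smoothness, and Jensen's inequality is identical, and for the Hessian the "matrix BDG/square-function inequality" you invoke is precisely the paper's Lemma~\ref{lemma_norm_op_summation_of_iid_matrix}, which the paper applies directly to $\sum_k \Vv_k$ to pass to $\bigl\|\sum_k \Vv_k^2\bigr\|^{\alpha}$, followed by the same convexity step contributing the $S^{\alpha-1}$ factor you identify. Your calibration of the batch sizes against the $\log d$ and $S^{\alpha-1}$ penalties matches the paper's computation up to absolute constants.
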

\begin{proof}
First for the gradient estimator $\vv_{t}$, we have $\vv_{t}-\nabla F(\xv_{t})=\sum_{k=\lfloor t/S\rfloor S}^{t}\uv_{k}$ such that
\[
\uv_{k}=\begin{cases}
\nabla f_{\mathcal{J}_{k}}(\xv_{k})-\nabla F(\xv_{k}),\quad &k=\lfloor t/S\rfloor S,\\
\nabla f_{\mathcal{J}_{k}}(\xv_{k})-\nabla f_{\mathcal{J}_{k}}(\xv_{k-1})-\nabla F(\xv_{k})+\nabla F(\xv_{k-1}),\quad& k>\lfloor t/S\rfloor S.
\end{cases}
\]
We know that $\mathbb{E}[\uv_{k}|\mathcal{G}_{k}]=0$ where $\mathcal{G}_{k}=\sigma(\xv_{0},\ldots,\xv_{k}, \uv_0,\cdots,\uv_{k-1})$. Conditioned on $\mathcal{G}_{k}$ and for $k>\lfloor t/S\rfloor S$, we have
\begin{align}
    &\mathbb{E}[\|\uv_{k}\|^{2}|\mathcal{G}_{k}]=\mathbb{E}\left\|\frac{1}{n^{k}_{g}}\sum_{i=1}^{n^{k}_{g}}\nabla f(\xv_{k},\xi_{i})-\nabla f(\xv_{k-1},\xi_{i})-\nabla F(\xv_{k})+\nabla F(\xv_{k-1})\right\|^{2}\nonumber\\
    &\overset{(a)}{=}\frac{1}{n^{k}_{g}}\mathbb{E}\left\|\nabla f(\xv_{k},\xi_{1})-\nabla f(\xv_{k-1},\xi_{1})-\nabla F(\xv_{k})+\nabla F(\xv_{k-1})\right\|^{2}\nonumber\\
    &\overset{(b)}{\le} \frac{2}{n^{k}_{g}}\mathbb{E}\|\nabla f(\xv_{k},\xi_{1})-\nabla f(\xv_{k-1},\xi_{1})\|^{2}+\frac{2}{n^{k}_{g}}\mathbb{E}\|\nabla F(\xv_{k})-\nabla F(\xv_{k-1})\|^{2}\nonumber\\
    &\le \frac{4L'^{2}_{1}}{n^{k}_{g}}\|\xv_{k}-\xv_{k-1}\|^{2}
\end{align}
 where (a) comes from $[\nabla f(\xv_{k},\xi_{i})-\nabla f(\xv_{k-1},\xi_{i})-\nabla F(\xv_{k})+\nabla F(\xv_{k-1})]$'s are i.i.d conditioned on $\mathcal{G}_k$ for $1\le i\le n^{k}_{g}$. Inequality (b) is from $(a+b)^{2}\le 2a^{2}+2b^{2}$ and (c) is derived by Assumption \ref{assump:individual_smoothness}. For $k=\lfloor t/S\rfloor S$, $\mathbb{E}[\|\uv_{k}\|^{2}]\le \frac{\sigma_{1}^{2}}{n^{k}_{g}}$. 
\begin{align}
    &\mathbb{E}[\|\nabla F(\xv_{t})-\vv_{t}\|^{2}]=\mathbb{E}\left\|\sum_{k=\lfloor t/S\rfloor S}^{t}\uv_{k}\right\|^{2}\le \sum_{k=\lfloor t/S\rfloor S}^{t}\mathbb{E}[\mathbb{E}[\|\uv_{k}\|^{2}|\mathcal{G}_{k}]]\nonumber\\
    &\le \frac{\sigma_{1}^{2}}{n^{\lfloor t/S\rfloor S}_{g}}+\sum_{k=\lfloor t/S\rfloor S+1}^{t}{4L'^{2}_{1}}\mathbb{E}\left[\frac{\|\xv_{k}-\xv_{k-1}\|^{2}}{n^{k}_{g}}\right]
\end{align}
where the first inequality comes from the fact that $\mathbb{E}[\uv_{k}^{T}\uv_{k'}]=\mathbb{E}[\uv_{k'}^{T}\mathbb{E}[\uv_{k}|\mathcal{G}_{k}]]=0$ for $k> k'$.

If we take $ n_g^k\ge  \frac{2\sigma_{1}^{2}}{\epsilon^{2/\alpha}}$ samples at checkpoints ($k=\lfloor t/S\rfloor S$) and and at the other iterations, we take $n_g^k\ge \frac{4 L'^{2}_{1} S \|\mathbf{\Delta}_{k-1}\|^2}{\epsilon^{2/\alpha}}$ samples, we have
\[
\mathbb{E}[\|\nabla F(\xv_{t})-\vv_{t}\|^{\alpha}]\le \left(\mathbb{E}[\|\nabla F(\xv_{t})-\vv_{t}\|^{2}]\right)^{\alpha/2}\le \epsilon.
\]
Now we give a proof for Hessian estimator which is similar to the gradient one.

First for the Hessian estimator $\Uv_{t}$, we have $\nabla^{2} F(\xv_{t})-\Uv_{t}=\sum_{k=\lfloor t/S\rfloor S}^{t}\Vv_{k}$ such that
\[
\Vv_{k}=\begin{cases}
\nabla^{2} f_{\mathcal{J}_{k}}(\xv_{k})-\nabla^{2} F(\xv_{k}),\quad &k=\lfloor t/S\rfloor S,\\
\nabla^{2} f_{\mathcal{J}_{k}}(\xv_{k})-\nabla^{2} f_{\mathcal{J}_{k}}(\xv_{k-1})-\nabla^{2} F(\xv_{k})+\nabla^{2} F(\xv_{k-1}),\quad& k>\lfloor t/S\rfloor S.
\end{cases}
\]
Let $\mathcal{H}_{k}=\sigma(\xv_0,\cdots,\xv_k,\Vv_{0},\ldots,\Vv_{k-1})$. Then, we have $\mathbb{E}[\Vv_{k}|\mathcal{H}_{k}]=0$. Conditioned on $\mathcal{H}_{k}$ and for $k>\lfloor t/S\rfloor S$, we have
\begin{align}
    &\mathbb{E}[\|\Vv_{k}\|^{2\alpha}|\mathcal{H}_{k}]=\mathbb{E}\left\|\frac{1}{n^{k}_{H}}\sum_{i=1}^{n^{k}_{H}}\nabla^{2} f(\xv_{k},\xi_{i})-\nabla^{2} f(\xv_{k-1},\xi_{i})-\nabla^{2} F(\xv_{k})+\nabla^{2} F(\xv_{k-1})\right\|^{2\alpha}\nonumber\\
    &\overset{(a)}{\le}(8e\log d)^{\alpha}\cdot\mathbb{E}\left\|\frac{1}{(n^{k}_{H})^{2}}\sum_{i=1}^{n^{k}_{H}}[\nabla^{2} f(\xv_{k},\xi_{i})-\nabla^{2} f(\xv_{k-1},\xi_{i})-\nabla^{2} F(\xv_{k})+\nabla^{2} F(\xv_{k-1})]^{2}\right\|^{\alpha}\nonumber\\
    &\overset{(b)}{\le}\frac{(8e\log d)^{\alpha}}{(n^{k}_{H})^{\alpha}}\cdot \frac{1}{n^{k}_{H}}\sum_{i=1}^{n^{k}_{H}}\mathbb{E}\left\|[\nabla^{2} f(\xv_{k},\xi_{i})-\nabla^{2} f(\xv_{k-1},\xi_{i})-\nabla^{2} F(\xv_{k})+\nabla^{2} F(\xv_{k-1})]^{2}\right\|^{\alpha}\nonumber\\
    &\overset{(c)}{\le}\frac{(8e\log d)^{\alpha}}{(n^{k}_{H})^{\alpha}}\mathbb{E}\left\|[\nabla^{2} f(\xv_{k},\xi_{1})-\nabla^{2} f(\xv_{k-1},\xi_{1})-\nabla^{2} F(\xv_{k})+\nabla^{2} F(\xv_{k-1})]\right\|^{2\alpha}\nonumber\\
    &\overset{(d)}{\le} \frac{2^{2\alpha-1}(8e\log d)^{\alpha}}{(n^{k}_{H})^{\alpha}}\mathbb{E}\|\nabla^{2} f(\xv_{k},\xi_{1})-\nabla^{2} f(\xv_{k-1},\xi_{1})\|^{2\alpha}+\frac{2^{2\alpha-1}(8e\log d)^{\alpha}}{(n^{k}_{H})^{\alpha}}\mathbb{E}\|\nabla^{2} F(\xv_{k})-\nabla^{2} F(\xv_{k-1})\|^{2\alpha}\nonumber\\
    &\overset{(e)}{\le} \frac{2^{2\alpha}(8e\log d)^{\alpha}(L'_{1})^{2\alpha}}{(n^{k}_{H})^{\alpha}}\|\xv_{k}-\xv_{k-1}\|^{2\alpha},
\end{align}
where (a) comes from Lemma \ref{lemma_norm_op_summation_of_iid_matrix} and (b) comes from Jensen's inequality for operator norm $\|\cdot\|^{\alpha}$. (c) is derived by $\|AB\|\le \|A\|\|B\|$ and the fact that $[\nabla^{2} f(\xv_{k},\xi_{i})-\nabla^{2} f(\xv_{k-1},\xi_{i})-\nabla^{2} F(\xv_{k})+\nabla^{2} F(\xv_{k-1})]$'s are i.i.d conditioned on $\mathcal{H}_{k}$ for $1\le i\le n^{k}_{g}$. Inequality (d) is from inequality $(a+b)^{2\alpha}\le 2^{2\alpha-1}(a^{2\alpha}+b^{2\alpha})$ and (e) is derived by Assumption \ref{assump:individual_smoothness}. For $k=\lfloor t/S\rfloor S$, $\mathbb{E}[\|\Vv_{k}\|^{2\alpha}]\le \frac{(8e\log d)^{\alpha}\cdot\sigma_{2,\alpha}^{2}}{(n^{k}_{H})^{\alpha}}$ from Lemma \ref{lemma_norm_op_summation_of_iid_matrix}. Then
\begin{align}
    &\mathbb{E}[\|\nabla^{2} F(\xv_{t})-\Uv_{t}\|^{2\alpha}]=\mathbb{E}\left\|\sum_{k=\lfloor t/S\rfloor S}^{t}\Vv_{k}\right\|^{2\alpha}\overset{(a)}{\le}(8e\log d)^{\alpha}\mathbb{E}\left\|\frac{S}{S}\cdot\sum_{k=\lfloor t/S\rfloor S}^{t}\Vv_{k}^{2}\right\|^{\alpha}
    \nonumber\\
    &\overset{(b)}{\le}{(8e\log d)^{\alpha}}{S^{\alpha-1}}\sum_{k=\lfloor t/S\rfloor S}^{t}\mathbb{E}[\|\Vv_{k}^{2}\|^{\alpha}]\overset{(c)}{\le}{(8e\log d)^{\alpha}}{S^{\alpha-1}}\sum_{k=\lfloor t/S\rfloor S}^{t}\mathbb{E}[\|\Vv_{k}\|^{2\alpha}]\nonumber\\
    &\le \frac{(8e\log d)^{2\alpha}\sigma_{2,\alpha}^{2}S^{\alpha-1}}{(n^{\lfloor t/S\rfloor S}_{H})^{\alpha}}+\sum_{k=\lfloor t/S\rfloor S+1}^{t}{2^{2\alpha}(8e\log d)^{2\alpha}S^{\alpha-1}(L'_{2})^{2\alpha}}\mathbb{E}\left[\frac{\|\xv_{k}-\xv_{k-1}\|^{2\alpha}}{(n^{k}_{H})^{\alpha}}\right],
\end{align}
where (a) is derived by Lemma \ref{lemma_norm_op_summation_of_iid_matrix}. (b) comes from Jensen's inequality for $\|\cdot\|^{\alpha}$ and (c) is from $\|AB\|\le\|A\|\|B\| $. 

If we take $ n_H^k\ge  \frac{2^{1/\alpha}(8e \cdot \log d)^{2}\cdot\sigma_{2,\alpha}^{2/\alpha}S^{1-1/\alpha}}{\epsilon^{1/\alpha}}$ samples at checkpoints ($k=\lfloor t/S\rfloor S$) and and at the other iterations, we take $n_H^k\ge \frac{4\times2^{1/\alpha}(8e\cdot \log d)^{2}\cdot L'^{2}_{2} S \|\mathbf{\Delta}_{k-1}\|^{2}}{\epsilon^{1/\alpha}}$ samples, we have
\[
\mathbb{E}[\|\nabla^{2} F(\xv_{t})-\Uv_{t}\|^{2\alpha}]\le \epsilon.
\]
\end{proof}

\textbf{Theorem 3.}
\textit{
Under gradient dominance property with $\alpha=1$, Assumptions \ref{assump1}, \ref{assump2} for $\alpha=1$, and \ref{assump:individual_smoothness} for $\alpha=1$, Algorithm \ref{algorithm2} can achieve $\epsilon$-global stationary point in expectation by querying $\mathcal{O}(\epsilon^{-2})$ stochastic gradients and ${\mathcal{O}}(\epsilon^{-{1}})$ stochastic Hessian on average.}

\begin{proof}
From Lemma \ref{lemma:two_ineq}, using the estimates of gradient and Hessian from $\vv_t$ and $\Uv_t$, we have for $\alpha=1$:
\begin{equation}
      F(\xv_{t}+{\bf {\Delta}}_{t})-F(\xv^*)\leq C(F(\xv_t)-F(\xv_{t}+{\bf {\Delta}}_{t}))^{2/3}+C_g\|\nabla F(\xv_t)-\vv_{t}\|+C_{H}\|\nabla^2F(\xv_t)-\Uv_{t}\|^{2}.
\end{equation}
By defining $\delta_t:=\mathbb{E}[F(\xv_t)]-F(\xv^*)$ and using Jensen's inequality, we can rewrite the above inequality as follows:
\begin{equation}
    \delta_{t+1}\leq C(\delta_t-\delta_{t+1})^{2/3}+C_g\mathbb{E}[\|\nabla F(\xv_t)-\vv_{t}\|]+C_{H}\mathbb{E}[\|\nabla^2F(\xv_t)-\Uv_{t}\|^{2}].
\end{equation}

Now, suppose for $(k-1)S<t\leq kS$, $k\geq 1$, we set $\epsilon$ to $(kS)^{-{2}}$ in $n_g^t$'s and $n_H^t$'s. Then, from Lemma \ref{lemma:vr}, for all $t\geq 0,$ 
\begin{equation}
    \delta_{t+1}\leq C(\delta_t-\delta_{t+1})^{2/3}+\frac{C_g+C_{H}}{(\lceil t/S\rceil S)^{2}}, 
\end{equation}
 and from Lemma \ref{lemma:rec_eq}, 
 \begin{align}
     \delta_T\leq  
  \frac{16(C_{g}+C_{H})C^{3}}{(\lfloor T/S\rfloor S)^{2}}+\frac{C_{g}+C_{H}}{(\lceil T/S\rceil S)^{2}}.
 \end{align}
 After
\begin{align}
T\ge\frac{\left[16(C_{g}+C_{H})C^{3}+C_{g}+C_{H}\right]^{1/2}}{\epsilon^{1/2}}
\end{align}
 iterations, $\delta_{T}\le\epsilon$.
 Furthermore, according to Lemma \ref{lemma:two_ineq} \eqref{eq:2}, we have:
 \begin{align}
\frac{3M-2L_2-8}{12}\|{{\bf{\Delta}}_{t}}\|^3&\leq F(\xv_t)-F(\xv_{t}+{{\bf{\Delta}}_{t}})+\frac{2\|\nabla F(\xv_t)-\gv_{t}\|^{3/2}}{3}+ \frac{\|\nabla^2 F(\xv_t)-\Hv_{t}\|^{3}}{6}
 \end{align}
 and using inequality $(a+b+c)^{2/3}\le 3^{2/3-1}(a^{2/3}+b^{2/3}+c^{2/3})$ for $a,b,c>0$ and then taking expectation, we have:
 \begin{equation}\label{eq00123}
 \begin{split}
    &\mathbb{E}[\|\mathbf{\Delta}_t\|^{2}]\overset{(a)}{\le}\\
    & C''[(\mathbb{E}[F(\xv_t)]-\mathbb{E}[F(\xv_{t}+{{\bf{\Delta}_{t}}})])^{2/3}+c_{g}\mathbb{E}\|\nabla F(\xv_t)-\gv_{t}\|+ c_{H}\mathbb{E}[\|\nabla^2 F(\xv_t)-\Hv_{t}\|^{2}]]\\
    &\implies \sum_{t=(k-1)S}^{kS-1} \mathbb{E}[\|\mathbf{\Delta}_t\|^2]\le C''\left[\sum_{t=(k-1)S}^{kS-1}(\delta_t-\delta_{t+1})^{2/3}+c_{g}\mathbb{E}\|\nabla F(\xv_t)-\gv_{t}\|+ c_{H}\mathbb{E}[\|\nabla^2 F(\xv_t)-\Hv_{t}\|^{2}]\right]\\
    &\qquad\qquad\qquad\qquad\qquad\overset{(b)}{\le}C''\left[S^{1/3}(\delta_{(k-1)S}-\delta_{kS})^{2/3}+\frac{(c_{g}+c_{H})}{(kS)^{\beta/ \alpha}}\right]\\
    &\qquad\qquad\overset{(c)}{\le}C''\left[S^{1/3}\left(\frac{16(C_{g}+C_{H})C^{3}}{((k-1) S)^{2}}+\frac{C_{g}+C_{H}}{((k -1)S)^{2}}\right)^{2/3}+\frac{(c_{g}+c_{H})}{(kS)^{2}}\right]=\mathcal{O}(S^{1/3}(kS)^{-4/3}),
\end{split}
\end{equation}
where $C''=\frac{12^{2/3}}{3^{1/3}(3M-2L_{2}-8)^{2/3}}$, $c_{g}=\frac{2}{3}$, and $c_{H}=\frac{1}{6}$ in (a), and in (b), we used Jensen's inequality for the first term in the sum:
\[
\frac{1}{S}\sum_{t=(k-1)S}^{kS-1}(\delta_t-\delta_{t+1})^{2/3}\le \left(\frac{1}{S}\sum_{t=(k-1)S}^{kS-1}\delta_t-\delta_{t+1}\right)^{2/3}=\left(\frac{1}{S}(\delta_{(k-1)S}-\delta_{kS})\right)^{2/3}
\]
and we set $n_g^t$'s and $n_H^t$'s such that 
\begin{align*}
    &\mathbb{E}[\|\nabla F(\xv_t)-\gv_{t}\|]\le \frac{1}{(kS)^{2}}\\
    &\mathbb{E}[\|\nabla^2 F(\xv_t)-\Hv_{t}\|^{2}]\le \frac{1}{(kS)^{2}}.
\end{align*}
Moreover, (c) is according to Lemma \ref{lemma:rec_eq}. 

Hence, according to Lemma \ref{lemma:vr} for the case $\alpha=1$, in order to obtain the following errors:
\[
\mathbb{E}[\|\nabla F(\xv_t)-\gv_{t}\|])\le \frac{1}{(kS)^{2}},\quad  \mathbb{E}[\|\nabla^2 F(\xv_t)-\Hv_{t}\|^{2}]\le \frac{1}{(kS)^{2}},
\]
the average sample complexity of gradient $\mathbb{E}\left[\sum_{t=1}^T n_g^t\right]$ must be in order of
\begin{align}\label{sample_g_VSCRN}
    \sum_{\mod(t,S)=0}\frac{2\sigma_{1}^{2}}{t^{-4}} +\sum_{\mod(t,S)\neq 0} \frac{4\cdot L'^{2}_{1} S \mathbb{E}[\|\mathbf{\Delta}_{t-1}\|^2]}{(\lceil t/S\rceil S)^{-4}}.
\end{align}
It is enough to have $\mathbb{E}\left[\sum_{t=1}^T n_g^t\right]$ be greater than the following upper bound for \eqref{sample_g_VSCRN}:
 \begin{equation}\label{sample_gradient_VRSCRN}
 \begin{split}
           &\sum_{\mod(t,S)=0}\frac{2\sigma_{1}^{2}}{t^{-4}} +\sum_{\mod(t,S)\neq 0} \frac{4\cdot L'^{2}_{1} S \mathbb{E}[\|\mathbf{\Delta}_{t-1}\|^2]}{(\lceil t/S\rceil S)^{4}}\\
          &= \sum_{k=1}^{\lceil T/S\rceil}2\sigma_{1}^{2}(kS)^{4} +\sum_{k=1}^{\lceil T/S\rceil}\sum_{t=(k-1)S+1}^{kS}4\cdot L'^{2}_{1} S (kS)^{4}\mathbb{E}[\|\mathbf{\Delta}_{t-1}\|^2] \\  
     &\overset{(a)}{\le}  2\sigma_{1}^{2}\sum_{k=1}^{\lceil T/S\rceil}(kS)^{4} +\sum_{k=1}^{\lceil T/S\rceil}4\cdot L'^{2}_{1} S (kS)^{4}C''\left[S^{1/3}\left(\frac{16(C_{g}+C_{H})C^{3}}{(k S)^{2}}+\frac{C_{g}+C_{H}}{(k S)^{2}}\right)^{2/3}+\frac{(c_{g}+c_{H})}{(kS)^{2}}\right]\\
     &\le 2\sigma_{1}^{2}\frac{T^{5}}{S}+4c\cdot L'^{2}_{1} C''\left[\left[\left(16(C_{g}+C_{H})C^{3}\right)+(C_{g}+C_{H})\right]^{2/3}+\frac{c_{g}+c_{H}}{(kS)^{2(1 - 2/3)}}\right]S^{1/3} T^{5-4/3}
 \end{split}
 \end{equation}
 where in (a), we used \eqref{eq00123}. If $S=\lfloor \frac{T}{q}\rfloor$ where $q$ is a constant integer, the average sample complexity of gradient, would be in the order of $\mathcal{O}(T^{4})$. As $T\ge \frac{\left[16(C_{g}+C_{H})C^{3}+C_{g}+C_{H}\right]^{1/2}}{\epsilon^{1/2}}$, the average sample complexity of gradient is at least
 \[
 \mathcal{O}\left(\frac{2\sigma_{1}^{2}\left[16(C_{g}+C_{H})C^{3}+C_{g}+C_{H}\right]^{2}}{\epsilon^{2}}\right)
 \]
 in order to get $\delta_{T}\le \epsilon$.
 
Moreover, for the average sample complexity of Hessian $\mathbb{E}\left[\sum_{t=1}^T n_H^t\right]$, we have:
 \begin{equation}\label{sample_Hessian_VRSCRN}
 \begin{split}
     &\sum_{\mod(t,S)=0}\frac{(8e \cdot \log d)^{2}\cdot\sigma_{2,1}^{2}}{(t)^{-2}} +\sum_{\mod(t,S)\neq 0} 
           \frac{2(8e\cdot \log d)^{2}\cdot L'^{2}_{2} S \mathbb{E}[\|\mathbf{\Delta}_{t-1}\|^{2}]}{(\lceil t/S\rceil S)^{-2}}\\
          &= \sum_{k=1}^{\lceil T/S\rceil}{(8e \cdot \log d)^{2}\cdot\sigma_{2,1}^{2}}{(kS)^{2}}+\sum_{k=1}^{\lceil T/S\rceil}\sum_{t=(k-1)S+1}^{kS} {2(8e\cdot \log d)^{2}\cdot L'^{2}_{2} S }{(k S)^{2}}\mathbb{E}[\|\mathbf{\Delta}_{t-1}\|^{2}]\\  
     &\le  {(8e \cdot \log d)^{2}\cdot\sigma_{2,1}^{2}S^{2}}\sum_{k=1}^{\lceil T/S\rceil}{k^{2}}+\\
     &\sum_{k=1}^{\lceil T/S\rceil} {2(8e\cdot \log d)^{2}\cdot L'^{2}_{2} S }{(k S)^{2}}C''\left[S^{1/3}\left(\frac{16(C_{g}+C_{H})C^{3}}{(k S)^{2}}+\frac{C_{g}+C_{H}}{(k S)^{2}}\right)^{2/3}+\frac{(c_{g}+c_{H})}{(kS)^{2}}\right]\\
     & \le  {(8e \cdot \log d)^{2}\cdot\sigma_{2,1}^{2}}{T^{3}}+\\
     & 2(8e\cdot \log d)^{2}\cdot L'^{2}_{2} S^{1/3}(T)^{3-4/3}C''\left([16(C_{g}+C_{H})C^{3}+(C_{g}+C_{H})]^{2/3}+\frac{c_{g}+c_{H}}{(kS)^{2(1-2/3)}}\right)
 \end{split}
 \end{equation}
 
By setting $S=\lfloor\frac{T}{q}
\rfloor$ where $q$ is a positive integer constant, the average sample complexity of Hessian, would be in the order of $\mathcal{O}(T^{2})$. As $T=\mathcal{O}(\epsilon^{-\frac{1}{2}})$, the average sample complexity of Hessian is in the order of $\mathcal{O}\left(\frac{1}{\epsilon}\right)$.

\end{proof}

\subsection{Proof of RL results}
Two important classes of policies:
1)  Scalar-action, fixed-variance Gaussian policy:
\begin{align}\label{gaussian_policy}
    \pi_{\theta}(a|s)=\frac{1}{\sigma\sqrt{2\pi}}\exp\left\{-\frac{1}{2}\left(\frac{a-\theta^{T}\phi(s)}{\sigma}\right)^{2}\right\}.
\end{align}
2) Softmax tabular policy:
\begin{align}\label{soft_max_policy}
    \pi_{\theta}(a|s)=\frac{\exp(\theta_{s,a})}{\sum_{a'\in\mathcal{A}}\exp(\theta_{s,a'})}
\end{align}
where $\theta=\{\theta_{s,a}\}_{(s,a)\in\mathcal{S}\times\mathcal{A}}$ are parameters of the policy.
\subsubsection{Discussion on gradient dominance property with $\alpha=1$ for Policies}\label{discussion on PL_alpha=1_RL}
\begin{assumption}[Fisher-non-degenerate.]\label{Fisher-non-deg}
For all $\theta\in \mathbb{R}^{d}$, there exists $\mu_{F}>0$ such that the Fisher information matrix $F_{\rho}(\theta)$ induced by policy $\pi_{\theta}$ and initial distribution $\rho$ satisfies 
\begin{align}
    F_{\rho}(\theta):=\mathbb{E}_{(s,a)\sim v_{\rho}^{\pi_{\theta}}}[\nabla_{\theta}\log\pi_{\theta}(a|s)\nabla_{\theta}\log\pi_{\theta}(a|s)^{T}]\succeq \mu_{F}I_{d\times d},
\end{align}
where $v_{\rho}^{\pi_{\theta}}(s,a):=(1-\gamma)\mathbb{E}_{s_{0}\sim \rho}\sum_{t=0}^{\infty}\gamma^{t}\mathbb{P}(s_{t}=s,a_{t}=a|s_{0},\pi_{\theta})$ is the state-action visitation measure.
\end{assumption}
This assumption is commonly used in the literature \citep{liu2020improved,ding2021global}. The Fisher-non-degenerate setting implicitly guarantees that the agent is able to explore the state-action
space under the considered policy class. Similar conditions of the Fisher-non-degeneracy is also required in other global optimum convergence framework (Assumption 6.5 in \citep{agarwal2021theory} on the relative condition number and Assumption 3 in \citep{cayci2021linear} on the regularity of the parametric model). Assumption \ref{Fisher-non-deg} is satisfied by a wide families of policies, including the Gaussian policy \eqref{gaussian_policy} and certain neural policy. Without the non-degenerate Fisher information matrix condition,
the global optimum convergence of more general parameterizations would be hard to analyze without introducing
the additional exploration procedures in the non-tabular setting \citep[Sec. 8]{ding2021global}.

\begin{assumption}\label{compat_approx}
For all $\theta\in \mathbb{R}^{d}$, there exists $\epsilon_{bias}>0$ such that the transferred compatible function approximation error with $(s,a)\sim v_{\rho}^{\pi_{\theta^{*}}}$ satisfies 
\begin{align}
    \mathbb{E}[(A^{\pi_{\theta}}(s,a)-(1-\gamma){u^{*}}^{T}\nabla_{\theta}\log\pi_{\theta}(a|s))^{2}]\le \epsilon_{bias},
\end{align}
where $v_{\rho}^{\pi_{\theta^{*}}}$ is the state-action distribution induced by an optimal policy, and $u^{*}=(F_{\rho}(\theta))^{\dagger}\nabla J(\theta)$.
\end{assumption}
This is also a common assumption \citep{wang2019neural,agarwal2021theory,yuan2021general,liu2020improved,ding2021global}. In particular, when $\pi_{\theta}$ is a soft-max tabular policy \eqref{soft_max_policy}, $\epsilon_{bias}$ is 0 \citep{ding2021global}; Moreover, it can be shown that when $\pi_{\theta}$ is a rich neural policy, $\epsilon_{bias}$ is small \citep{wang2019neural}. 

By Lemma 4.7 in \citep{ding2021global}, Assumptions \ref{Fisher-non-deg} and \ref{compat_approx} yield relaxed gradient dominance property with $\alpha=1$ (See Assumption \ref{relaxed weak PL}). We provide the proof in the sequel:

From performance difference lemma \citep{kakade2002approximately}, we have
\begin{align}\label{perf_diff_lemma}
    \mathbb{E}_{(s,a)\sim v_{\rho}^{\pi_{\theta^{*}}}}[A^{\pi_{\theta_{t}}}(s,a)]=(1-\gamma)(J_{\rho}(\theta^{*})-J_{\rho}(\theta_{t})),
\end{align}
and from Assumption \ref{compat_approx} and Jensen's inequality
\begin{align}\label{compatible_Jensen}
   \mathbb{E}[A^{\pi_{\theta}}(s,a)-(1-\gamma){u^{*}}^{T}\nabla_{\theta}\log\pi_{\theta}(a|s)]\le \sqrt{\epsilon_{bias}}.
\end{align}
Substituting  \eqref{perf_diff_lemma} into \eqref{compatible_Jensen}, we get
\begin{align}
   &J_{\rho}(\theta^{*})-J_{\rho}(\theta_{t})\le \frac{1}{1-\gamma}\sqrt{\epsilon_{bias}}+\mathbb{E}[{u^{*}}^{T}\nabla_{\theta}\log\pi_{\theta}(a|s)]\nonumber\\
   &\le \frac{1}{1-\gamma}\sqrt{\epsilon_{bias}}+\frac{G_{1}}{\mu_{F}}\|\nabla J_{\rho}(\theta_{t})\|,
\end{align}
where the last inequality comes from following Cauchy-Schwartz inequality
\begin{align*}
&\mathbb{E}[{u^{*}}^{T}\nabla_{\theta}\log\pi_{\theta}(a|s)]=\mathbb{E}[\nabla J(\theta)^{T}(F_{\rho}(\theta))^{\dagger}\nabla_{\theta}\log\pi_{\theta}(a|s)]\\
&\le \mathbb{E}\|\nabla J(\theta)\|_{2}\mathbb{E}\|(F_{\rho}(\theta))^{\dagger}\nabla_{\theta}\log\pi_{\theta}(a|s)\|_{2}\le \frac{1}{\mu_{F}}\mathbb{E}\|\nabla_{\theta}\log\pi_{\theta}(a|s)\|_{2}\mathbb{E}\|\nabla J(\theta)\|_{2}\le \frac{G_{1}}{\mu_{F}}\mathbb{E}\|\nabla J(\theta)\|_{2}
\end{align*}
where two last inequalities are from Assumptions \ref{Fisher-non-deg} and \ref{LS}. Thus, by setting $\epsilon':=\frac{1}{1-\gamma}\sqrt{\epsilon_{bias}}$ and $\tau_{J}:=\frac{G}{\mu_{F}}$,
\begin{align}
    J^{*}-J(\theta_{t})\le \tau_{J}\|\nabla J(\theta_{t})\|_{2}+\epsilon'.
\end{align}

In the past few years, several work have attempted to establish that  PG and SPG converge  to a global optimal point for various classes of policies \citep{zhang2020variational, zhang2021convergence}.
For instance, for Fisher non-degenerate policies, Liu et al. \cite{liu2020improved} proposed a variance-reduced SPG method that converges to a global optimal point with sample complexity of $\tilde{\mathcal{O}}(\epsilon^{-3})$. By considering a momentum term, Ding et al. \cite{ding2021global}   showed that the sample complexities of SPG for the soft-max policy and  Fisher-non-degenerate policies are in the order of $\tilde{\mathcal{O}}(\epsilon^{-4.5})$ and $\tilde{\mathcal{O}}(\epsilon^{-3})$, respectively. 
In the case of weak gradient dominant functions with $\alpha=1$ (See Assumption \ref{relaxed weak PL}), Yuan et al. \cite{yuan2021general} showed that the sample complexity of SPG is bounded by $\tilde{\mathcal{O}}(\epsilon^{-3})$.

\textbf{Theorem 4.}
\textit{For a policy $\pi_{\theta}$ satisfying Assumptions \ref{LS}, \ref{Lip Hes}, and the corresponding objective function $J(\theta)$ satisfying Assumption \ref{relaxed weak PL}, SCRN outputs the solution $\theta_{T}$ such that $J^{*}-\mathbb{E}[J(\theta_{T})]\le \epsilon+\epsilon'$ and the sample complexity (the number of observed state action pairs) is: $T\times m\times \mathsf{H}=\tilde{\mathcal{O}}(\epsilon^{-2.5})$ for $\epsilon'=0$ and $T\times m\times \mathsf{H}=\tilde{\mathcal{O}}(\epsilon^{-0.5}\epsilon'^{-2})$ for $\epsilon'>0$.}
\subsubsection{Proof of Theorem \ref{th_RL_SCRN_PL_alpha=1}}\label{Proof_of_th_RL_SCRN_PL_alpha=1}

In order to obtain Lipschitz Hessian for $J(\theta)$, we use the following lemma in \citep{zhang2020global}.
\begin{lemma}\label{lemma12}
Under Assumptions \ref{LS}, and \ref{Lip Hes}, we have
\begin{align}
    \|\nabla^{2}J(\theta)-\nabla^{2}J(\theta')\|\le \tilde{L}\|\theta-\theta'\|_{2},
\end{align}
where 
\[
\tilde{L}=\frac{R_{max}G_{1}G_{2}}{(1-\gamma)^{2}}+\frac{R_{max}G_{1}^{3}(1+\gamma)}{(1-\gamma)^{3}}+\frac{R_{max}G_{1}}{1-\gamma}\max\left\{G_{2},\frac{\gamma G_{1}^{2}}{1-\gamma},\frac{\bar{L}_{2}}{G_{1}},\frac{G_{2}\gamma}{1-\gamma},\frac{G_{1}(1+\gamma)+G_{2}\gamma(1-\gamma)}{1-\gamma^{2}}\right\}.
\]
\end{lemma}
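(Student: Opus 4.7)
The plan is to establish the Lipschitz continuity of $\nabla^2 J(\theta)$ by differentiating twice through the trajectory expectation and then bounding the resulting difference $\nabla^2 J(\theta) - \nabla^2 J(\theta')$ term-by-term, invoking Assumption \ref{LS} (boundedness of $\nabla \log \pi_\theta$ and $\nabla^2 \log \pi_\theta$) together with Assumption \ref{Lip Hes} (Lipschitz Hessian of $\log \pi_\theta$). I would start by writing the Hessian in the REINFORCE-type form
\begin{equation*}
\nabla^2 J(\theta) = \mathbb{E}_{\tau \sim p(\cdot|\pi_\theta)}\left[\sum_{h=0}^{\infty} \Psi_h(\tau)\Big(\nabla^2 \log\pi_\theta(a_h|s_h) + \nabla \log\pi_\theta(a_h|s_h)\, \nabla \log p(\tau|\pi_\theta)^\top\Big)\right],
\end{equation*}
since this is the representation consistent with the one already used in Section~\ref{sec:application in RL}. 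Letting $H(\tau,\theta)$ denote the bracketed integrand, the difference decomposes via the standard add-subtract trick into $\int\!\bigl(p(\tau|\theta){-}p(\tau|\theta')\bigr)H(\tau,\theta)\,d\tau + \int\! p(\tau|\theta')\bigl(H(\tau,\theta){-}H(\tau,\theta')\bigr)d\tau$.

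For the second integral, I would bound $\|H(\tau,\theta) - H(\tau,\theta')\|$ pointwise by further splitting into three differences: (i) $\|\nabla^2\log\pi_\theta(a_h|s_h) - \nabla^2\log\pi_{\theta'}(a_h|s_h)\| \le \bar{L}_2\|\theta-\theta'\|$ from Assumption~\ref{Lip Hes}; (ii) $\|\nabla\log\pi_\theta - \nabla\log\pi_{\theta'}\| \le G_2 \|\theta-\theta'\|$, which follows from Assumption~\ref{LS} since bounded Hessian of $\log\pi_\theta$ implies $G_2$-Lipschitz gradient; and (iii) the same bound applied to each summand of $\nabla\log p(\tau|\pi_\theta) = \sum_{h} \nabla\log\pi_\theta(a_h|s_h)$. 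Combining these with the uniform bounds $\|\nabla\log\pi_\theta\|\le G_1$ and $|\Psi_h(\tau)|\le R_{\max}\sum_{t\ge h}\gamma^t = R_{\max}\gamma^h/(1-\gamma)$, and summing over $h$ (and, for the $\nabla\log p$ factor, a second index), I get finite geometric series in $\gamma$ yielding contributions proportional to $R_{\max}G_2\bar{L}_2/(1-\gamma)^2$ and $R_{\max}G_1 G_2/(1-\gamma)^2$ etc.

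For the first integral (the probability-shift term), I would use the mean value theorem along the segment $\theta_u = (1-u)\theta' + u\theta$ so that $p(\tau|\theta) - p(\tau|\theta') = \int_0^1 p(\tau|\theta_u)\,\nabla\log p(\tau|\theta_u)^\top (\theta-\theta')\,du$, and absorb $\int p(\tau|\theta_u)(\cdot)\,d\tau$ back into an expectation over trajectories generated by $\pi_{\theta_u}$. Plugging in $\|\nabla\log p(\tau|\theta_u)\| \le \sum_{h\ge 0} G_1$ (truncated effectively by the weight $\gamma^h R_{\max}$ from $\Psi_h$) together with $\|H(\tau,\theta)\| \le R_{\max}(G_2 + G_1^2\cdot \text{horizon weight})/(1-\gamma)$ produces the terms involving $R_{\max}G_1^3(1+\gamma)/(1-\gamma)^3$ in $\tilde{L}$.

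The main obstacle will be the bookkeeping rather than any conceptual difficulty: each trajectory observable is a doubly-infinite sum (over $h$ and over $t \ge h$), so cross-products of two gradient factors against $\Psi_h$ produce triple sums whose convergence requires carefully applying $\sum_{h\ge 0} h \gamma^h = \gamma/(1-\gamma)^2$ and similar identities to keep the bounds finite. Collecting the resulting contributions and matching them to the exact $\tilde{L}$ stated in the lemma — in particular recognizing that the five candidates inside the $\max\{\cdot\}$ arise as the pointwise-maximum of the $G_2$, $\gamma G_1^2/(1-\gamma)$, $\bar{L}_2/G_1$, etc.\ contributions from the sub-bounds above — is the delicate step. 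I would follow Zhang et al.~\citep{zhang2020global} for the precise algebraic grouping.
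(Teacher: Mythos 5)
First, a framing point: the paper does not actually prove this lemma --- it is imported verbatim from Zhang et al.~\citep{zhang2020global}, so there is no in-paper argument to compare yours against. Your sketch reconstructs essentially the route the cited source takes: write $\nabla^{2}J$ in the REINFORCE-type form $\mathbb{E}[\nabla^{2}\Phi+\nabla\Phi\,\nabla\log p(\tau|\pi_{\theta})^{T}]$ (consistent with the representation the paper quotes from \citep{shen2019hessian}), split the difference $\nabla^{2}J(\theta)-\nabla^{2}J(\theta')$ into a distribution-shift term and an integrand-difference term, bound the latter pointwise via Assumptions~\ref{LS} and~\ref{Lip Hes}, and handle the former with the mean value theorem along the segment joining $\theta'$ and $\theta$. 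That architecture is correct and is the standard one.

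The one step that would fail as literally written is the bound $\|\nabla\log p(\tau|\theta_{u})\|\le\sum_{h\ge 0}G_{1}$ in the distribution-shift term (and the analogous estimate for the $\nabla\log\pi_{\theta}\,\nabla\log p^{T}$ cross term): this sum is infinite, and the $\gamma^{h}$ weights carried by $\Psi_{h}$ do \emph{not} truncate the score sum, because the index of the score factor ranges independently of the reward index. What actually rescues the argument is the conditional zero-mean property of the score, $\mathbb{E}[\nabla_{\theta}\log\pi_{\theta}(a_{t'}|s_{t'})\mid s_{t'}]=0$, which annihilates every cross term in which a score index exceeds the largest reward time; after this cancellation the surviving sums are of the form $\sum_{t}(t+1)\gamma^{t}$ or $\sum_{t}(t+1)^{2}\gamma^{t}$ and produce the $(1-\gamma)^{-2}$ and $(1-\gamma)^{-3}$ factors appearing in $\tilde{L}$. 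The paper deploys exactly this device in its own proof of the truncation bound, Lemma~\ref{LS_yields_truncation} (the observation that $\mathbb{E}_{\tau}[\nabla_{\theta}\log\pi_{\theta}(a_{t'}|s_{t'})\,X^{T}]=0$ whenever $X$ depends only on the trajectory up to a time earlier than $t'$), and you should invoke it explicitly rather than appealing to ``effective truncation by $\gamma^{h}R_{\max}$.'' With that repair, the rest of your bookkeeping --- $G_{2}$-Lipschitzness of $\nabla\log\pi_{\theta}$ from the bounded Hessian, $\bar{L}_{2}$ for the Hessian difference, and the geometric-series identities --- goes through and recovers the stated $\tilde{L}$ up to the algebraic grouping you already propose to take from \citep{zhang2020global}.
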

It is good to mention that $\tilde{L}=\mathcal{O}(1/(1-\gamma)^{3})$. Using 
\eqref{eq:3} for $\alpha=1$ in Lemma \ref{lemma:two_ineq} for $F(\theta):= -J(\theta)$, we get
\begin{align}
&J^{*}-J(\theta_{t+1})-\epsilon'\leq \tau_{J}\|\nabla J(\theta_{t+1})\|\nonumber\\
&\le C(J(\theta_{t+1})-J(\theta_{t}))^{2/3}+C_g\|\nabla J(\theta_{t})-\hat{\nabla}_{m} J(\theta_{t})\|+C_{H}\|\nabla^{2}J(\theta_{t})-\hat{\nabla}_{m}^{2}J(\theta_{t})\|^{2},
\end{align}
where 
\begin{align}
     \begin{split}
      C&=3^{1/3}\tau_F \Big(\frac{M+\tilde{L}+1}{2}\Big)\Big(\frac{12}{3M-2\tilde{L}-8}\Big)^{2/3}
       \\
     C_g&=2^{2/3}\times3^{\frac{-2}{3}}\tau_J \Big(\frac{M+\tilde{L}+1}{2}\Big)\Big(\frac{12}{3M-2\tilde{L}-8}\Big)^{2/3}+\tau_{J}\\
     C_{H}&=2^{-2/3}\times3^{\frac{-2}{3}}\tau_J \Big(\frac{M+\tilde{L}+1}{2}\Big)\Big(\frac{12}{3M-2\tilde{L}-8}\Big)^{2/3}+2^{-1}\tau_{J}.
    \end{split}
\end{align}

We know that $\hat{\nabla}_{m} J(\theta_{t})$ and $\hat{\nabla}^{2}_{m}J(\theta_{t})$ are in fact unbiased estimators of $\nabla J_{\mathsf{H}}(\theta_{t})$ and $\nabla^{2}J_{\mathsf{H}}(\theta_{t})$. Thus,
\begin{align*}
    &J^{*}-J(\theta_{t+1})\le C(J(\theta_{t+1})-J(\theta_{t}))^{2/3}+C_{g}\|\nabla J_{\mathsf{H}}(\theta_{t})-\hat{\nabla}_{m} J(\theta_{t})\|+2C_{H}\|\nabla^{2}J_{\mathsf{H}}(\theta_{t})-\hat{\nabla}^{2}_{m}J(\theta_{t})\|^{2}+\epsilon'\\
    &+C_{g}\|\nabla J(\theta_{t})-\nabla J_{\mathsf{H}}(\theta_{t})\|+2C_{H}\|\nabla^{2} J(\theta_{t})-\nabla^{2} J_{\mathsf{H}}(\theta_{t})\|^{2}\\
    &\le C(J(\theta_{t+1})-J(\theta_{t}))^{2/3}+C_{g}\|\nabla J_{\mathsf{H}}(\theta_{t})-\hat{\nabla}_{m} J(\theta_{t})\|+2C_{H}\|\nabla^{2}J_{\mathsf{H}}(\theta_{t})-\hat{\nabla}^{2}_{m}J(\theta_{t})\|^{2}+\epsilon'\\
    &+C_{g}D_{g}\gamma^{H}+2C_{H}D_{H}\gamma^{2\mathsf{H}}.
\end{align*}
where $C_g$ and $C_{H}$ is defined in Lemma \ref{LS_yields_truncation}.
We take a expectation from both sides given $\theta_{t}$ and use Jensen's inequality as follows:
\begin{align}\label{recursion_ineq_RL}
    &J^{*}-\mathbb{E}J(\theta_{t+1})\le C(\mathbb{E}J(\theta_{t+1})-\mathbb{E}J(\theta_{t}))^{2/3}+C_{g}\mathbb{E}[\|\nabla J_{\mathsf{H}}(\theta_{t})-\hat{\nabla}_{m} J(\theta_{t})\|]\nonumber\\&+2C_{H}\mathbb{E}\left[\|\nabla^{2}J_{\mathsf{H}}(\theta_{t})-\hat{\nabla}^{2}_{m}J(\theta_{t})\|^{2}\right]
    +C_{g}D_{g}\gamma^{\mathsf{H}}+2C_{H}D_{H}\gamma^{2\mathsf{H}}+\epsilon'.
\end{align}
We use the following lemma to bound the error terms of the gradient estimator and the Hessian estimator.
\begin{lemma}\label{Lemma_bound_var_grad_Hss_RL} Under Assumption \ref{LS}, we have
\begin{equation}
    \begin{split}
        &\mathbb{E}\|\hat{\nabla}_{m}J(\theta)-\nabla J_{\mathsf{H}}(\theta)\|^{2}\leq \frac{\mathsf{H}G_{1}^2R_{\max}^2}{m(1-\gamma)^2},\\
        &\mathbb{E}\|\nabla^{2}J_{\mathsf{H}}(\theta)-\hat{\nabla}^{2}_{m}J(\theta)\|^{2}\leq \frac{4e\cdot\max\{2,\log d\} R_{\max}^2(\mathsf{H}^2G_{1}^4+G_{2}^2)}{m(1-\gamma)^4}.
    \end{split}
\end{equation}
\end{lemma}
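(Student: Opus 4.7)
The plan is to treat both inequalities as variance-of-the-mean computations over $m$ i.i.d.\ trajectories, with the matrix moment bound of Lemma~\ref{lemma_norm_op_summation_of_iid_matrix} taking over whenever an operator norm appears. For the gradient bound, I would write $\hat{\nabla}_{m}J(\theta)-\nabla J_{\mathsf{H}}(\theta)=\frac{1}{m}\sum_{i=1}^{m}(X_{i}-\mathbb{E}X_{1})$ with $X_{i}=\sum_{h=0}^{\mathsf{H}-1}\Psi_{h}(\tau^{i})\nabla\log\pi_{\theta}(a^{i}_{h}|s^{i}_{h})$; independence and the zero-mean property kill all $i\neq j$ cross terms, so $\mathbb{E}\|\hat{\nabla}_{m}J-\nabla J_{\mathsf{H}}\|^{2}\le \frac{1}{m}\mathbb{E}\|X_{1}\|^{2}$. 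Then I would control $\|X_{1}\|$ by Cauchy--Schwarz in $h$ combined with $\|\nabla\log\pi_{\theta}(a_{h}|s_{h})\|\le G_{1}$ (Assumption~\ref{LS}) and $|\Psi_{h}(\tau)|\le R_{\max}/(1-\gamma)$ (triangle inequality plus the geometric series $\sum_{t\ge h}\gamma^{t}$), producing the claimed dependence on $\mathsf{H}$, $G_{1}^{2}$, $R_{\max}^{2}$, and $(1-\gamma)^{-2}$.

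For the Hessian the per-trajectory summand $Z_{i}=\nabla\Phi(\theta;\tau^{i})\nabla\log p(\tau^{i}|\pi_{\theta})^{T}+\nabla^{2}\Phi(\theta;\tau^{i})$ is in general non-symmetric, so I would first embed each centered copy into the symmetric dilation $\tilde{Z}_{i}=\left[\begin{smallmatrix}0 & Z_{i}-\mathbb{E}Z_{1}\\(Z_{i}-\mathbb{E}Z_{1})^{T} & 0\end{smallmatrix}\right]$, whose operator norm equals $\|Z_{i}-\mathbb{E}Z_{1}\|$ (as in \eqref{lin_algebra_lemma}). Applying Lemma~\ref{lemma_norm_op_summation_of_iid_matrix} with $p=2$ to $\{\tilde{Z}_{i}/m\}_{i=1}^{m}$ reduces the problem to bounding $\mathbb{E}\|\tilde{Z}_{1}^{2}\|=\mathbb{E}\|Z_{1}-\mathbb{E}Z_{1}\|^{2}\le \mathbb{E}\|Z_{1}\|^{2}$, and supplies the dimension-dependent prefactor $4e\cdot\max\{2,\log d\}$ together with the $1/m$ scaling inherited from summing $m$ i.i.d.\ matrices of norm $\le\|Z_{i}\|$.

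It then remains to control $\|Z_{1}\|$, which the triangle inequality splits into $\|\nabla\Phi\|\|\nabla\log p\|$ and $\|\nabla^{2}\Phi\|$. I would use $\|\nabla\log p(\tau|\pi_{\theta})\|\le \mathsf{H}G_{1}$ (sum of $\mathsf{H}$ score vectors each of norm at most $G_{1}$), together with the identities $\nabla\Phi=\sum_{h}\Psi_{h}\nabla\log\pi_{h}$, $\nabla^{2}\Phi=\sum_{h}\Psi_{h}\nabla^{2}\log\pi_{h}$, Assumption~\ref{LS}, and the double geometric sum $\sum_{h}|\Psi_{h}|\le R_{\max}/(1-\gamma)^{2}$ to obtain $\|\nabla\Phi\|\le G_{1}R_{\max}/(1-\gamma)^{2}$ and $\|\nabla^{2}\Phi\|\le G_{2}R_{\max}/(1-\gamma)^{2}$. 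Combining via $\|Z_{1}\|^{2}\le 2\|\nabla\Phi\|^{2}\|\nabla\log p\|^{2}+2\|\nabla^{2}\Phi\|^{2}$ produces exactly the $(\mathsf{H}^{2}G_{1}^{4}+G_{2}^{2})R_{\max}^{2}/(1-\gamma)^{4}$ grouping in the statement. The main obstacle will be executing the symmetrization-plus-Khintchine step cleanly so that the non-symmetric outer-product term $\nabla\Phi\nabla\log p^{T}$ can be fed into a theorem stated only for symmetric matrices; once that is in place both bounds collapse to routine norm estimates.
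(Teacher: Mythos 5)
Your Hessian bound follows essentially the same route as the paper: the paper bounds the per-trajectory estimator almost surely by $R_{\max}^2(\mathsf{H}^2G_1^4+G_2^2)/(1-\gamma)^4$ (it cites \citep[Lemma 4.1]{shen2019hessian} for this rather than rederiving $\|\nabla\Phi\|$, $\|\nabla^2\Phi\|$, $\|\nabla\log p\|$ as you do), and then feeds this into the moment computation culminating in \eqref{10030}. Your extra step of passing through the symmetric dilation is actually a point in your favor: $\hat{\nabla}^2(\theta,\tau)=\nabla\Phi\,\nabla\log p^{T}+\nabla^2\Phi$ is not symmetric, so Lemma~\ref{lemma_norm_op_summation_of_iid_matrix} does not apply to it directly, and the paper glosses over this. (The price is only that $\log d$ becomes $\log(2d)$ and a factor of $2$ appears from $(a+b)^2\le 2a^2+2b^2$; also note that both you and the paper silently use $\mathbb{E}\|Z-\mathbb{E}Z\|^2\le\mathbb{E}\|Z\|^2$, which for the operator norm only holds up to a constant via the triangle inequality.)

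The gradient half has a genuine gap. Writing $X_1=\sum_{h=0}^{\mathsf{H}-1}\Psi_h(\tau)\nabla\log\pi_\theta(a_h|s_h)$, Cauchy--Schwarz over $h$ combined with $|\Psi_h|\le R_{\max}/(1-\gamma)$ gives
\begin{equation*}
\|X_1\|^2\le \mathsf{H}\sum_{h=0}^{\mathsf{H}-1}\Psi_h^2\,\|\nabla\log\pi_\theta(a_h|s_h)\|^2\le \frac{\mathsf{H}^2G_1^2R_{\max}^2}{(1-\gamma)^2},
\end{equation*}
an extra factor of $\mathsf{H}$ relative to the claim; even the sharper $|\Psi_h|\le \gamma^hR_{\max}/(1-\gamma)$ only yields $\mathsf{H}G_1^2R_{\max}^2/(1-\gamma)^3$, still off by $1/(1-\gamma)$. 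The linear-in-$\mathsf{H}$ bound cannot be obtained by norm estimates alone: it requires the conditional-zero-mean property of the scores, $\mathbb{E}[\nabla\log\pi_\theta(a_h|s_h)\mid s_h]=0$. Concretely, one reindexes $X_1=\sum_{t=0}^{\mathsf{H}-1}\gamma^tR(s_t,a_t)\sum_{k=0}^{t}\nabla\log\pi_\theta(a_k|s_k)$, observes that the cross terms over $k\neq k'$ vanish in expectation so that $\mathbb{E}\|\sum_{k\le t}\nabla\log\pi_\theta(a_k|s_k)\|^2\le (t+1)G_1^2$, and then sums $\gamma^t(t+1)\le \mathsf{H}/(1-\gamma)$. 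This is exactly the device the paper deploys in the proof of Lemma~\ref{LS_yields_truncation}; for the present lemma the paper simply cites \citep[Lemma 4.2]{yuan2021general}. Your plan as written would prove a strictly weaker inequality than the one stated.
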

The proof of Lemma \ref{Lemma_bound_var_grad_Hss_RL} is given in Appendix \ref{proof_Lemma_bound_var_grad_Hss_RL}.

Now we have from Jensen's inequality 
\[
\mathbb{E}\|\nabla J_{\mathsf{H}}(\theta_{t})-\hat{\nabla}_{m} J(\theta_{t})\|\le \left(\mathbb{E}\|\nabla J_{\mathsf{H}}(\theta_{t})-\hat{\nabla}_{m} J(\theta_{t})\|^{2}\right)^{1/2}\le \frac{ \mathsf{H} GR_{\max}}{(1-\gamma)\sqrt{m}}.
\]
Then we get 
\begin{align}\label{eq_recursion_RL}
      &J^{*}-\mathbb{E}J(\theta_{t+1})\le C(\mathbb{E}J(\theta_{t+1})-\mathbb{E}J(\theta_{t}))^{2/3}+C_{g}\frac{ \mathsf{H}GR_{\max}}{(1-\gamma)\sqrt{m}}+2C_{H}\frac{4e\cdot\max\{2,\log d\} R_{\max}^{2}(F^{2}+G^{4})}{(1-\gamma)^{4}m}\nonumber\\
    &+\epsilon'+C_{g}D_{g}\gamma^{\mathsf{H}}+2C_{H}D_{H}\gamma^{2\mathsf{H}}.
\end{align}
Let define a stationary value for $J^{*}-\mathbb{E}J(\theta_{t})$ as $P(m,\epsilon')$ which is as follows:
\begin{align}
    P(m,\epsilon'):=C_{g}\frac{ GR_{\max}}{(1-\gamma)^{3/2}\sqrt{m}}+2C_{H}\frac{4e\cdot\max\{2,\log d\} R_{\max}^{2}(F^{2}+G^{4})}{(1-\gamma)^{4}m}+\epsilon'+C_{g}D_{g}\gamma^{\mathsf{H}}+2C_{H}D_{H}\gamma^{2\mathsf{H}}.
\end{align}
We define: 
\[
\delta_{t}:=\frac{J^{*}-\mathbb{E}J(\theta_{t})-P(m,\epsilon')}{C^{3}}.
\]
Then we get following recursion from \eqref{eq_recursion_RL}:
\[
\delta_{t+1}\le (\delta_{t}-\delta_{t+1})^{2/3}.
\]
We know from  \eqref{eq_recursion_exp_alpha<3/2} in the proof of Theorem \ref{th1_expectation} that
\begin{align}
    \delta_{T}\le\frac{4}{(DT)^{2}}
\end{align}
where $D=\min\{1/2, 2(2^{1/3}-1)\delta^{-1/2}_{0}\}$. Hence,
\[
J^{*}-\mathbb{E}J(\theta_{T})-P(m,\epsilon)\le \frac{4C^{3}}{(DT)^{2}}.
\]
Therefore, $J^{*}-\mathbb{E}J(\theta_{t})$ converges to $P(m,\epsilon)$ with the convergence rate of $\mathcal{O}(1/T^{2})$. Let 
\begin{align*}
    &m\ge \max\left\{\frac{C^{2}_{g}\mathsf{H}^{2}G_{1}^{2}R_{\max}^{2}}{(1-\gamma)^{2}\epsilon'^{2}},\frac{C_{H}4e\cdot\max\{2,\log d\} R^{2}_{\max}(G_{2}^{2}+G_{1}^{4})}{(1-\gamma)^{4}\epsilon'}\right\},\\
    &\mathsf{H}\ge \max\left\{\frac{\log\left(\frac{C_{g}D_{g}}{\epsilon'}\right)}{\log(1/\gamma)},\frac{\log\left(\frac{2C_{H}D_{H}}{\epsilon'}\right)}{2\log(1/\gamma)}\right\}.
\end{align*}
Then, $\theta_{T}$ satisfies:
\begin{align}\label{2001}
    J^{*}-\mathbb{E}J(\theta_{T})\le 5\epsilon'+\epsilon
\end{align}
where $T\ge \frac{2C^{3/2}D}{\sqrt{\epsilon}}$.
Finally, the number of observed state-action pairs are required to get \eqref{2001} is as follows:
\begin{align}
    T\times \mathsf{H}\times m \ge \mathcal{O}\left(\frac{C^{3/2}DC^{2}_{g}G_{1}^{2}R_{\max}^{2}\log^{2}\left(\frac{C_{g}D_{g}}{\epsilon'}\right)}{(1-\gamma)^{2}\log^{2}(1/\gamma)\epsilon'^{2}\epsilon^{0.5}}\right)\overset{(a)}{=}\mathcal{O}\left(\frac{G^{2+\frac{7}{6}}_{1}G_{2}^{\frac{7}{6}}R_{\max}^{2+\frac{7}{6}}}{(1-\gamma)^{4+\frac{7}{3}}\epsilon'^{2}\epsilon^{0.5}}\right)
\end{align}
where (a) comes from $C=\mathcal{O}(M^{1/3})$ and $C_{g}=\mathcal{O}(M^{1/3})$ and the fact that $M=\mathcal{O}(\tilde{L})$ and from Lemma \ref{lemma12}, $\tilde{L}=\frac{R_{\max}G_{1}G_{2}}{(1-\gamma)^{2}}$.


\subsubsection{Proof of Lemma \ref{Lemma_bound_var_grad_Hss_RL}}\label{proof_Lemma_bound_var_grad_Hss_RL}
In \citep[Lemma 4.2]{yuan2021general}, it has been shown that 
\begin{equation}
    \begin{split}
        \mathbb{E}\|\hat{\nabla}_{m}J(\theta)-\nabla J_{\mathsf{H}}(\theta)\|^{2}\leq \frac{\mathsf{H}G_{1}^2R_{\max}^2}{m(1-\gamma)^2}.
    \end{split}
\end{equation}
We know from \citep[Lemma 4.1]{shen2019hessian}\label{upper_bound_variance_RL} that
\begin{align}
    \|\hat{\nabla}^{2}(\theta,\tau)\|^{2}\leq \frac{R_{\max}^2(\mathsf{H}^2G_{1}^4+G_{2}^2)}{(1-\gamma)^4},
\end{align}
where $\hat{\nabla}^{2}(\theta,\tau):=\nabla \Phi(\theta;\tau)\nabla \log p(\tau|\pi_{\theta})^T+ \nabla^2\Phi(\theta;\tau)$. We have an upper bound on the variance of Hessian of value function for each trajectory as follows:
\[
\mathbb{E}[\|\nabla^{2}J_{\mathsf{H}}(\theta)-\hat{\nabla}^{2}(\theta,\tau)\|^{2}]\le \mathbb{E}[\|\hat{\nabla}^{2}(\theta,\tau)\|^{2}]\le \frac{R_{\max}^2(\mathsf{H}^2G_{1}^4+G_{2}^2)}{(1-\gamma)^4}.
\]
Denote $\sigma^{2}_{2,1}:=\frac{R_{\max}^2(\mathsf{H}^2G_{1}^4+G_{2}^2)}{(1-\gamma)^4}$. Then from Equation \eqref{10030}, we can get
\begin{align}
    \mathbb{E}[\|\nabla^{2}J_{\mathsf{H}}(\theta)-\hat{\nabla}^{2}_{m}J(\theta)\|^{2}]\le\frac{4e\cdot\max\{2,\log d\} R_{\max}^2(\mathsf{H}^2G_{1}^4+G_{2}^2)}{m(1-\gamma)^4}
\end{align}

\subsubsection{Soft-max policies satisfy Lipschitz Hessian}\label{soft-max_lip_hessian}
Recall that soft-max tabular policy as follows:
\[
\pi_{\theta}(a|s)=\frac{\exp(\theta_{s,a})}{\sum_{a\in\mathcal{A}}\exp(\theta_{s,a})}.
\]
For any $(s,a,a')\in\mathcal{S}\times \mathcal{A}\times \mathcal{A}$ with $a'\neq a$, we get the following partial derivatives for the soft-max tabular policy
\begin{align}
    \frac{\partial \pi_{\theta}(a|s)}{\partial\theta_{s,a}}=\pi_{\theta}(a|s)(1-\pi_{\theta}(a|s)),\quad \frac{\partial \pi_{\theta}(a|s)}{\partial\theta_{s,a'}}=-\pi_{\theta}(a|s)\pi_{\theta}(a'|s).
\end{align}
Note that for $s'\in\mathcal{S}$ with $s'\neq s$, we have $\frac{\partial\pi_{\theta}(a|s)}{\partial\theta_{s',a}}=0$. We denote $\theta_{s}=[\theta_{s,a}]_{a\in\mathcal{A}}\in\mathbb{R}^{|\mathcal{A}|}$. We obtain the gradient and Hessian of $\log\pi_{\theta}(a|s)$ w.r.t. $\theta_{s}$ as follows:
\begin{align}\label{grad_hessian_soft_max}
    \frac{\partial\log\pi_{\theta}(a|s)}{\partial\theta_{s}}=\boldsymbol{1}_{a}-\pi_{\theta}(\cdot|s),\quad\frac{\partial^{2}\log\pi_{\theta}(a|s)}{\partial\theta_{s}^{2}}=\pi_{\theta}(\cdot|s)\pi_{\theta}(\cdot|s)^{T}-\text{\rm{Diag}}(\pi_{\theta}(\cdot|s))
\end{align}
where $\boldsymbol{1}_{a}\in\mathbb{R}^{|\mathcal{A}|}$ is a vector with zero entries except one non-zero entry $1$ corresponding to the action $a$ and $\pi_{\theta}(\cdot|s)=[\pi_{\theta}(a|s)]_{a\in\mathcal{A}}\in\mathbb{R}^{|\mathcal{A}|}$ is a vector consists of all policies with the same state $s$. Now we have
\begin{align}
    &\|\nabla^{2}\log\pi_{\theta'}(a|s)-\nabla^{2}\log\pi_{\theta}(a|s)\|=\left\|\frac{\partial^{2}}{\partial\theta^{2}_{s}}\log\pi_{\theta'}(a|s)-\frac{\partial^{2}}{\partial\theta^{2}_{s}}\log\pi_{\theta}(a|s)\right\|\nonumber\\
    &\le \underbrace{\left\|\pi_{\theta'}(\cdot|s)\pi_{\theta'}(\cdot|s)^{T}-\pi_{\theta}(\cdot|s)\pi_{\theta}(\cdot|s)^{T} \right\|}_{(1)}+\underbrace{\left\|\text{\rm{Diag}}(\pi_{\theta'}(\cdot|s))-\text{\rm{Diag}}(\pi_{\theta}(\cdot|s))\right\|}_{(2)}.
\end{align}
The first term (1) can be bounded as follows:
\begin{align}
    &(1)=\left\|\pi_{\theta}(\cdot|s)\pi_{\theta}(\cdot|s)^{T}-\pi_{\theta'}(\cdot|s)\pi_{\theta}(\cdot|s)^{T}+\pi_{\theta'}(\cdot|s)\pi_{\theta}(\cdot|s)^{T}-\pi_{\theta'}(\cdot|s)\pi_{\theta'}(\cdot|s)^{T} \right\|\nonumber\\
    &\le (\|\pi_{\theta}(\cdot|s)\|_{2}+\|\pi_{\theta'}(\cdot|s)\|_{2})\|\pi_{\theta}(\cdot|s)-\pi_{\theta'}(\cdot|s)\|_{2}\nonumber\\
    &\overset{(a)}{\le} 2\|\pi_{\theta}(\cdot|s)-\pi_{\theta'}(\cdot|s)\|_{2}\overset{(b)}{\le}4\|\theta-\theta'\|_{2}.
\end{align}
where (a) comes from the fact that as $\sum_a \pi(a|s)=1$ then the $\|\pi_{\theta}(\cdot|s)\|_{2}\le 1$ is less than one. Inequality (b) is derived by
$\max_{\theta}\|\nabla_{\theta}\pi_{\theta}(a|s)\|\le 2$ which yields $\pi_{\theta}(a,s)$ is 2-Lipschitz.

The second term (2) is bounded by $2\|\theta-\theta'\|_{2}$. Hence, $\log\pi_{\theta}(a|s)$ for soft-max policy is 6-Lipschitz Hessian.

\textbf{Lemma 2.}
\textit{Under Assumption \ref{LS}, we have $\|\nabla J(\theta)-\nabla J_{\mathsf{H}}(\theta)\|\le D_{g}\gamma^{\mathsf{H}}$ and $\|\nabla^{2} J(\theta)-\nabla^{2} J_{\mathsf{H}}(\theta)\|\le D_{H}\gamma^{\mathsf{H}},$ where $D_{g}=\frac{G_1R_{\max}}{1-\gamma}\sqrt{\frac{1}{1-\gamma}+\mathsf{H}}$ and $D_{H}=\frac{R_{\max}(G_2+G_{1}^{2})}{1-\gamma}\left(\mathsf{H}+\frac{1}{1-\gamma}\right)$.}
\subsubsection{Proof of Lemma \ref{LS_yields_truncation}}\label{truncation_grad_hessian_appndix}
Under Assumption \ref{LS}, it has been shown that $D_{g}=\frac{G_{1}R_{\max}}{1-\gamma}\sqrt{\frac{1}{1-\gamma}+\mathsf{H}}$ \citep[Lemma 4.5]{yuan2021general}. We will show that under Assumption \ref{LS}, $D_{H}=\frac{R_{\max}(G_{2}+G_{1}^{2})}{1-\gamma}\left(\mathsf{H}+\frac{1}{1-\gamma}\right)$.

It can be shown that \citep[Proof of Lemma 4.4]{yuan2021general} 
\begin{align*}
    &\nabla^{2}J(\theta)=\mathbb{E}_{\tau}\left[\sum_{t'=0}^{\infty}\nabla_{\theta}\log\pi_{\theta}(a_{t'}|s_{t'})\left(\sum_{t=0}^{\infty}\gamma^{t}R(s_{t},a_{t})\left(\sum_{k=0}^{t}\nabla_{\theta}\log\pi_{\theta}(a_{k}|s_{k})\right)\right)^{T}\right]\\
    &+\mathbb{E}_{\tau}\left[\sum_{t=0}^{\infty}\gamma^{t}R(s_{t},a_{t})\left(\sum_{k=0}^{t}\nabla^{2}_{\theta}\log\pi_{\theta}(a_{k}|s_{k})\right)\right]
\end{align*}
and then
\begin{align*}
    &\nabla^{2}J_{\mathsf{H}}(\theta)=\mathbb{E}_{\tau}\left[\sum_{t'=0}^{\mathsf{H}-1}\nabla_{\theta}\log\pi_{\theta}(a_{t'}|s_{t'})\left(\sum_{t=0}^{\mathsf{H}-1}\gamma^{t}R(s_{t},a_{t})\left(\sum_{k=0}^{t}\nabla_{\theta}\log\pi_{\theta}(a_{k}|s_{k})\right)\right)^{T}\right]\\
    &+\mathbb{E}_{\tau}\left[\sum_{t=0}^{\mathsf{H}-1}\gamma^{t}R(s_{t},a_{t})\left(\sum_{k=0}^{t}\nabla^{2}_{\theta}\log\pi_{\theta}(a_{k}|s_{k})\right)\right].
\end{align*}
Therefore,
\begin{align*}
    &\nabla^{2}J(\theta)-\nabla^{2}J_{\mathsf{H}}(\theta)=\underbrace{\mathbb{E}_{\tau}\left[\sum_{t=\mathsf{H}}^{\infty}\gamma^{t}R(s_{t},a_{t})\left(\sum_{k=0}^{t}\nabla^{2}_{\theta}\log\pi_{\theta}(a_{k}|s_{k})\right)\right]}_{(1)}\nonumber\\
    &+\underbrace{\mathbb{E}_{\tau}\left[\sum_{t'=\mathsf{H}}^{\infty}\nabla_{\theta}\log\pi_{\theta}(a_{t'}|s_{t'})\left(\sum_{t=0}^{\infty}\gamma^{t}R(s_{t},a_{t})\left(\sum_{k=0}^{t}\nabla_{\theta}\log\pi_{\theta}(a_{k}|s_{k})\right)\right)^{T}\right]}_{(2)}\nonumber\\
    &+\underbrace{\mathbb{E}_{\tau}\left[\sum_{t'=0}^{\mathsf{H}-1}\nabla_{\theta}\log\pi_{\theta}(a_{t'}|s_{t'})\left(\sum_{t=\mathsf{H}}^{\infty}\gamma^{t}R(s_{t},a_{t})\left(\sum_{k=0}^{t}\nabla_{\theta}\log\pi_{\theta}(a_{k}|s_{k})\right)\right)^{T}\right]}_{(3)}\nonumber\\
\end{align*}
First we bound $\|(1)\|_{2}$ as follows:
\begin{align*}
    &\|(1)\|_{2}\le \mathbb{E}_{\tau}\left[\sum_{t=\mathsf{H}}^{\infty}\gamma^{t}|R(s_{t},a_{t})|\left(\sum_{k=0}^{t}\|\nabla_{\theta}\log\pi_{\theta}(a_{k|s_{k}})\|\right)\right]\\
    &\le R_{\max}\left[\sum_{t=\mathsf{H}}^{\infty}\gamma^{t}\left(\sum_{k=0}^{t}\mathbb{E}_{\tau}\|\nabla_{\theta}\log\pi_{\theta}(a_{k|s_{k}})\|\right)\right]\\
    &\le G_{2}R_{\max}\sum_{t=\mathsf{H}}^{\infty}\gamma^{t}(1+t)=\gamma^{\mathsf{H}}G_{2}R_{\max}\sum_{t=0}^{\infty}\gamma^{t}(1+t+\mathsf{H})=\gamma^{H}G_{2}R_{\max}\left(\frac{1}{(1-\gamma)^{2}}+\frac{\mathsf{H}}{1-\gamma}\right).
\end{align*}
To bound the second term (2), it is good to note that for $0\le t<t'$
\[
\mathbb{E}_{\tau}\left[\nabla_{\theta}\log\pi_{\theta}(a_{t'}|s_{t'})\left(\gamma^{t}R(s_{t},a_{t})\left(\sum_{k=0}^{t}\nabla_{\theta}\log\pi_{\theta}(a_{k}|s_{k})\right)\right)^{T}\right]=0.
\]
Because
\begin{align}\label{condition_t<t'}
&\mathbb{E}_{\tau}\left[\nabla_{\theta}\log\pi_{\theta}(a_{t'}|s_{t'})\left(\gamma^{t}R(s_{t},a_{t})\left(\sum_{k=0}^{t}\nabla_{\theta}\log\pi_{\theta}(a_{k}|s_{k})\right)\right)^{T}\right]\nonumber\\
&=\mathbb{E}_{s_{0:t'},a_{0:(t'-1)}}\left[\underbrace{\mathbb{E}_{a_{t'}}\left[\nabla_{\theta'}\log\pi_{\theta}(a_{t'}|s_{t'})|s_{t'}\right]}_{=0}\cdot\left(\gamma^{t}R(s_{t},a_{t})\left(\sum_{k=0}^{t}\nabla_{\theta}\log\pi_{\theta}(a_{k}|s_{k})\right)\right)^{T}\right]=0
\end{align}
where
\[
\mathbb{E}_{a_{t'}}\left[\nabla_{\theta'}\log\pi_{\theta}(a_{t'}|s_{t'})|s_{t'}\right]=\int \nabla_{\theta}\pi_{\theta}(a_{t'}|s_{t'})da_{t'}=\nabla_{\theta}\underbrace{\int \pi_{\theta}(a_{t'}|s_{t'})da_{t'}}_{=1}=0.
\]
Hence, we get
\begin{align*}
    \|(2)\|&=\left\|\mathbb{E}_{\tau}\left[\sum_{t'=\mathsf{H}}^{t}\nabla_{\theta}\log\pi_{\theta}(a_{t'}|s_{t'})\left(\sum_{t=\mathsf{H}}^{\infty}\gamma^{t}R(s_{t},a_{t})\left(\sum_{k=0}^{t}\nabla_{\theta}\log\pi_{\theta}(a_{k}|s_{k})\right)\right)^{T}\right]\right\|\\
    &\overset{(a)}{=}\left\|\mathbb{E}_{\tau}\left[\sum_{t=\mathsf{H}}^{\infty}\gamma^{t}R(s_{t},a_{t})\left(\sum_{t'=\mathsf{H}}^{t}\nabla_{\theta}\log\pi_{\theta}(a_{t'}|s_{t'})\right)\left(\sum_{k=\mathsf{H}}^{t}\nabla_{\theta}\log\pi_{\theta}(a_{k}|s_{k})\right)^{T}\right]\right\|\\
    &\le R_{\max}\sum_{t=\mathsf{H}}^{\infty}\gamma^{t}\mathbb{E}_{\tau}\left[\left\|\sum_{k=\mathsf{H}}^{t}\nabla_{\theta}\log\pi_{\theta}(a_{k}|s_{k})\right\|^{2}\right]\\
    &\overset{(b)}{=}R_{\max}\sum_{t=\mathsf{H}}^{\infty}\gamma^{t}\sum_{k=\mathsf{H}}^{t}\mathbb{E}_{\tau}\left[\left\|\nabla_{\theta}\log\pi_{\theta}(a_{k}|s_{k})\right\|^{2}\right]\\
    &\le R_{\max}G_{1}^{2}\sum_{t=\mathsf{H}}^{\infty}\gamma^{t}(t-\mathsf{H}+1)=R_{\max}G_{1}^{2}\gamma^{\mathsf{H}}\sum_{t=0}^{\infty}\gamma^{t}(t+1)\le \frac{G_{1}^{2}R_{\max}\gamma^{\mathsf{H}}}{(1-\gamma)^{2}}
\end{align*}
where (a) comes from the similar argument to \eqref{condition_t<t'} and (b) comes from the fact that for any $k\neq k'$
\begin{align}\label{grad_log_pi_k_neq_k'}
\mathbb{E}_{\tau}\left[\nabla_{\theta}\log\pi_{\theta}(a_{k}|s_{k})^{T}\nabla_{\theta}\log\pi_{\theta}(a_{k'}|s_{k'})\right]=0.
\end{align}
To bound the third term (3), first, with the similar argument to \eqref{condition_t<t'}, we can rewrite (3) as follows:
\begin{align*}
    \|(3)\|&=\left\|\mathbb{E}_{\tau}\left[\sum_{t'=0}^{\mathsf{H}-1}\nabla_{\theta}\log\pi_{\theta}(a_{t'}|s_{t'})\left(\sum_{t=\mathsf{H}}^{\infty}\gamma^{t}R(s_{t},a_{t})\left(\sum_{k=0}^{t}\nabla_{\theta}\log\pi_{\theta}(a_{k}|s_{k})\right)\right)^{T}\right]\right\|\\
    &=\left\|\mathbb{E}_{\tau}\left[\sum_{t=\mathsf{H}}^{\infty}\gamma^{t}R(s_{t},a_{t})\cdot\left(\sum_{t'=0}^{\mathsf{H}-1}\nabla_{\theta}\log\pi_{\theta}(a_{t'}|s_{t'})\right)\cdot\left(\sum_{k=0}^{t}\nabla_{\theta}\log\pi_{\theta}(a_{k}|s_{k})\right)^{T}\right]\right\|\\
    &\overset{(a)}{=}\left\|\mathbb{E}_{\tau}\left[\sum_{t=\mathsf{H}}^{\infty}\gamma^{t}R(s_{t},a_{t})\cdot\left(\sum_{t'=0}^{\mathsf{H}-1}\nabla_{\theta}\log\pi_{\theta}(a_{t'}|s_{t'})\right)\cdot\left(\sum_{k=0}^{\mathsf{H}-1}\nabla_{\theta}\log\pi_{\theta}(a_{k}|s_{k})\right)^{T}\right]\right\|\\
    &\le R_{\max}\sum_{t=\mathsf{H}}^{\infty}\gamma^{t}\cdot\mathbb{E}_{\tau}\left[\left\|\sum_{k=0}^{\mathsf{H}-1}\nabla_{\theta}\log\pi_{\theta}(a_{k}|s_{k})\right\|^{2}\right]\\
    &\le R_{\max}\sum_{t=\mathsf{H}}^{\infty}\gamma^{t}\cdot\sum_{k=0}^{\mathsf{H}-1}\mathbb{E}_{\tau}\left[\left\|\nabla_{\theta}\log\pi_{\theta}(a_{k}|s_{k})\right\|^{2}\right]\le R_{\max}G_{1}^{2}\sum_{t=\mathsf{H}}^{\infty}\gamma^{t}\mathsf{H}=\frac{\gamma^{\mathsf{H}}R_{\max}G_{1}^{2}\mathsf{H}}{1-\gamma}
\end{align*}
where (a) comes from \eqref{grad_log_pi_k_neq_k'}.

Using triangle inequality and combining all bounds for (1), (2), and (3), we have
\begin{align}
    &\|\nabla^{2}J(\theta)-\nabla^{2}J_{\mathsf{H}}(\theta)\|\le \|(1)\|+\|(2)\|+\|(3)\|\nonumber\\
    &\le \gamma^{\mathsf{H}}\left(G_{2}R_{\max}\left(\frac{1}{(1-\gamma)^{2}}+\frac{\mathsf{H}}{1-\gamma}\right)+\frac{G_{1}^{2}R_{\max}}{(1-\gamma)^{2}}+\frac{R_{\max}G_{1}^{2}\mathsf{H}}{1-\gamma}\right).
\end{align}

\subsubsection{Analysis of the variance-reduced SCRN under weak gradient dominance property with $\alpha=1$}\label{vr-scrn_RL_append}
\begin{definition}[Importance Sampling]
For any given trajectory $\tau$ and two parameters $\theta,\theta'\in \mathbb{R}^d$, we define the importance
sampling weight as 
\begin{align}
  w(\tau|\theta',\theta)=\frac{p(\tau|\pi_{\theta'})}{p(\tau|\pi_{\theta})}=\prod_{i=1}^{\mathsf{H}}\frac{\pi_{\theta'}(a_{h}|s_{h})}{\pi_{\theta}(a_{h}|s_{h})},
\end{align}
where $\mathsf{H}$ is the horizon length.
\end{definition}
\begin{assumption}\label{bounded_var_IS}
There exists a constant $W\in(0,\infty)$, such that, for any pair of policies generated by Algorithm \ref{algorithm3}, the following holds
\begin{align}
    \var(w(\tau|\theta',\theta))\le W,\quad \theta,\theta'\in\mathbb{R}^{d},\, \tau\sim p(\cdot|\pi_{\theta'}).
\end{align}
\end{assumption}
In following, we present a version of Algorithm \ref{alg:cap1} adapted for RL setting by utilizing importance sampling. In what follows, $\hat{\nabla} J(\theta';\tau)$ and $\hat{\nabla}^{2} J(\theta';\tau)$ are gradient and Hessian estimators of the value function, respectively which are sampled from distribution $p(\tau|\pi_{\theta})$ for some $\theta$.
\begin{algorithm}
\caption{IS-VR stochastic cubic regularized Newton method}\label{alg:cap3}
\textbf{Input:} Maximum number of iterations $T$, batch sizes $\{n^{t}_{g}\}_{t=1}^{T}$, $\{n^{t}_{H}\}_{t=1}^{T}$, the period length $S$, initial point $\theta_{0}$, and cubic penalty parameter $M$.
\begin{algorithmic}[1]
\State $t\gets 0$
\While{$t\le T$}
\State Sample trajectory set $\mathcal{J}_{t}=\{\tau^{t}_{j}\}^{n^{t}_{g}}_{j=1}$ and trajectory set $\mathcal{I}_{t}=\{\tau^{t}_{i}\}^{n^{t}_{H}}_{i=1}$ according to policy $p(\tau|\pi_{\theta_{t}})$
\State \[
\vv_{t} \gets \begin{cases}
\frac{1}{n^{t}_{g}}\sum_{j=1}^{n^{t}_{g}}\hat{\nabla} J(\theta_{t};\tau^{t}_{j}), &\text{mod}(t,S)=0\\
\frac{1}{n^{t}_{g}}\sum_{j=1}^{n^{t}_{g}}[\hat{\nabla} J(\theta_{t};\tau^{t}_{j})-w(\tau^{t}_{j}|\theta_{t-1},\theta_{t})\hat{\nabla} J(\theta_{t-1};\tau^{t}_{j})]+\vv_{t-1},&\text{else}
\end{cases}
\]
\State \[
\Uv_{t} \gets \begin{cases}
\frac{1}{n^{t}_{H}}\sum_{i=1}^{n^{t}_{H}}\hat{\nabla}^{2} J(\theta_{t};\tau^{t}_{i}),\quad &\text{mod}(t,S)=0\\
\frac{1}{n^{t}_{H}}\sum_{i=1}^{n^{t}_{H}}[\hat{\nabla}^{2} J(\theta_{t};\tau^{t}_{i})-w(\tau^{t}_{i}|\theta_{t-1},\theta_{t})\hat{\nabla}^{2} J(\theta_{t-1};\tau^{t}_{i})]+\Uv_{t-1},\quad&\text{else}
\end{cases}
\]
\State ${\bf\Delta}_{t} \gets \argmin_{{\bf\Delta}\in\mathbb{R}^{d}}\langle \vv_{t}, {\bf\Delta}\rangle+\frac{1}{2}\langle {\bf\Delta}, \Uv_{t}{\bf\Delta}\rangle+\frac{M}{6}\|{\bf\Delta}\|^{3}$
\State $\theta_{t+1}\gets \theta_{t}+{\bf \Delta}_{t}$
\State $t \gets t+1$
\EndWhile
\State\Return{$\theta_{t}$}
\end{algorithmic}
\label{algorithm3}
\end{algorithm}

Assumptions \ref{assump2} with $\alpha=1$ and Assumption \ref{assump:individual_smoothness} are satisfied in the considered RL setting. To see this, $\|\hat{\nabla}^{2}J(\theta,\tau)\|^{2}\le \frac{\mathsf{H}^{2}G_{1}^{4}R^{2}_{\max}+G_{2}^{2}R^{2}_{\max}}{(1-\gamma)^4}$ from \citep[Lemma 4.1]{shen2019hessian}. Then, $\hat{\nabla}J(\theta,\tau)$ is Lipschitz.
Moreover, similar to the arguments in Lemma \ref{lemma12}, one can show that $\|\hat{\nabla}^{2}J(\theta,\tau)-\hat{\nabla}^{2}J(\theta',\tau)\|\le \tilde{L}\|\theta-\theta'\|_{2}$. Hence, we can imply that Lipschitzness of $\hat{\nabla}J(\theta,\tau)$ and $\hat{\nabla}^{2}J(\theta,\tau)$ yield Assumption \ref{assump:individual_smoothness}. Furthermore, from proof of Lemma 3 (Section \ref{proof_Lemma_bound_var_grad_Hss_RL}), the variances of $\hat{\nabla}J(\theta,\tau)$ and $\hat{\nabla}^{2}J(\theta,\tau)$ are bounded.

Now, with 
\[
\mathsf{H}\ge \max\left\{\frac{\log\left(\frac{C_{g}D_{g}}{\epsilon}\right)}{\log(1/\gamma)},\frac{\log\left(\frac{2C_{H}D_{H}}{\epsilon}\right)}{2\log(1/\gamma)}\right\},
\]
Equation \eqref{recursion_ineq_RL} can be rewritten as follows:
\begin{align}\label{eq:1163}
     &J^{*}-\mathbb{E}J(\theta_{t+1})\le C(\mathbb{E}J(\theta_{t+1})-\mathbb{E}J(\theta_{t}))^{2/3}+C_{g}\mathbb{E}[\|\nabla J_{\mathsf{H}}(\theta_{t})-\vv_{t}\|]\nonumber\\&+2C_{H}\mathbb{E}\left[\|\nabla^{2}J_{\mathsf{H}}(\theta_{t})-\Uv_{t}\|^{2}\right]+2\epsilon.
\end{align}
\begin{lemma}[Lemma 6.1 in \cite{xu2020improved}]
For a policy $\pi_{\theta}$ satisfying Assumptions \ref{LS} and \ref{bounded_var_IS}, and for any $\theta,\theta'\in \mathbb{R}^d$, we have
\begin{align}
    \var(w(\tau|\theta,\theta'))\le C_{w}\|\theta-\theta'\|^{2},
\end{align}
where $C_{w}=\mathsf{H}(2\mathsf{H}G_{1}^{2}+G_{2})(W+1)$.
\label{lemma:14}
\end{lemma}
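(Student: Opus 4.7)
The plan is to view $V(\theta) := \var_{\tau \sim \pi_{\theta'}}(w(\tau|\theta,\theta')) = \mathbb{E}_{\tau \sim \pi_{\theta'}}[w^2(\tau|\theta,\theta')] - 1$ as a scalar function of $\theta$ with $\theta'$ fixed, and to apply a second-order Taylor expansion around $\theta = \theta'$. Two vanishing conditions will drive the argument: first, $w(\tau|\theta',\theta') \equiv 1$ gives $V(\theta') = 0$; second, using $\nabla_\theta w(\tau|\theta,\theta') = w(\tau|\theta,\theta')\,\nabla_\theta \log p(\tau|\pi_\theta)$, one computes $\nabla_\theta V(\theta)|_{\theta=\theta'} = 2\,\mathbb{E}_{\tau \sim \pi_{\theta'}}[\nabla_{\theta'} \log p(\tau|\pi_{\theta'})] = 0$ by the standard score-function identity $\int \nabla_{\theta'} p(\tau|\pi_{\theta'})\,d\tau = 0$. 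Taylor's theorem with Lagrange remainder then gives $V(\theta) = \tfrac{1}{2}(\theta-\theta')^\top \nabla^2_\theta V(\tilde\theta)(\theta-\theta')$ for some $\tilde\theta$ on the segment between $\theta$ and $\theta'$, so it remains to bound $\|\nabla^2_\theta V(\tilde\theta)\|$ uniformly.

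Next I would compute the Hessian explicitly. A short differentiation using $\nabla_\theta w = w \nabla_\theta \log p(\tau|\pi_\theta)$ yields $\nabla_\theta^2 w^2 = 4 w^2 (\nabla_\theta \log p)(\nabla_\theta \log p)^\top + 2 w^2 \nabla^2_\theta \log p$, and therefore
\[ \nabla^2_\theta V(\theta) = 4\,\mathbb{E}_{\tau \sim \pi_{\theta'}}\bigl[w^2 (\nabla_\theta \log p)(\nabla_\theta \log p)^\top\bigr] + 2\,\mathbb{E}_{\tau \sim \pi_{\theta'}}\bigl[w^2 \nabla^2_\theta \log p\bigr]. \]
Since $\nabla_\theta \log p(\tau|\pi_\theta) = \sum_{h=0}^{\mathsf{H}-1} \nabla_\theta \log \pi_\theta(a_h|s_h)$ (the transition and initial-state terms are $\theta$-independent), Assumption \ref{LS} yields $\|\nabla_\theta \log p(\tau|\pi_\theta)\| \le \mathsf{H} G_1$ and $\|\nabla^2_\theta \log p(\tau|\pi_\theta)\| \le \mathsf{H} G_2$. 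Hence $\|\nabla^2_\theta V(\theta)\| \le (4 \mathsf{H}^2 G_1^2 + 2 \mathsf{H} G_2)\,\mathbb{E}_{\tau \sim \pi_{\theta'}}[w^2(\tau|\theta,\theta')]$.

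The final ingredient is the bound on $\mathbb{E}_{\tau \sim \pi_{\theta'}}[w^2(\tau|\theta,\theta')]$. Since $\mathbb{E}_{\tau \sim \pi_{\theta'}}[w(\tau|\theta,\theta')] = 1$ by normalization, this quantity equals $\var(w) + 1$; invoking Assumption \ref{bounded_var_IS} then gives $\mathbb{E}[w^2] \le W+1$. Combining, $\|\nabla^2_\theta V(\theta)\| \le 2\mathsf{H}(2\mathsf{H} G_1^2 + G_2)(W+1) = 2 C_w$ uniformly in $\theta$, and substituting into the Taylor formula yields $V(\theta) \le C_w \|\theta - \theta'\|^2$, which is the claim.

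The delicate point I anticipate is matching the direction of importance sampling in Assumption \ref{bounded_var_IS} to what the Taylor argument requires: the assumption is phrased with $\tau \sim p(\cdot|\pi_{\theta'})$ and weight $w(\tau|\theta',\theta)$, whereas our Hessian bound naturally produces $\mathbb{E}_{\tau \sim \pi_{\theta'}}[w^2(\tau|\theta,\theta')]$. These two expressions are consistent up to a relabeling of the pair $(\theta,\theta')$, provided the bound $W$ applies uniformly to all intermediate $\tilde\theta$ on the segment rather than only at algorithm iterates. One must therefore either extend Assumption \ref{bounded_var_IS} to all $(\theta,\theta') \in \mathbb{R}^d$ (as implicitly done in \citep{xu2020improved}) or appeal to continuity in $\tilde\theta$ to cover the full segment.
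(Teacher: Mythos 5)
Your proof is correct and is essentially the argument behind the cited result: the paper itself does not prove this lemma but imports it verbatim as Lemma 6.1 of \cite{xu2020improved}, whose proof is exactly your second-order Taylor expansion of $\mathbb{E}_{\tau\sim p(\cdot|\pi_{\theta'})}[w^{2}(\tau|\theta,\theta')]-1$ around $\theta=\theta'$, with the score-function identity killing the first-order term and the bounds $\|\nabla_\theta\log p\|\le \mathsf{H}G_1$, $\|\nabla^2_\theta\log p\|\le \mathsf{H}G_2$, $\mathbb{E}[w^2]\le W+1$ controlling the Hessian. The caveat you flag --- that Assumption \ref{bounded_var_IS} must be read as holding uniformly (including at the intermediate point $\tilde\theta$) and with the weight direction matching $\mathbb{E}_{\tau\sim\pi_{\theta'}}[w^{2}(\tau|\theta,\theta')]$ --- is a genuine looseness, but it is present in the source reference as well and is not a gap specific to your argument.
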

In order to prove Theorem \ref{th_vr_SCRN_Pl_alpha=1}, we must show that with the following number of samples at checkpoints:
\begin{equation}
    n_g^t=\Theta\left( \frac{1}{\epsilon^2}\right), n_H^t=\Theta\left( \frac{(\log d)^{2}}{\epsilon}\right),
\end{equation}
and the following number of samples at the other iterations:
\begin{equation}
    n_g^t(\theta_{t},\theta_{t-1})=\Theta \left(\frac{S \|\mathbf{\Delta}_{t-1}\|^2}{\epsilon^2}\right), n_H^t(\theta_{t},\theta_{t-1})=\Theta\left(\frac{ (\log d)^{2}\cdot S \|\mathbf{\Delta}_{t-1}\|^2}{\epsilon}\right),
\end{equation}
we have:
\begin{equation}
    \mathbb{E}[\|\nabla J_{\mathsf{H}}(\theta_{t})-\vv_t\|]\leq \epsilon, \qquad \mathbb{E}[\|\nabla^2 J_{\mathsf{H}}(\theta_{t})-\Uv_t\|^2]\leq \epsilon.
    \label{eq:VR_SCRN_RL_eps}
\end{equation}
We use the same argument in the proof of Lemma \ref{lemma:vr} and we borrow the notations in Lemma \ref{lemma:vr}. Let us define $\uv_{k}$ and $\Vv_{k}$. For the gradient estimator $\vv_{t}$, we have $\vv_{t}-\nabla J_{\mathsf{H}}(\theta_{t})=\sum_{k=\lfloor t/S\rfloor S}^{t}\uv_{k}$ such that:
\[
\uv_{k}=\begin{cases}
\frac{1}{n^{k}_{g}}\sum_{j=1}^{n^{k}_{g}}\hat{\nabla} J(\theta_{k};\tau^{k}_{j})-\nabla J_{\mathsf{H}}(\theta_{k}),\quad &k=\lfloor t/S\rfloor S,\\
\frac{1}{n^{k}_{g}}\sum_{j=1}^{n^{k}_{g}}[\hat{\nabla} J(\theta_{k};\tau^{k}_{j})-w(\tau^{k}_{j}|\theta_{k-1},\theta_{k})\hat{\nabla} J(\theta_{k-1};\tau^{k}_{j})]-\nabla J_{\mathsf{H}}(\theta_{k})+\nabla J_{\mathsf{H}}(\theta_{k-1}),\quad& k>\lfloor t/S\rfloor S,
\end{cases}
\]
and for the Hessian estimator $\Uv_{t}$, we have $\Uv_{t}-\nabla^{2} J_{\mathsf{H}}(\theta_{t})=\sum_{k=\lfloor t/S\rfloor S}^{t}\Vv_{k}$ such that
\[
\Vv_{k}=\begin{cases}
\frac{1}{n^{k}_{H}}\sum_{i=1}^{n^{k}_{H}}\hat{\nabla}^{2} J(\theta_{k};\tau^{k}_{i})-\nabla^{2} J_{\mathsf{H}}(\theta_{k}),\quad &k=\lfloor t/S\rfloor S,\\
\frac{1}{n^{k}_{H}}\sum_{i=1}^{n^{k}_{H}}[\hat{\nabla}^{2} J(\theta_{k};\tau^{k}_{i})-w(\tau^{k}_{i}|\theta_{k-1},\theta_{k})\hat{\nabla}^{2} J(\theta_{k-1};\tau^{k}_{i})]-\nabla^{2} J_{\mathsf{H}}(\theta_{k})+\nabla^{2} J_{\mathsf{H}}(\theta_{k-1}),\quad& k>\lfloor t/S\rfloor S.
\end{cases}
\]
To prove that \eqref{eq:VR_SCRN_RL_eps}, as the importance sampling terms appears in $\uv_{k}$ and $\Vv_{k}$ for $k>\lfloor t/S\rfloor S$, we must only modify the proof of $\mathbb{E}[\|\uv_{k}\|^{2}|\mathcal{G}_{k}]\le \frac{C_{1}}{n^{k}_{g}}\|\theta_{k}-\theta_{k-1}\|^{2}$ and $\mathbb{E}[\|\Vv_{k}\|^{2}|\mathcal{H}_{k}]\le \frac{C_{2}}{n^{k}_{g}}\|\theta_{k}-\theta_{k-1}\|^{2}$ for $k>\lfloor t/S\rfloor S$. We prove them as follows:
\begin{align}
        &\mathbb{E}[\|\uv_{k}\|^{2}|\mathcal{G}_{k}]=\mathbb{E}\left\|\frac{1}{n^{k}_{g}}\sum_{j=1}^{n^{k}_{g}}[\hat{\nabla} J(\theta_{k};\tau^{k}_{j})-w(\tau^{k}_{j}|\theta_{k-1},\theta_{k})\hat{\nabla} J(\theta_{k-1};\tau^{k}_{j})]-\nabla J_{\mathsf{H}}(\theta_{k})+\nabla J_{\mathsf{H}}(\theta_{k-1})\right\|^{2}\nonumber\\
    &\overset{(a)}{=}\frac{1}{n^{k}_{g}}\mathbb{E}\left\|\hat{\nabla} J(\theta_{k};\tau^{k}_{1})-w(\tau^{k}_{1}|\theta_{k-1},\theta_{k})\hat{\nabla} J(\theta_{k-1};\tau^{k}_{1})-\nabla J_{\mathsf{H}}(\theta_{k})+\nabla J_{\mathsf{H}}(\theta_{k-1})\right\|^{2}\nonumber\\
    &\overset{(b)}{\le} \frac{2}{n^{k}_{g}}\mathbb{E}\|\hat{\nabla} J(\theta_{k};\tau^{k}_{1})-w(\tau^{k}_{1}|\theta_{k-1},\theta_{k})\hat{\nabla} J(\theta_{k-1};\tau^{k}_{1})\|^{2}+\frac{2}{n^{k}_{g}}\mathbb{E}\|\nabla J_{\mathsf{H}}(\theta_{k})-\nabla J_{\mathsf{H}}(\theta_{k-1})\|^{2}\nonumber\\
    &\overset{(c)}{\le} \frac{4}{n^{k}_{g}}\mathbb{E}\|\hat{\nabla} J(\theta_{k};\tau^{k}_{1})-\hat{\nabla} J(\theta_{k-1};\tau^{k}_{1})\|^{2}+\frac{4}{n^{k}_{g}}\mathbb{E}[(w(\tau^{k}_{1}|\theta_{k-1},\theta_{k})-1)^{2}\|\hat{\nabla}J(\theta_{k-1};\tau^{k}_{1})\|^{2}]\nonumber\\
    &+\frac{2}{n^{k}_{g}}\mathbb{E}\|\nabla J_{\mathsf{H}}(\theta_{k})-\nabla J_{\mathsf{H}}(\theta_{k-1})\|^{2}\nonumber\\
    &\overset{(d)}{\le} \frac{4}{n^{k}_{g}}\mathbb{E}\|\hat{\nabla} J(\theta_{k};\tau^{k}_{1})-\hat{\nabla} J(\theta_{k-1};\tau^{k}_{1})\|^{2}+\frac{4G_{1}^{2}}{n^{k}_{g}}\mathbb{E}[(w(\tau^{k}_{1}|\theta_{k-1},\theta_{k})-1)^{2}]+\frac{2}{n^{k}_{g}}\mathbb{E}\|\nabla J_{\mathsf{H}}(\theta_{k})-\nabla J_{\mathsf{H}}(\theta_{k-1})\|^{2}\nonumber\\
    &\overset{(e)}{\le} \frac{6L'^{2}_{1}+4G^{2}_{1}C_{w}}{n^{k}_{g}}\|\theta_{k}-\theta_{k-1}\|^{2}
\end{align}
where (a) comes from $\hat{\nabla} J(\theta_{k};\tau^{k}_{j})-w(\tau^{k}_{j}|\theta_{k-1},\theta_{k})\hat{\nabla} J(\theta_{k-1};\tau^{k}_{j})-\nabla J_{\mathsf{H}}(\theta_{k})+\nabla J_{\mathsf{H}}(\theta_{k-1})$'s are i.i.d conditioned on $\mathcal{G}_k$ for $1\le i\le n^{k}_{g}$. Inequalities (b) and (c) are from $(a+b)^{2}\le 2a^{2}+2b^{2}$. (d) is due to Assumption \ref{LS}. (e) is derived by Assumption \ref{assump:individual_smoothness} and Lemma \ref{lemma:14}.

Similarly, for Hessian estimator we have

\begin{align}
    &\mathbb{E}[\|\Vv_{k}\|^{2}|\mathcal{H}_{k}]=\mathbb{E}\left\|\frac{1}{n^{k}_{H}}\sum_{i=1}^{n^{k}_{H}} [\hat{\nabla}^{2}J(\theta_{k},\tau_{i})-w(\tau^{k}_{i}|\theta_{k-1},\theta_{k})\hat{\nabla}^{2}J(\theta_{k-1},\tau_{i})]-\nabla^{2} J_{\mathsf{H}}(\theta_{k})+\nabla^{2} J_{\mathsf{H}}(\theta_{k-1})\right\|^{2}\nonumber\\
    &\overset{(a)}{\le}8e\log d\cdot\mathbb{E}\left\|\frac{1}{(n^{k}_{H})^{2}}\sum_{i=1}^{n^{k}_{H}} \left[[\hat{\nabla}^{2}J(\theta_{k},\tau_{i})-w(\tau^{k}_{i}|\theta_{k-1},\theta_{k})\hat{\nabla}^{2}J(\theta_{k-1},\tau_{i})]-\nabla^{2} J_{\mathsf{H}}(\theta_{k})+\nabla^{2} J_{\mathsf{H}}(\theta_{k-1})\right]^{2}\right\|\nonumber\\
    &\overset{(b)}{\le}\frac{8e\log d}{n^{k}_{H}}\cdot \frac{1}{n^{k}_{H}}\sum_{i=1}^{n^{k}_{H}}\mathbb{E}\left\| [\hat{\nabla}^{2}J(\theta_{k},\tau_{i})-w(\tau^{k}_{i}|\theta_{k-1},\theta_{k})\hat{\nabla}^{2}J(\theta_{k-1},\tau_{i})-\nabla^{2} J_{\mathsf{H}}(\theta_{k})+\nabla^{2} J_{\mathsf{H}}(\theta_{k-1})]^{2}\right\|\nonumber\\
    &\overset{(c)}{\le}\frac{8e\log d}{n^{k}_{H}}\mathbb{E}\left\| \hat{\nabla}^{2}J(\theta_{k},\tau_{1})-w(\tau^{k}_{1}|\theta_{k-1},\theta_{k})\hat{\nabla}^{2}J(\theta_{k-1},\tau_{1})-\nabla^{2} J_{\mathsf{H}}(\theta_{k})+\nabla^{2} J_{\mathsf{H}}(\theta_{k-1})\right\|^{2}\nonumber\\
    &\overset{(d)}{\le} \frac{16e\log d}{n^{k}_{H}}\mathbb{E}\|\hat{\nabla}^{2}J(\theta_{k},\tau_{1})-w(\tau^{k}_{1}|\theta_{k-1},\theta_{k})\hat{\nabla}^{2}J(\theta_{k-1},\tau_{1})\|^{2}+\frac{16e\log d}{n^{k}_{H}}\|\nabla^{2} J_{\mathsf{H}}(\theta_{k})-\nabla^{2} J_{\mathsf{H}}(\theta_{k-1})\|^{2}\nonumber\\
    &\overset{(e)}{\le} \frac{32e\log d}{n^{k}_{H}}\mathbb{E}\|\hat{\nabla}^{2}J(\theta_{k},\tau_{1})-\hat{\nabla}^{2}J(\theta_{k-1},\tau_{1})\|^{2}+\frac{32e\log d}{n^{k}_{H}}\mathbb{E}\left[(1-w(\tau^{k}_{1}|\theta_{k-1},\theta_{k}))^{2}\|\hat{\nabla}^{2}J(\theta_{k-1},\tau_{1})\|^{2}\right]\nonumber\\
    &+\frac{16e\log d}{n^{k}_{H}}\|\nabla^{2} J_{\mathsf{H}}(\theta_{k})-\nabla^{2} J_{\mathsf{H}}(\theta_{k-1})\|^{2}\nonumber\\
    &\overset{(f)}{\le} \frac{32e\log d}{n^{k}_{H}}\mathbb{E}\|\hat{\nabla}^{2}J(\theta_{k},\tau_{1})-\hat{\nabla}^{2}J(\theta_{k-1},\tau_{1})\|^{2}+\frac{32e\cdot G^{2}_{2}\log d}{n^{k}_{H}}\mathbb{E}\left[(1-w(\tau^{k}_{1}|\theta_{k-1},\theta_{k}))^{2}\right]\nonumber\\
    &+\frac{16e\log d}{n^{k}_{H}}\|\nabla^{2} J_{\mathsf{H}}(\theta_{k})-\nabla^{2} J_{\mathsf{H}}(\theta_{k-1})\|^{2}\nonumber\\
    &\overset{(g)}{\le} \frac{32e\log d(2L'^{2}_{2}+C_{w}G^{2}_{2})}{n^{k}_{H}}\|\theta_{k}-\theta_{k-1}\|^{2},
\end{align}
where (a) comes from Lemma \ref{lemma_norm_op_summation_of_iid_matrix} and the fact that $[\hat{\nabla}^{2}J(\theta_{k},\tau_{i})-w(\tau^{k}_{i}|\theta_{k-1},\theta_{k})\hat{\nabla}^{2}J(\theta_{k-1},\tau_{i})-\nabla^{2} J_{\mathsf{H}}(\theta_{k})+\nabla^{2} J_{\mathsf{H}}(\theta_{k-1})]$'s are i.i.d conditioned on $\mathcal{H}_{k}$ for $1\le i\le n^{k}_{H}$. (b) comes from Jensen's inequality for operator norm $\|\cdot\|$. (c) is derived by $\|AB\|\le \|A\|\|B\|$ and the fact that $[\hat{\nabla}^{2}J(\theta_{k},\tau_{i})-w(\tau^{k}_{i}|\theta_{k-1},\theta_{k})\hat{\nabla}^{2}J(\theta_{k-1},\tau_{i})-\nabla^{2} J_{\mathsf{H}}(\theta_{k})+\nabla^{2} J_{\mathsf{H}}(\theta_{k-1})]$'s are i.i.d conditioned on $\mathcal{H}_{k}$ for $1\le i\le n^{k}_{H}$. Inequalities (d) and (e) are from inequality $(a+b)^{2}\le 2a^{2}+2b^{2}$ and (f) is derived by Assumption \ref{LS}. (g) comes from Assumption \ref{assump:individual_smoothness} and Lemma \ref{lemma:14}. Similar to the proof of Lemma \ref{lemma:vr}, we can argue that for Algorithm \ref{algorithm3}, \eqref{eq:VR_SCRN_RL_eps} holds.

Now, based on what we proved above, with the same arguments in Theorem \ref{th_vr_SCRN_Pl_alpha=1}, Algorithm \ref{algorithm3} achieves $\epsilon$-global stationary point by querying ${\mathcal{O}}(\epsilon^{-2})$ stochastic gradients and ${\mathcal{O}}(\epsilon^{-1})$ stochastic Hessian on average. More precisely, the following average number of queries of stochastic gradient and Hessian are required in order to achieve $\epsilon$-global stationary point:
\[
 2\mathsf{H}\times \sum_{i=1}^{T}n_{g}^{t}= \mathcal{O}\left(\frac{\sigma_{1}^{2}\mathsf{H}\left[4^{2}(C_{g}+C_{H})C^{3}+C_{g}+C_{H}\right]^{2}}{\epsilon^{2}}\right)=\mathcal{O}\left(\frac{(G_{1}^{2+\frac{8}{3}}G_{2}^{\frac{8}{3}}R_{\max}^{2+\frac{8}{3}}}{(1-\gamma)^{4+\frac{16}{3}}\epsilon^{2}}\right).
\]
where (a) comes from $\sigma_{1}^{2}=\frac{\mathsf{H}G_{1}^2R_{\max}^2}{(1-\gamma)^2}$ Lemma \ref{Lemma_bound_var_grad_Hss_RL} and $C=\mathcal{O}(M^{1/3})$, $C_{g}=\mathcal{O}(M^{1/3})$, and $C_{H}=\mathcal{O}(M^{1/3})$ and the fact that $M=\mathcal{O}(\tilde{L})$ and from Lemma \ref{lemma12}, $\tilde{L}=\mathcal{O}\left(\frac{R_{\max}G_{1}G_{2}}{(1-\gamma)^{2}}\right)$.


\subsection{Further experimental results}
\label{app:further resutls}
\begin{figure}
    \centering
    \includegraphics[width=0.8\textwidth]{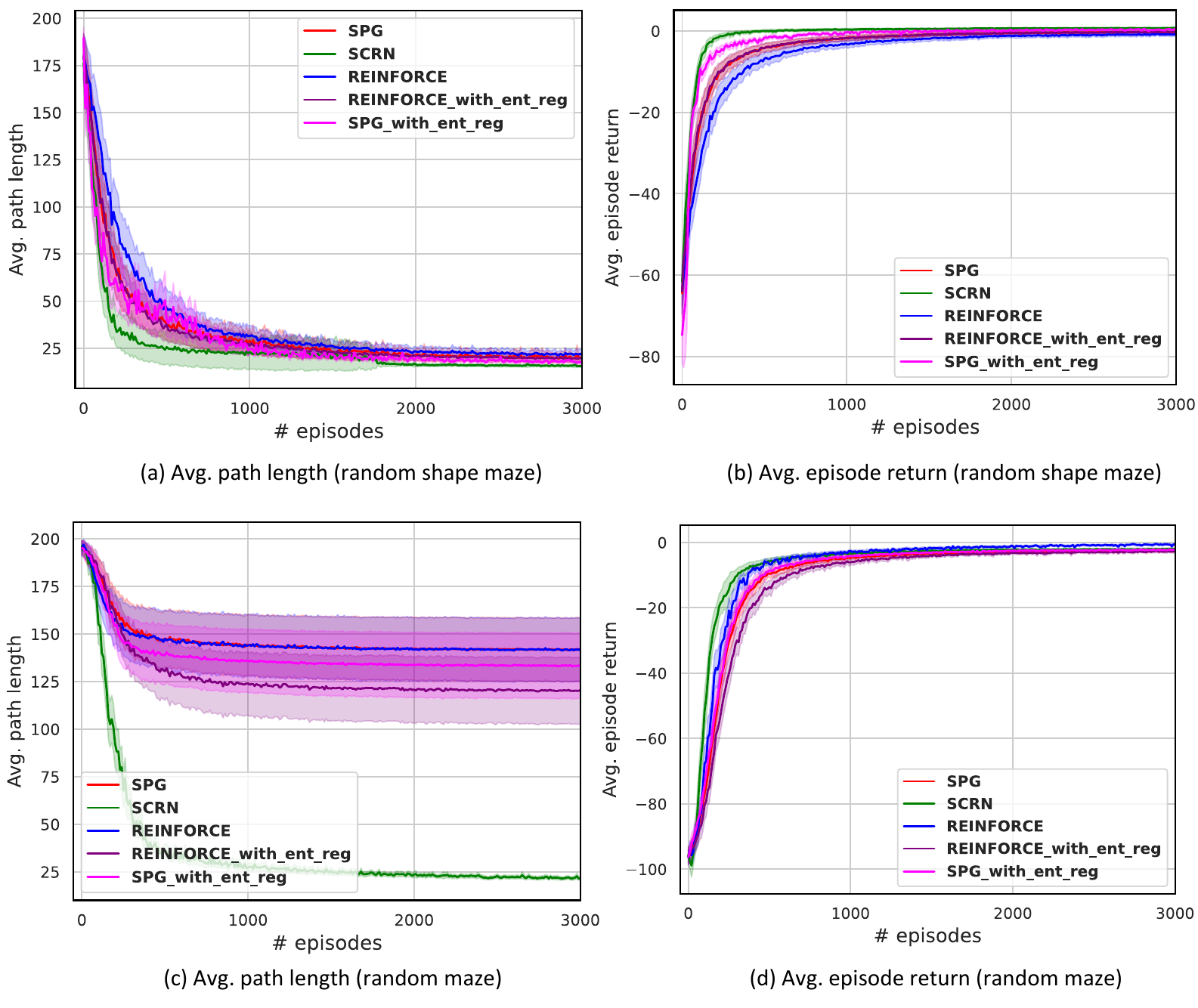}
    \caption{Comparison of SCRN with first-order methods in maze environments. In random shape maze, the percentages of successful instances for SPG, SCRN, REINFORCE, SPG with entropy regularization, and REINFORCE with entropy regularization are $86\%$, $100\%$, $95.3\%$, $92.2\%$, and $95.3\%$, respectively. In random maze, the percentages of successful instances for SPG, SCRN, REINFORCE, SPG with entropy regularization, and REINFORCE with entropy regularization are $45.3\%$, $97\%$, $40.6\%$, $54.7\%$, and $54.7\%$, respectively.}
    \label{fig:maze_app}
\end{figure}
\textbf{Discussion on results:} In Fig. \ref{fig:maze_app} (a), in the random shape maze environment, SCRN finds paths with average length of $25$ after about $600$ episodes while SPG and REINFORCE can only achieve the average path lengths of greater than $25$ after $3000$ episodes. In the random maze environment, SCRN again outperforms the other two algorithms and achieves the average path length of $25$ after about $2000$ episodes. However, for SPG and REINFROCE, the average path lengths are more than $100$ which also indicates the low percentages of successful instances for these two algorithms in this environment. Furthermore, as shown in Fig. \ref{fig:maze_app} (c,d),  in both random shape maze and random maze environments, SCRN has better performance than SPG and REINFORCE in terms of average episode return. We also provide the results for the random shape maze and random maze environments where we add entropy regularization term to the objective functions of SPG and REINFORCE. As can be seen, the results with entropy regularization term are slightly better both in the average path length and average episode return.

\begin{figure*}[t]
\vspace{-1.5mm}
    \centering
    \includegraphics[width=1\textwidth]{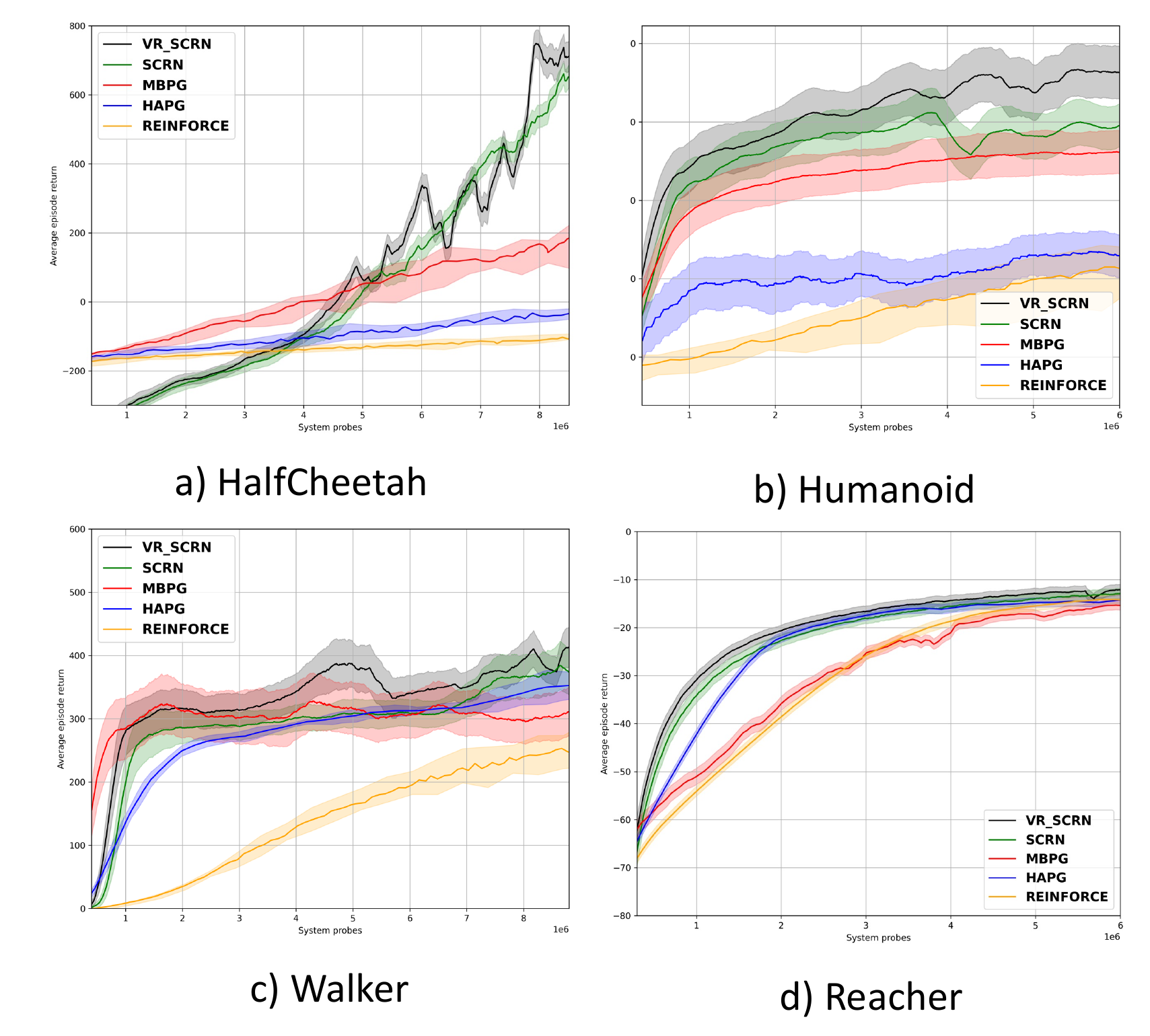}
    \caption{Comparison of SCRN and its variance-reduced version with REINFORCE and variance-reduced SPG methods in MuJoCo environments.}
    \label{fig7}
    \vspace{-2mm}
\end{figure*}

In Fig. \ref{fig7}, for control tasks in MuJoCo simulator, we compared SCRN and its variance-reduced version with first-order methods such as REINFORCE, and two state-of-the-art representatives of variance-reduced PG methods, HAPG \cite{shen2019hessian} and MBPG \cite{huang2020momentum}, both with guaranteed convergence to $\epsilon$-FOSP in general non-convex settings. In HalfCheetah environment, SCRN and variance-reduced SCRN outperform the other methods by achieving a score of around $600$. In the Humanoid environment, variance-reduced SCRN achieves the best performance among other algorithms. In Walker environment, the performances of SCRN, its variance-reduced version, and HAPG perform better than MBPG and REINFORCE. In Reacher environment, the variance-reduced SCRN has the best performance among the considered algorithms.
\begin{figure*}[t]
\vspace{-1.5mm}
    \centering
    \includegraphics[width=1\textwidth]{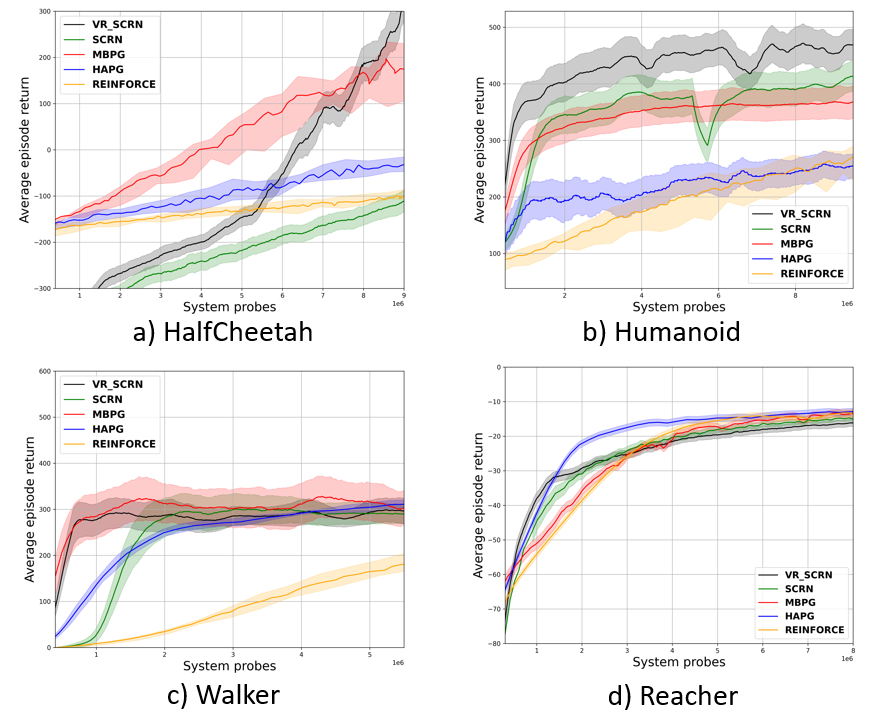}
    \caption{Comparison of SCRN and its variance-reduced version (without using the trick in updates) with REINFORCE and variance-reduced SPG methods in MuJoCo environments.}
    \label{fig8}
    \vspace{-2mm}
\end{figure*}
\subsection{Implementation details}
\textbf{Environments:}
\begin{itemize}
    \item Environments with finite state and action spaces:
We considered two grid world environments in our experiments: \citep[Example 6.6]{sutton2018reinforcement}, and random mazes \cite{mazelab}. In cliff walking, the agent's aim is to reach a goal state from a start state, avoiding a region of cells called ``cliff''. The episode is terminated if the agent enters the cliff region or the number of steps exceeds $100$ without reaching the goal. The reward is $-0.1$ in all transitions except those into the cliff where the reward is $-100$. 
The reward of reaching the goal is $100$. Moreover, we considered a softmax tabular policy for all the experiments of this part. 

In random mazes, the size of each maze is $10\times 10$.
In the random shape maze, random shape blocks are placed on a grid and the agent tries to reach the goal state finding the shortest path, avoiding blocks. 
  The reward is $-0.1$ in all transitions except if the agent tries to go to a cell which belongs to a block where the reward is $-1$. Moreover, the reward of reaching the goal is $1$. An episode is terminated if the agent could not reach the goal after $200$ steps.
 \item Environments with continuous state and action spaces:
We considered the following control tasks in MuJoCo simulator \cite{todorov2012mujoco}: Walker, Humanoid, Reacher, and HalfCheeta. We compared SCRN with first-order methods such as REINFORCE, and two state-of-the-art representatives of variance-reduced PG methods, HAPG \cite{shen2019hessian} and MBPG \cite{huang2020momentum}. For each task, we utilized a Gaussian multi-layer perceptron (MLP) policy whose mean and variance are parameterized by an MLP with two hidden layers of 64 neurons. For a fair comparison, we considered the same network architecture for all the methods. 
\end{itemize}

\textbf{Algorithms:}

For the environments with finite state and action spaces, we provided an implementation of SCRN, SPG, and REINFORCE with NumPy where the gradient and Hessian (just for SCRN) of a given trajectory are computed based on their closed forms. We also implemented SCRN with PyTorch in Garage library in order to execute it on environments with continuous state and action spaces. 
In our experiments, we used a Linux server with Intel Xeon Gold 6240 CPU (36 cores) operating at 2.60GHz with 377 GB DDR4 of memory, and Nvidia Titan X GPU.

Regarding SPG and REINFORCE in grid world environments, we adapted a time-varying learning rate by checking various forms and the form of $a/(\lfloor t/P\rfloor +b)$ provided the best performance for the first-order methods where $a,b,$ and $P$ are some constants that are needed to be tuned. The parameter $P$ shows the number of episodes that the learning rate remains unchanged.

\textbf{A trick in updates of SCRN and its variance-reduced version:}
Experiments in Figures \ref{fig:MuJoCo} and \ref{fig7} are conducted using the following trick to prevent updating based on a large $\|{\bf\Delta}_{t}\|$. If $\|{\bf\Delta}_{t}\|$ precedes a threshold, we neglect that whole iteration and do not update $\xv_{t}$. Please note that we take the number of observed state-actions of that neglected iteration into account in computing system probes.

 For sake of completeness, in Figure \ref{fig8}, we also provided the performances of SCRN and its variance-reduced version without using the trick described above, and compared with other variance-reduced methods.
 
\textbf{Demonstrations:}

We studied how the parameters of the softmax tabular policy are evolving over time in cliff walking environment. For each cell, we considered an arrow with four directions: up, down, left, and right. At any time, the direction of each arrow shows which one of the four directions has the highest probability of being taken by the agent in the corresponding cell. The color of the arrow becomes darker as this probability increases. For each algorithm, we run $100$ episodes and each episode, we demonstrated how the parameters are being updated. For SPG and REINFORCE, for a long period of time, at the start state, the agent takes the ``up'' action in order to avoid falling off the cliff. In fact, it takes many episodes until the parameters of the start state are updated such that the agent tries to find a path to the goal. In contrast, SCRN finds a path to the goal after few episodes and then tries to improve the path length.

\end{document}